\newtheorem{theorem}{Theorem}
\newtheorem{lemma}{Lemma}
\newtheorem{remark}{Remark}
\newtheorem{proof}{Proof}
\algrenewcommand\algorithmicensure{\textbf{Result:}}
\def\eqref#1{equation~\ref{#1}}
\def\1{\bm{1}}
\DeclareMathAlphabet{\mathsfit}{\encodingdefault}{\sfdefault}{m}{sl}
\SetMathAlphabet{\mathsfit}{bold}{\encodingdefault}{\sfdefault}{bx}{n}
\DeclareMathOperator*{\argmax}{arg\,max}
\Crefname{ALC@unique}{Line}{Lines}
\colorlet{texcscolor}{blue!50!black}
\colorlet{texemcolor}{red!70!black}
\colorlet{texpreamble}{red!70!black}
\colorlet{codebackground}{black!25!white!25}
\renewcommand{\appendix}{\par
  \setcounter{section}{0}
  \setcounter{subsection}{0}
  \gdef\thesection{SM\arabic{section}}
}
\lstdefinestyle{siamlatex}{%
  style=tcblatex,
  texcsstyle=*\color{texcscolor},
  texcsstyle=[2]\color{texemcolor},
  keywordstyle=[2]\color{texemcolor},
  moretexcs={cref,Cref,maketitle,mathcal,text,headers,email,url},
}
\DeclareTotalTCBox{\code}{ v O{} }
{ 
  fontupper=\ttfamily\color{black},
  nobeforeafter,
  tcbox raise base,
  colback=codebackground,colframe=white,
  top=0pt,bottom=0pt,left=0mm,right=0mm,
  leftrule=0pt,rightrule=0pt,toprule=0mm,bottomrule=0mm,
  boxsep=0.5mm,
  #2}{#1}
\patchcmd\newpage{\vfil}{}{}{}
\title{Lipschitz-regularized gradient flows and generative
particle algorithms for high-dimensional scarce data\thanks{
This research  was partially supported by the Air Force Office of Scientific Research (AFOSR) under the grant FA9550-21-1-0354. The research of H. G., M. K. and L.R.-B. was partially supported by the National Science Foundation (NSF) under the grants DMS-2008970, DMS-2307115 and TRIPODS CISE-1934846. The research of P. B.  was partially supported by the National Science Foundation (NSF) under the grant DMS-2008970. 
The work of Y.P. was partially supported by the Hellenic Foundation for Research and Innovation (HFRI) 
under Project 4753.
}}
\author{Hyemin Gu\thanks{Department of Mathematics \& Statistics, University of Massachusetts Amherst (\texttt{hgu, luc, markos@umass.edu}).}
\and Panagiota Birmpa\thanks{ Department of Actuarial Mathematics \& Statistics, Heriot-Watt University (\texttt{P.Birmpa@hw.ac.uk}).}
\and Yannis Pantazis\thanks{ Institute of Applied \& Computational Mathematics, FORTH, Greece (\texttt{pantazis@iacm.forth.gr}).}
\and Luc Rey-Bellet\footnotemark[2]
\and Markos A. Katsoulakis\footnotemark[2]
}
\begin{document}
\maketitle

\begin{abstract}
We have developed a new class of generative algorithms capable of efficiently learning arbitrary target distributions from possibly scarce, high-dimensional data and subsequently generating new samples. These particle-based generative algorithms are constructed as gradient flows of Lipschitz-regularized Kullback-Leibler or other $f$-divergences. In this framework, data from a source distribution can be stably transported as particles towards the vicinity of the target distribution. 
As a notable result in data integration, we demonstrate that the proposed algorithms accurately transport gene expression data points with dimensions exceeding 54K, even though the sample size is typically only in the hundreds.

\end{abstract}

\section{Introduction and main results}
\label{introduction}

We construct new algorithms that are capable of efficiently transporting samples from a source   distribution to a target data set. The transportation mechanism  is built as the gradient flow (in probability space) for Lipschitz-regularized divergences, \cite{dupuis2022formulation,birrell2022f,birrell2022structure}.  Samples are viewed as particles and are transported along the gradient of the discriminator of the divergence towards the target data set. Lipschitz regularized $f$-divergences interpolate between the Wasserstein metric and $f$-divergences and   provide a flexible family of loss functions to compare non-absolutely continuous probability measures. In machine learning one needs to build algorithms to handle target 
distributions $Q$ which are singular, either by their intrinsic nature such as  probability densities concentrated on low dimensional structures and/or because $Q$ is usually only  known through     $N$ samples.
The Lipschitz regularization also 
provides numerically stable, mesh free, particle algorithms that can act as a generative model for high-dimensional target distributions.
The proposed generative approach is validated on a wide variety of datasets and applications ranging from heavy-tailed distributions and  image generation to gene expression data integration, including problems in very high dimensions and with scarce target data.
In this introduction we provide an outline of our main results, background material and related  prior work.

\paragraph{Generative modeling} In generative modeling, which is a form of unsupervised learning, a  data set  $(X^{(i)})_{i=1}^N$  from an unknown ``target" distribution $Q$ is given and the goal is to construct an approximating  model in the form of a distribution $P\approx Q$ which is easy to simulate, with the goal to generate  additional, inexpensive,  approximate samples from the distribution $Q$. Succinctly, the goal of generative modeling is to learn the target distribution $Q$ from input data  $(X^{(i)})_{i=1}^N$.
This is partly  in contrast to sampling, where typically  $Q$ is known up to normalization.
In the last 10 years, generative modeling has been revolutionized by new innovative algorithms taking advantage of neural networks (NNs) and more generally  deep learning.  On one hand NNs provide enormous flexibility to parametrize functions and probabilities and on the other, lead to efficient optimization algorithms in function spaces. Generative adversarial networks (GANs) \cite{goodfellow2014generative,Wasserstein:GAN}, for example, are able to generate complex 
distributions and are quickly becoming a standard tool  in image analysis, medical data, cosmology, computational chemistry, materials science and so on. 
Many other  algorithms have been proposed since, such as normalizing flows  \cite{noe-kohler20a, NEURALODE}, diffusion models
\cite{sohldickstein2015deep,ho2020denoising}, score-based generative flows \cite{song2020generative, Song2021ScoreBasedGM}, 
variational autoencoders \cite{VAE} and energy-based methods \cite{lecun-06}.  

\paragraph{Information theory, divergences  and  optimal transport} 
Divergences such as Kullback-Leibler (KL) and $f$-divergences, and probability metrics 
such as Wasserstein,  provide a notion of `distance' between  probability distributions, thus allowing for comparison of models with one another and with data. Divergences and metrics are used in many theoretical and practical problems in mathematics, engineering, and the natural sciences, ranging from statistical physics, large deviations theory, uncertainty quantification, partial differential equations (PDE)  and statistics to   information theory, communication theory, and machine learning. 
In particular, in the context of GANs, the choice of  objective functional (in the form of a probability divergence plus a suitable regularization) plays a central role.

A very flexible family of divergences, the $(f, \Gamma)$-divergences, 
 were introduced  in \cite{birrell2022f}. These new divergences  interpolate between $f$-divergences (e.g KL, $\alpha$-divergence, Shannon-Jensen) and $\Gamma$-Integral Probability Metrics (IPM) like  1-Wasserstein and MMD distances (where $\Gamma$ is the 1-Lipschitz functions or an   RKHS 1-ball respectively).  Another way  to think of $\Gamma$ is as a regularization to avoid over-fitting, built directly in the divergence, see for instance structure-preserving GANs \cite{birrell2022structure}. 
 In this paper, we focus  on one specific family which we  view as a Lipschitz regularization of the KL-divergence (or  $f$-divergences) or as an entropic  regularization of the 1-Wasserstein metric. In this context, 
 the interpolation is mathematically described by the Infimal Convolution formula
 \begin{equation}\label{eq:inf_conv}
     D_{f}^{\Gamma_L}(P\|Q)=\inf_{\gamma\in\mathcal{P}(\mathbb{R}^d)}\left\{L \cdot W^{\Gamma_1}(P,\gamma) + D_f(\gamma\|Q) \right\}\, ,
 \end{equation}
where  $\mathcal{P}(\mathbb{R}^d)$ is the space of all Borel probability measures on $\mathbb{R}^d$ and 
   $\Gamma_L=\{\phi:\mathbb{R}^d\to \mathbb{R}: |\phi(x)-\phi(y)|\le L |x-y|\, \textrm{ for all } x,y\}$  is the space of Lipschitz continuous functions with  Lipschitz constant bounded by $L$ (note that $L \Gamma_1 = \Gamma_{L}$). 
  Furthermore, $W^{\Gamma_1}(P,Q)$ denotes the  1-Wasserstein metric  with transport cost $|x-y|$ which is an integral probability metric, and has the dual representation
\begin{equation}\label{eq:IPM}
    W^{\Gamma_1}(P, Q)=\sup_{\phi\in\Gamma_1}\left\{E_{P}[\phi]-E_Q[\phi]  \right\}.
\end{equation}
Finally, if $f: [0,\infty)\to\mathbb{R}$ is strictly convex and lower-semicontinuous with $f(1)=0$ the $f$-divergence of $P$ with respect to $Q$ is defined by  $D_f(P\|Q)= E_Q[f(\frac{dP}{dQ})]$ if $P\ll Q$ and set to be $+\infty$ otherwise.
The new divergences  inherit  desirable properties from both objects, e.g. 
 \begin{equation}\label{eq:fgdivergence:bounds}
     0\leq D_{f}^{\Gamma_L}(P\|Q) \leq\min\left\{ D_f(P\|Q),L \cdot W^{\Gamma_1}(P,Q)\right\}\, .
 \end{equation} 
The Lipschitz-regularized $f$-divergences \cref{eq:inf_conv} admit a dual variational representation,  
\begin{equation}\label{eq:fgdivergence:dual}
D_{f}^{\Gamma_L}(P\|Q):=\sup_{\phi\in\Gamma_L}\left\{E_{P}[\phi]-\inf_{\nu \in \mathbb{R}}\left\{ \nu + E_{Q}[f^*(\phi -\nu)]\right\}\right\}\, ,
\end{equation}
where $f^*$ is the Legendre transform of $f$.
Some of the important properties of Lipschitz regularized $f$-divergences, which summarizes results from 
\cite{dupuis2022formulation,birrell2022f} are given in \ref{sec:appendix:fdivergences}. 
Typical examples of $f$-divergences  include the KL-divergence with  $f_{\mathrm{KL}}(x)=x\log x$, and  the $\alpha$-divergences with $f_{\alpha}(x)=\frac{x^{\alpha}-1}{\alpha(\alpha-1)}$. The corresponding    Legendre transforms are $f^*_{\mathrm{KL}}(y)=e^{y-1}$ and $f^*_{\alpha} \propto y^{\frac{\alpha}{(\alpha-1)}}$.
In the KL case  the infimum over $\nu$ can be solved analytically and yields the Lipschitz-regularized Donsker-Varadhan formula with a $\log E_Q[e^\phi]$ term, see \cite{Birrell_IEEE_2022} for more on variational representations.

 \paragraph{Gradient flows in probability space} The groundbreaking work of \cite{jordan1998variational,otto2001geometry} recasted the Fokker-Planck (FP)  and the porous media equations as gradient flows in the 2-Wasserstein space of probability measures.  
More specifically,  the Fokker-Planck equation can be thought as the gradient flow of the KL divergence 
\begin{equation}
\label{eq:gradflow:FPE:KL}
    \partial_t p_t = \nabla \cdot \left(p_t\nabla \frac{\delta D_{KL}(p_t\| q)}{\delta p_t} \right)\, = \nabla \cdot \left(p_t\nabla  \log\left(\frac{p_t}{q}\right)\right)\,
\end{equation}
where $p_t$ and $q$ are the densities at time $t$ and the stationary density respectively. A similar result relates weighted porous media equation and gradient flows for $f$ divergences \cite{otto2001geometry}. This probabilistic formulation allowed the use of such 
gradient flows and related perspectives  to build new  Machine Learning concepts and tools. For instance, the 
Fokker-Planck equation plays a key role in both generative modeling and in sampling.

In the remaining part of this Introduction we provide an outline of our main results, as well as a discussion of related prior work.

\paragraph{Lipschitz-regularized gradient flows in probability space}
From a generative modeling perspective, where $Q$ is known only through samples—and may not have a density, especially if $Q$ is concentrated on a low-dimensional structure—one cannot use gradient flows such as  \cref{eq:gradflow:FPE:KL} without further regularization.
 For instance, related generative methods such as score matching and diffusion models  regularize data by adding  noise, \cite{song2020generative,Song2021ScoreBasedGM}. 
%
Here we propose  a different and complementary approach by regularizing the divergence directly and without adding noise to the data. 
We propose gradient flows for the Lipschitz-regularized divergences \cref{eq:fgdivergence:dual} of the form 
\begin{equation}\label{eq:fgdivergence:gradflow}
\partial_{t}P_t={\rm div}\left(P_t\nabla \frac{\delta  D_{f}^{\Gamma_L}(P_t\|Q)}{\delta P_t}\right)\, , 
\end{equation}
for an initial (source) probability measure $P_0$ and an equilibrium (target) measure $Q$,
for  $P_0,Q$ in the Wasserstein space $\mathcal{P}_1(\mathbb{R}^d)= \left\{ P \in \mathcal{P}(\mathbb{R}^d) : \int |x| dP(x) < \infty\right\}$. We want to emphasize that $\mathcal{P}_1(\mathbb{R}^d)$  includes singular measures such as empirical distributions constructed from data.
In \Cref{sec:L-reg:gradflow} we prove the first variation formula 
\begin{equation}\label{eq:gradflow:variational:discriminator}
\frac{\delta  D_{f}^{\Gamma_L}(P\|Q)}{\delta P}= \phi^{L,*} =  \underset{\phi\in \Gamma_L}{\rm argmax} \left\{E_P[\phi]- \inf_{\nu \in \mathbb{R}}(\nu + E_{Q}[f^*(\phi-\nu)])\right\}\, .
\end{equation}
The optimal $\phi^{L,*}$ in \cref{eq:gradflow:variational:discriminator} (called the discriminator in the GAN literature) in the variational representation of the divergence \cref{eq:fgdivergence:dual}
serves as a potential to transport probability measures,  leading to 
the \textit{transport/variational} PDE reformulation of 
\cref{eq:fgdivergence:gradflow}:
\begin{equation}
\begin{aligned}
     & \partial_{t}P_t +{\rm div}(P_t v_t^L) = 0 \, , 
      \quad 
 P_{0}=P\in \mathcal{P}_1(\mathbb{R}^d)\, ,
     \label{eq:transport:variational:pde}\\ 
     & v_t^L= -\nabla \phi_t^{L,*} \, , \quad \phi_t^{L,*} = \underset{\phi\in \Gamma_L}{\rm argmax}  \left\{E_{P_t}[\phi]- \inf_{\nu \in \mathbb{R}}(\nu + E_{Q}[f^*(\phi-\nu)])\right\}\, ,
     \end{aligned}
\end{equation} 
where we remind that $\Gamma_L=\{\phi:\mathbb{R}^d\to \mathbb{R}: |\phi(x)-\phi(y)|\le L |x-y|\, \textrm{ for all } x,y\}$.
This transport/variational PDE should be understood in a weak sense since $P_t$ and $Q$ are not necessarily assumed to have densities. However, the purpose of this paper is not to develop the PDE theory for this new gradient flow but rather to first establish its computational feasibility through associated particle algorithms, explore its usefulness in generative modeling for problems with high-dimensional scarce data,  and overall  computational efficiency and scalability.
Given sufficient regularity, along a trajectory of a smooth solution $P_t$ of (\ref{eq:transport:variational:pde}) we have the following  dissipation identity: 
\begin{equation}\label{eq:dissipation:intro}
    \frac{d}{dt}D_{f}^{\Gamma_L}(P_t\|Q)=-I_f^{\Gamma_L}(P_t\|Q) \le 0 \quad  \textrm{ where }  \quad  I_f^{\Gamma_L}(P_t\|Q)= E_{P_t}\left[|\nabla \phi_t^{L,*}|^2\right]
  \end{equation}
and $I_f^{\Gamma_L}(P\|Q)$ is a Lipschitz-regularized version of the Fisher Information.  Due to the transport/variational  PDE (\ref{eq:transport:variational:pde}) $I_f^{\Gamma_L}(P\|Q)$ can be interpreted as a total kinetic energy, see \Cref{sec:L-reg:gradflow},  and   \Cref{sec:GPA} for its practical importance in the particle algorithms introduced next.

\paragraph{Lipschitz-regularized Generative Particle Algorithms (GPA)} 
In the context of generative models, the target $Q$ and the generative model $P_t$ in \cref{eq:fgdivergence:gradflow} are available only through their  samples and associated empirical distributions. However, as it can be seen  from \cref{eq:fgdivergence:bounds} the divergence $D_{f}^{\Gamma_L}(P\|Q)$ can compare directly singular distributions (e.g. empirical measures) without need for extra regularization such as  adding noise to our models. For precisely this reason the proposed gradient flow \cref{eq:fgdivergence:gradflow} is a natural mathematical object to consider as a generative model.

From a computational perspective, it becomes  feasible to solve high-dimensional transport PDE such as \cref{eq:fgdivergence:gradflow} when considering the 
Lagrangian  formulation of the transport PDE in (\ref{eq:transport:variational:pde}), i.e.  the 
ODE/variational problem
\begin{equation}
    \begin{aligned}
        \label{eq:gradient_flow:particles:lagrangian}
    \frac{d}{dt} Y_t &= v_t^L(Y_t)=-\nabla \phi_t^{L,*} (Y_t)\, ,  
      \quad 
 Y_0 \sim P\, ,
    \\ \phi_t^{L,*}&=\underset{\phi\in \Gamma_L}{\rm argmax}\left\{E_{P_t}[\phi]-\inf_{\nu \in \mathbb{R}}\left\{ \nu + E_{Q}[f^*(\phi -\nu)]\right\}  \right\}\, . 
    \end{aligned}
\end{equation}
%
In order to turn \cref{eq:gradient_flow:particles:lagrangian} into a particle algorithm we need the following ingredients:
\begin{itemize}
    \item Consider samples  $(X^{(i)})_{i=1}^N$  from the target $Q$ and $(Y^{(i)})_{i=1}^M$ samples from an initial (source) distribution $P=P_0$. In this case for the corresponding empirical measures $\widehat{Q}^N$ and $\widehat{P}^M$ we will consider the gradient flow 
    \cref{eq:fgdivergence:gradflow} for $D_{f}^{\Gamma_L}(\widehat{P}^M\|\widehat{Q}^N)$.
   %
    A key observation in our algorithms is that 
    the divergence $D_{f}^{\Gamma_L}(\widehat{P}^M\|\widehat{Q}^N)$ is always well-defined and finite
    due to Lipschitz regularization and \cref{eq:fgdivergence:bounds}.
    
    \item Corresponding estimators for the objective functional in the variational representation of the  divergence $D_{f}^{\Gamma_L}(\widehat{P}^M\|\widehat{Q}^N)$, see 
    \cref{eq:fgdivergence:dual}  and also
    \cref{eq:gradient_flow:particles:lagrangian}:
    \[
E_{\widehat{P}^M}[\phi]- \inf_{\nu}(\nu + E_{\widehat{Q}^N}[f^*(\phi-\nu)])= \frac{\sum_{i=1}^{M}\phi(Y_n^{(i)})}{M}- \inf_{\nu \in \mathbb{R}}\left\{ \nu + \frac{\sum_{i=1}^{N}f^*(\phi(X^{(i)})-\nu)}{N}\right\} \, .
    \]
 \item    The function space $\Gamma_L$ in \cref{eq:gradient_flow:particles:lagrangian} is approximated by a space of neural network approximations $\Gamma_L^{NN}$. The Lipschitz condition can be implemented via neural network spectral normalization  as discussed in \Cref{sec:GPA}.
 \item The transport ODE  in \cref{eq:gradient_flow:particles:lagrangian} is  discretized in time using an Euler or a higher order scheme, see  \Cref{sec:GPA}. Furthermore the gradient  $\nabla \phi_t^{L,*}$ is evaluated by automatic differentiation of neural networks at the positions of the particles.
\end{itemize}
 By incorporating  these approximations we 
 derive from \cref{eq:gradient_flow:particles:lagrangian}, upon Euler time discretization the   \textit{Lipschitz-regularized generative particle algorithm} (GPA):
\begin{equation}
\begin{aligned}
\label{eq:GPA:algorithm}
    Y_{n+1}^{(i)}&=Y_n^{(i)}-\Delta t\nabla {\phi}_n^{L, *}(Y_n^{(i)})\,, \quad Y^{(i)}_0=Y^{(i)}\, , \, Y^{(i)} \sim P\, , \quad i=1,..., M
    \\ 
   {\phi}_n^{L,*}&=\underset{\phi\in \Gamma_L^{NN}}{\rm argmax} \left\{\frac{\sum_{i=1}^{M}\phi(Y_n^{(i)})}{M}- \inf_{\nu \in \mathbb{R}}\left\{ \nu + \frac{\sum_{i=1}^{N}f^*(\phi(X^{(i)})-\nu)}{N}\right\}\right\} \, ,  
  \end{aligned}
\end{equation}
Besides the transport aspect of \cref{eq:GPA:algorithm}, it  can be also viewed as a new generative algorithm,   where the input is samples 
$(X^{(i)})_{i=1}^N$
from the ``target" $Q$.  Initial data, usually referred to as ``source" data ,
$(Y_0^{(i)})_{i=1}^M$
from $P$ are transported  via  \cref{eq:GPA:algorithm}, after time  $T=n_T\Delta t$, where $n_T$ is the total number of steps,   to a new   set of generated data $(Y_{n_T}^{(i)})^{M}_{i=1}$ 
that approximate samples from  $Q$. 
See for instance the demonstration  in \Cref{fig:3dsierpinskicarpet}. 

In analogy to \cref{eq:gradient_flow:particles:lagrangian}, this Lagrangian point of view has been recently introduced  to write  the solution of  the Fokker-Planck equation \cref{eq:gradflow:FPE:KL} as the density of particles evolving according to its Lagrangian formulation, \cite{MaoutsaReichOpper},
\begin{align}\label{eq:probflow:KL:intro}
    \frac{d}{dt} Y_t = v_t(Y_t)=\nabla \log q(Y_t) -\nabla \log p_t(Y_t)\, , \quad \mbox{where $Y_t \sim P_t$}\, .
\end{align} 
In fact, in  \cite{Song2021ScoreBasedGM}, the authors proposed the deterministic probability flow \cref{eq:probflow:KL:intro}
 as an alternative to generative stochastic samplers for score generative models
 due to advantages related to obtaining better statistical estimators.
We note here that the score term $\nabla \log p_t(Y_t)$ 
 in \cref{eq:probflow:KL:intro} is not a priori known and can be estimated by score-based methods \cite{ScoreMatching:2005}. 
 In practice, these Lagrangian tools are used both for generation, \cite{Song2021ScoreBasedGM} as well as sampling \cite{Reich_Weissmann_Bayesian, Boffi_EVE_2022}.

\paragraph{Main contributions} 
As discussed earlier,  the purpose of this paper is to introduce the  new Lipschitz-regularized gradient flow \cref{eq:fgdivergence:gradflow}, in \Cref{sec:L-reg:gradflow},  and  subsequently establish its computational feasibility through associated particle algorithms, its computational efficiency and scalability, and explore its usefulness in generative modeling for problems with high-dimensional scarce data.
Towards these goals our main findings can be summarized as follows.

\begin{enumerate}

 \item \textit{
 GPA for generative modeling with  scarce data. }
    We demonstrate that our proposed  GPA, introduced  in \Cref{sec:GPA}, can learn distributions from very small data sets, including   MNIST  and other  benchmarks, often supported on  low-dimensional structures, see \Cref{fig:3dsierpinskicarpet}.
    In \Cref{sec:generalization-overfit} we discuss generalization properties of GPA and strategies for mitigating memorization of target data, which has proved to be  a significant and ongoing challenge in generative modeling. %
In \Cref{sec:example:scarceMNIST}  we compare GPA to  GANs and score-based generative models (SGM) in a series of examples and show GPA to be an effective data-augmentation tool. 


\item \textit{Lipschitz-regularization}.  We demonstrate that Lipschitz-regularized divergences provide a well-behaved pseudo-metric between models and data or data and data. 
They  remain  finite under very broad conditions, making the training of generative particle algorithms \cref{eq:GPA:algorithm} on data always well-defined and numerically stable. 
In fact, 
Lipschitz regularization corresponds to effectively imposing an advection-type   Courant – Friedrichs – Lewy (CFL) numerical stability condition on the Fokker-Planck PDE \cref{eq:gradflow:FPE:KL} through the Lipschitz-regularization parameter $L$ in \cref{eq:fgdivergence:gradflow}.
The example in \Cref{sec:example:cfllipschitzregularization}  demonstrates empirically that the selection of $L$ is important.

\item \textit{Choice of $f$-divergence in \cref{eq:fgdivergence:gradflow}}. Although KL is often a natural choice, a careful selection of $f$-divergences, for example the family of $\alpha$-divergences  where $f_{\alpha}=\frac{x^{\alpha}-1}{\alpha(\alpha-1)}$, will allow for  training  that is numerically stable, including examples with  heavy-tailed data, see \Cref{sec:example:heavytails}.

\item \textit{Latent-space GPA for very high-dimensional problems.} GPA can be effective even for scarce data sets in high dimensions. We provide a demonstration  where we integrate (real) gene expression data sets 
exceeding 50,000 dimensions. The goal of  data transportation  in this context is to mitigate batch effects between studies of different  groups of patients, see \Cref{sec:example:batch:effects}. 
From a practical  perspective, to be able to operate in such high-dimensions we need a latent-space representation of the data and subsequently we use GPA to transport particles in the latent space.  In \Cref{sec:dpi:latentGPA} we provide related \textit{performance guarantees} 
using  a new  Data Processing Inequality (DPI) for Lipschitz-regularized divergences.


\end{enumerate}

\begin{figure}
    \centering
    \includegraphics[width=.24\textwidth]{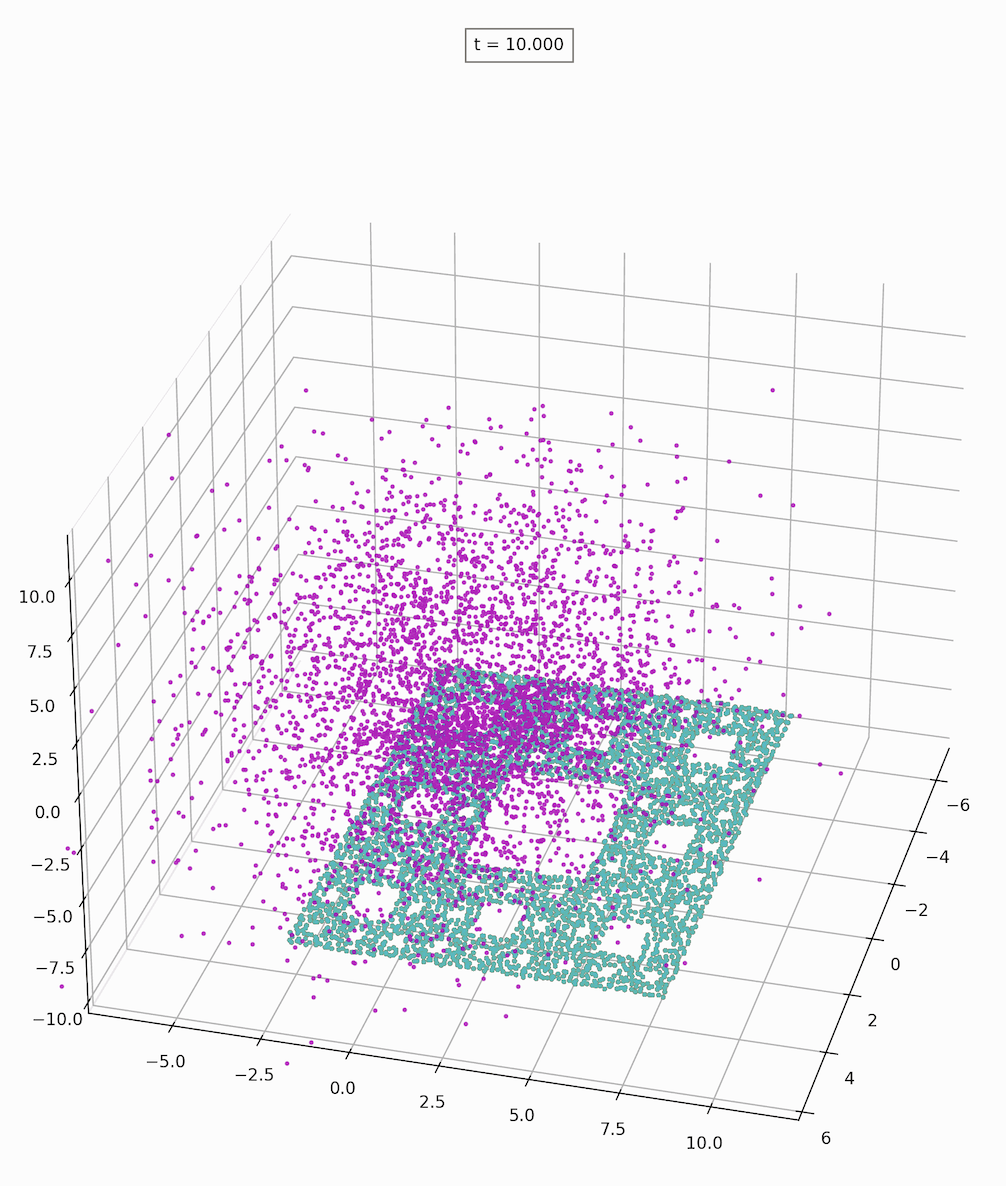}
    \includegraphics[width=.24\textwidth]{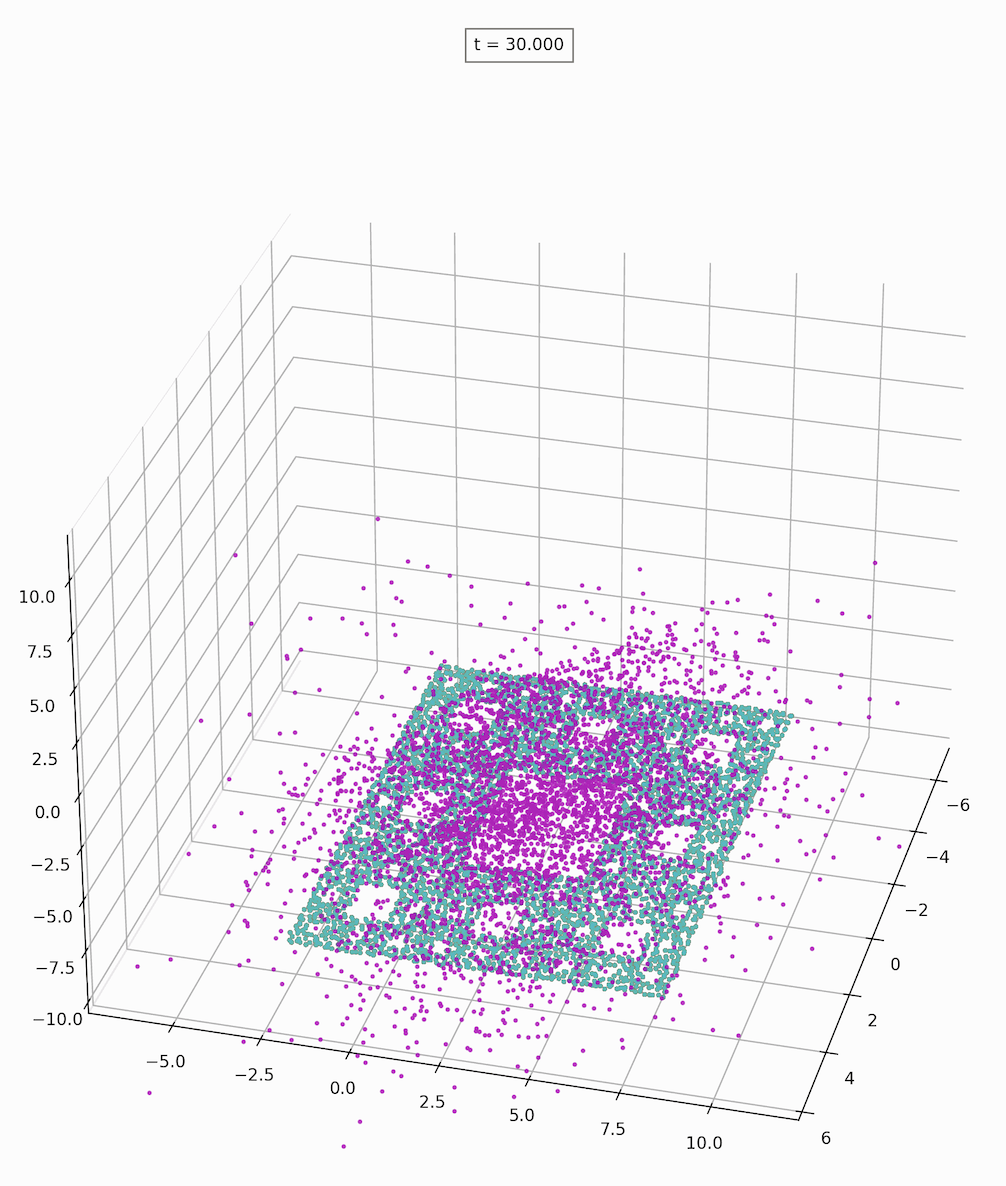}
    \includegraphics[width=.24\textwidth]{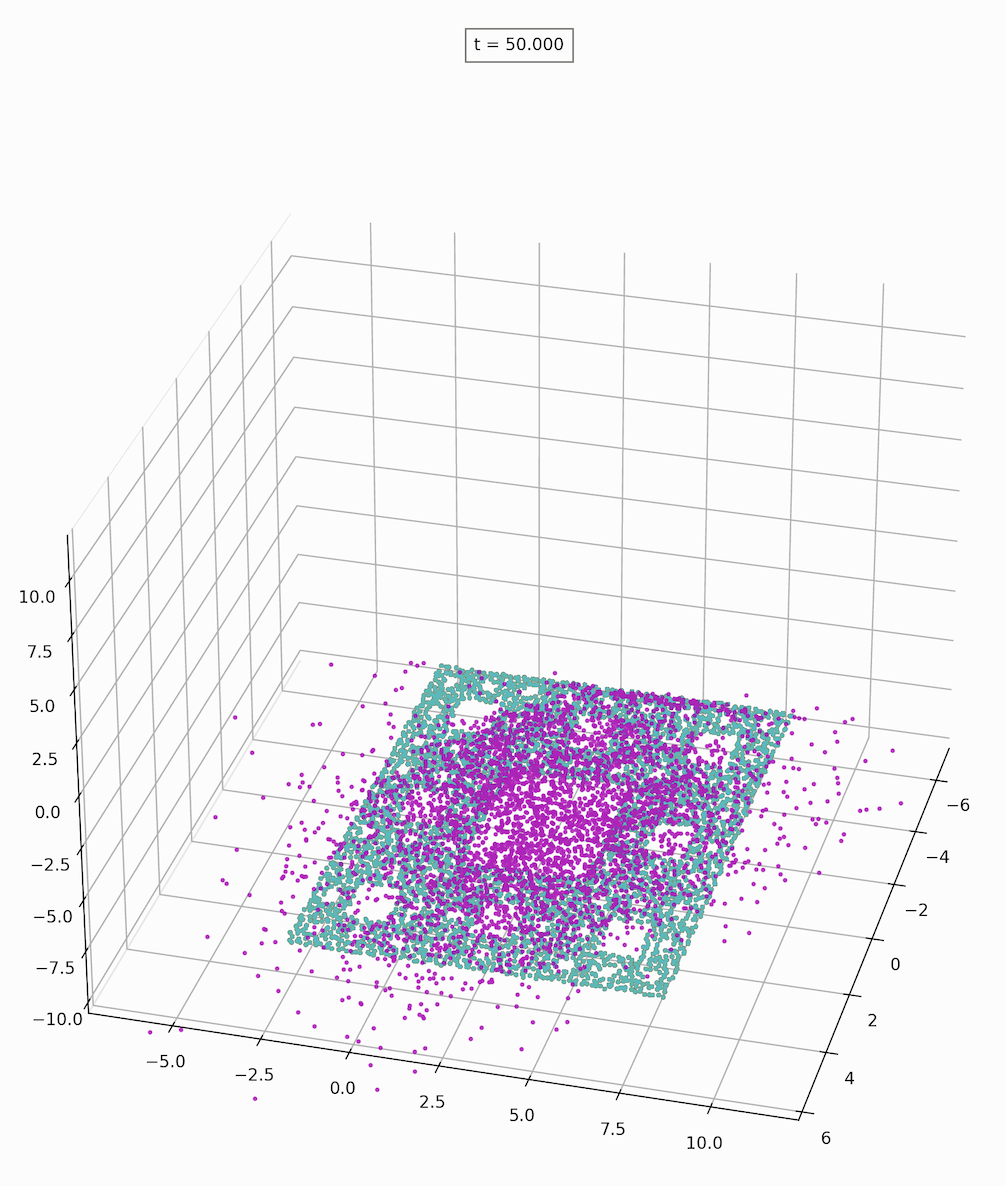}
    \includegraphics[width=.24\textwidth]{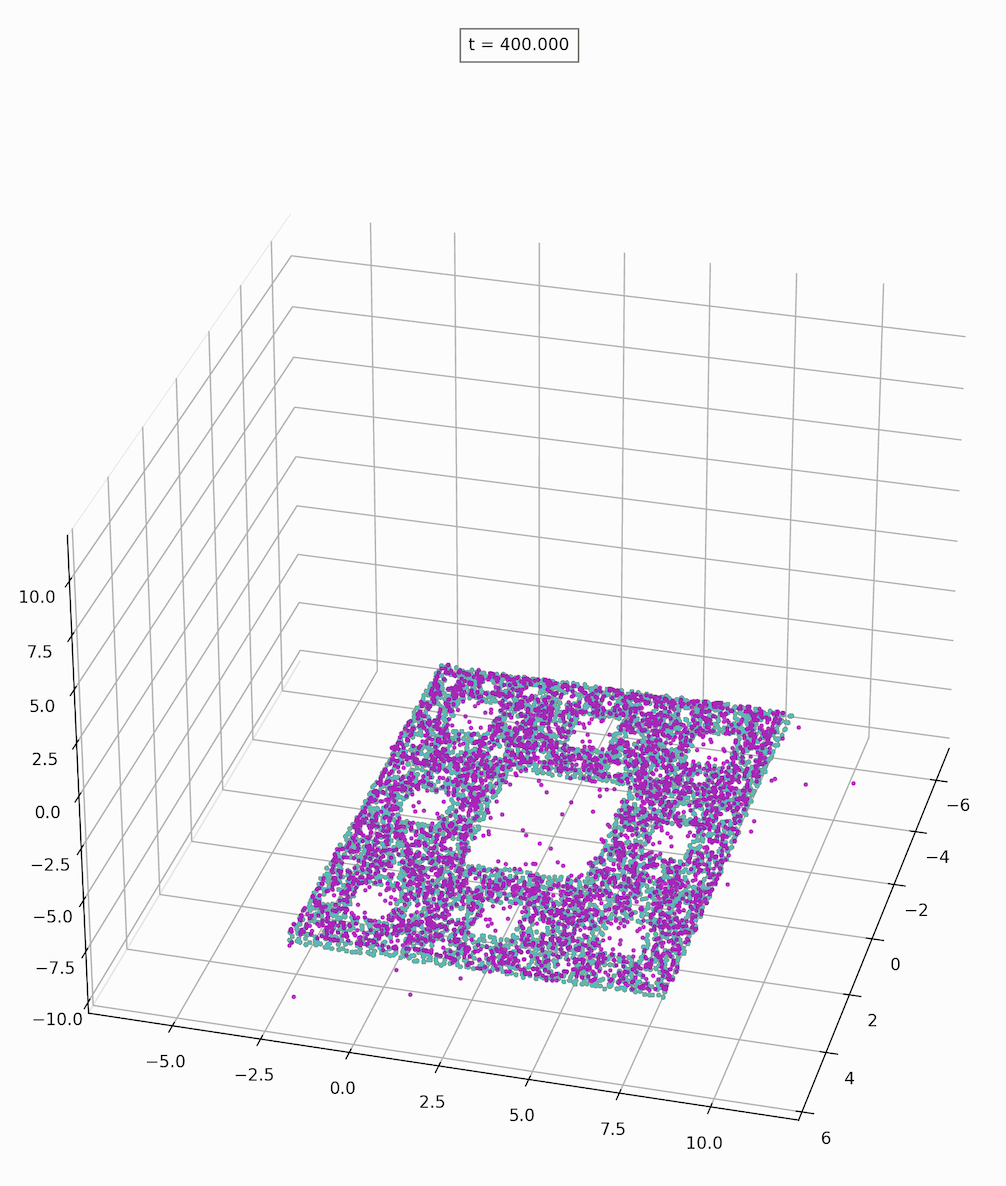}
    \caption{Sierpinski carpet embedded in 3D. Source data (purple particles) are transported via GPA close to the target data (cyan particles). The target particles were  sampled from a Sierpinski carpet of level $4$ by omitting all finer scales. See \cref{fig: Sierpinski carpet} for a related 2D demonstration and a comparison to GANs.}
    \label{fig:3dsierpinskicarpet}
\end{figure}

\paragraph{Related work} 
 Our approach is inspired by the MMD and KALE gradient flows from \cite{arbel2019maximum,glaser2021kale} based on an entropic regularization of the MMD metrics, and related work using the Kernelized Sobolev Discrepancy \cite{mroueh2019sobolev}. Furthermore, the recent work of \cite{dupuis2022formulation,birrell2022f} built the mathematical foundations for a large class of new divergences which contains the Lipschitz regularized $f$-divergences and used them to construct GANs, and in particular symmetry preserving GANs \cite{birrell2022structure}. 
Also related is the Sinkhorn divergence \cite{Cuturi:stoch_optim_OT:2016} which is a different entropic regularization of the 2-Wasserstein metrics. 
Lipschitz regularizations  and the  related spectral normalization have been shown 
to improve the stability of GANs \cite{Miyato_spectral_norm,Wasserstein:GAN, GP-WGAN}. 
Our particle algorithms share similarities with GANs \cite{goodfellow2014generative, Wasserstein:GAN}, sharing the same  discriminator but having a different generator step. They are also   broadly related to continuous-time generative algorithms, such as continuous-time normalizing flows (NF) \cite{Chen_CNF_2017, noe-kohler20a, NEURALODE},     diffusion models \cite{sohldickstein2015deep,ho2020denoising} and score-based generative flows \cite{song2020generative, Song2021ScoreBasedGM}.
However, the aforementioned  continuous-time models, along with variational autoencoders \cite{VAE} and energy based methods \cite{lecun-06},  are mostly KL/likelihood-based.  

On the other hand, particle gradient flows such as the ones proposed here, can be classified as a separate  class  within  implicit generative models. 
Within such generative models that  include GANs, there is more flexibility in selecting  the loss function in terms of a suitable divergence or probability metric, enabling the direct comparison of even mutually singular distributions, e.g. \cite{Wasserstein:GAN, GP-WGAN}. 
Gradient flows in probability spaces related to the Kullback-Leibler (KL) divergence, such as the Fokker-Planck equations and Langevin dynamics \cite{MALA,ULA} or Stein variational gradient descent \cite{LiuSVGD2016,LiuSVGD2017,LuLuSVGD}, form the basis of a variety of sampling algorithms when the target distribution $Q$ has a known density (up to normalization). The weighted porous media equations form another family of gradient flows based on $\alpha$-divergences, e.g.  \cite{
otto2001geometry, ambrosio2005gradient,dolbeault2007lq, vazquez2014barenblatt} 
which are very useful in the presence of heavy tails. 
Our gradient flows are  Lipschitz-regularizations of such classical PDE's (Fokker-Planck and porous medium equations). 
Finally, deterministic particle methods and associated probabilistic flows of ODEs such as the ones derived here for Lipschitz-regularized gradient flows,  were considered in recent works for classical KL-divergences and associated  Fokker-Planck equations as sampling tools \cite{MaoutsaReichOpper, Boffi_EVE_2022}, for Bayesian inference \cite{Reich_Weissmann_Bayesian} and as generative models \cite{Song2021ScoreBasedGM}.

\section{Lipschitz-regularized gradient flows} 
\label{sec:L-reg:gradflow}

In this section we introduce the concept of  
Lipschitz-regularized gradient flows in probability space,
including the key computation of 
the first variation of Lipschitz-regularized divergences. This will allow us to build effective particle-based algorithms in \Cref{sec:GPA}. Indeed, given a target probability measure $Q$,  
we build an evolution equation
for probability measures based on the Lipschitz regularized $f$-divergences $D_f^{\Gamma_L}(P\|Q)$ in \cref{eq:fgdivergence:dual}, by considering the PDE
\begin{equation}\label{eq:fgdivergence:gradflow:sec2}
\partial_{t}P_t={\rm div}\left(P_t\nabla \frac{\delta  D_{f}^{\Gamma_L}(P_t\|Q)}{\delta P_t}\right)\,,\quad \text{with initial condition} \quad P_{0}\in \mathcal{P}_1(\mathbb{R}^d)
\end{equation}
where $\frac{\delta D_{f}^{\Gamma_L}(P\|Q)}{\delta P}$ is the first variation of $D_{f}^{\Gamma_L}(P\|Q)$, to be discussed below in Theorem \ref{thm:first variation}. 
An advantage of the Lipschitz regularized $f$-divergences is its ability to compare singular measures and thus   \cref{eq:fgdivergence:gradflow:sec2} needs  to be understood in a weak sense.  For this reason we use the probability measure $P_t$ notation in \cref{eq:fgdivergence:gradflow:sec2}, instead of density notation $p_t$ as in the Fokker-Planck (FP) equation \cref{eq:gradflow:FPE:KL}. 
In the formal asymptotic limit $L\to \infty$ and if $P\ll Q$,   \cref{eq:fgdivergence:gradflow:sec2}
yields the FP equation  \cref{eq:gradflow:FPE:KL} (for KL divergence)
and the weighted porous medium equation (for $\alpha$-divergences) \cite{otto2001geometry,dolbeault2007lq}, see \Cref{rem:limit.pde}. Note that the purpose of this paper is not to develop the PDE theory for 
\cref{eq:fgdivergence:gradflow:sec2} but rather to first establish its computational feasibility through associated particle algorithms and  demonstrate its usefulness in generative modeling.

\begin{theorem}[first variation of Lipschitz regularized $f$-divergences]\label{thm:first variation}  
Assume $f$ is superlinear, strictly convex 
and  $P,Q \in \mathcal{P}_1(\mathbb{R}^d)$. We define  
\begin{equation}\label{eq: discriminators:sec2}
     \phi^{L,*}:=\underset{\phi\in \Gamma_L}{\rm argmax} \left\{E_P[\phi]- \inf_{\nu \in \mathbb{R}}\{\nu + E_{Q}[f^*(\phi-\nu)] \} \right\}
    \,.
\end{equation}
where the  optimizer $\phi^{L,*} \in \Gamma_L$ exists, is defined on  $\textrm{supp}(P)\cup \textrm{supp}(Q)$, and is unique up to a constant. 
Subsequently, we extend $\phi^{L,*}$ in all of $\mathbb{R}^d$ using \cref{eq:phi_extensions:sec2}.
Let $\rho$ be a signed measure of total mass $0$ and let $\rho=\rho_+-\rho_-$
    where $\rho_{\pm} \in \mathcal{P}_1(\mathbb{R}^d)$ are mutually singular, 
    i.e., there exist two disjoint sets $X_\pm$ such that $\rho_\pm(A)=\rho_\pm(A \cap X_\pm)$
for all measurable sets $A$.

    \noindent
    If $P+ \epsilon \rho \in \mathcal{P}_1(\mathbb{R}^d)$ for sufficiently small $\epsilon>0$, then 
    \begin{equation}\label{eq: Gateaux derivative}
        \lim_{\epsilon \to 0} \frac{1}{\epsilon}
        \left(D^{\Gamma_L}_f(P+\epsilon \rho \|Q)-D^{\Gamma_L}_f(P\|Q)\right) = \int \phi^{L,*} d\rho \,.
    \end{equation} 
    Then we write 
    \begin{equation}\label{eq: first variation}
    \frac{\delta  D_{f}^{\Gamma_L}(P\|Q)}{\delta P}(P)= \phi^{L,*}
    \,.
\end{equation}
\end{theorem}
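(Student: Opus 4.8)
The plan is an envelope/Danskin-type argument exploiting the \emph{affine} dependence of the dual functional \cref{eq:fgdivergence:dual} on the measure. Write $H(\phi,P):=E_P[\phi]-g(\phi)$ with $g(\phi):=\inf_{\nu\in\mathbb{R}}\big\{\nu+E_Q[f^*(\phi-\nu)]\big\}$, so that $D_f^{\Gamma_L}(P\|Q)=\sup_{\phi\in\Gamma_L}H(\phi,P)$ and, crucially, $H(\phi,P+\epsilon\rho)=H(\phi,P)+\epsilon\int\phi\,d\rho$ with a $P$-independent slope. First I would establish the existence and essential uniqueness of the maximizer $\phi^{L,*}$ on $S:=\mathrm{supp}(P)\cup\mathrm{supp}(Q)$ asserted in the statement, by the direct method: the competitors, normalized by $\phi(x_0)=0$ at a fixed base point, are uniformly $L$-Lipschitz hence precompact for local uniform convergence (Arzel\`a--Ascoli); superlinearity of $f$ (equivalently, $f^*$ finite on all of $\mathbb{R}$) keeps the inner minimizer $\nu$ in a bounded interval and gives upper semicontinuity of $H(\cdot,P)$; and strict convexity of $f$ forces uniqueness up to an additive constant. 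The maximizer is extended to $\mathbb{R}^d$ by the norm-preserving McShane formula \cref{eq:phi_extensions:sec2}, which is the \emph{maximal} $L$-Lipschitz extension of $\phi^{L,*}|_S$; these facts are essentially the convex-analytic properties of $(f,\Gamma_L)$-divergences recalled in \ref{sec:appendix:fdivergences}. Since $\phi^{L,*}$ then has at most linear growth and $\rho_\pm\in\mathcal{P}_1(\mathbb{R}^d)$, $\int\phi^{L,*}\,d\rho$ is finite, and $D_f^{\Gamma_L}(P+\epsilon\rho\|Q)<\infty$ by \cref{eq:fgdivergence:bounds}.

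\textbf{Step 1: lower bound.} Using the extended $\phi^{L,*}$ as a test function in the dual representation of $D_f^{\Gamma_L}(P+\epsilon\rho\|Q)$ and affineness,
\[
D_f^{\Gamma_L}(P+\epsilon\rho\|Q)\ \ge\ H(\phi^{L,*},P)+\epsilon\int\phi^{L,*}\,d\rho\ =\ D_f^{\Gamma_L}(P\|Q)+\epsilon\int\phi^{L,*}\,d\rho,
\]
so $\liminf_{\epsilon\downarrow0}\tfrac1\epsilon\big(D_f^{\Gamma_L}(P+\epsilon\rho\|Q)-D_f^{\Gamma_L}(P\|Q)\big)\ge\int\phi^{L,*}\,d\rho$ (and the analogous bound for $-\rho$ whenever that perturbation is also admissible). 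It then remains to bound the corresponding $\limsup$ from above by the same quantity.

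\textbf{Step 2: upper bound via stability of the optimizer.} For small $\epsilon>0$ I would take $\phi_\epsilon\in\Gamma_L$ maximizing $H(\cdot,P+\epsilon\rho)$, normalized by $\phi_\epsilon(x_0)=0$. Affineness gives $D_f^{\Gamma_L}(P+\epsilon\rho\|Q)=H(\phi_\epsilon,P)+\epsilon\int\phi_\epsilon\,d\rho\le D_f^{\Gamma_L}(P\|Q)+\epsilon\int\phi_\epsilon\,d\rho$, hence $\tfrac1\epsilon\big(D_f^{\Gamma_L}(P+\epsilon\rho\|Q)-D_f^{\Gamma_L}(P\|Q)\big)\le\int\phi_\epsilon\,d\rho$. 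The bound $|\phi_\epsilon(x)|\le L|x-x_0|$ is uniform in $\epsilon$, so $\big|\int\phi_\epsilon\,d\rho\big|\le L\int|x-x_0|\,d|\rho|(x)=:C<\infty$, which combined with Step 1 yields the Lipschitz estimate $|D_f^{\Gamma_L}(P+\epsilon\rho\|Q)-D_f^{\Gamma_L}(P\|Q)|\le C\epsilon$. Given $\epsilon_k\downarrow0$, Arzel\`a--Ascoli extracts a subsequence with $\phi_{\epsilon_k}\to\bar\phi$ locally uniformly, $\bar\phi\in\Gamma_L$; passing to the limit in $D_f^{\Gamma_L}(P+\epsilon_k\rho\|Q)=H(\phi_{\epsilon_k},P)+\epsilon_k\int\phi_{\epsilon_k}\,d\rho$ — using the Lipschitz estimate, $\epsilon_k\int\phi_{\epsilon_k}\,d\rho\to0$, dominated convergence for $E_P[\cdot]$ with dominator $L|x-x_0|\in L^1(P)$, and continuity of $g$ under local uniform convergence via the uniform bound on $\nu$ — shows $\bar\phi$ maximizes $H(\cdot,P)$, so $\bar\phi=\phi^{L,*}+c$ on $S$. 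Moreover, since $\rho_+\ge0$, optimality forces $\phi_\epsilon$ to agree on $\mathrm{supp}(\rho_+)$ with the maximal $L$-Lipschitz extension of $\phi_\epsilon|_S$; in the limit, $\bar\phi$ agrees on $\mathrm{supp}(\rho_+)$ with the maximal extension of $\phi^{L,*}|_S+c$, i.e.\ with $\phi^{L,*}+c$. Hence $\bar\phi=\phi^{L,*}+c$ $|\rho|$-a.e., and since $\rho$ has total mass zero, $\int\phi_{\epsilon_k}\,d\rho\to\int\bar\phi\,d\rho=\int\phi^{L,*}\,d\rho$ (dominated convergence with dominator $L|x-x_0|\in L^1(|\rho|)$; the additive constant cancels). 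As every subsequence admits a further subsequence with the same limit, $\lim_{\epsilon\downarrow0}\int\phi_\epsilon\,d\rho=\int\phi^{L,*}\,d\rho$; together with Step 1 this gives \cref{eq: Gateaux derivative}, and \cref{eq: first variation} is then the definition of the first variation.

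\textbf{Main obstacle.} Step 1 and the affineness bookkeeping are routine; the hard part will be the stability claim in Step 2 — that a local-uniform limit of near-optimizers for $P+\epsilon\rho$ is still optimal for $P$ (hence $=\phi^{L,*}$ up to a constant, by strict convexity of $f$), and that, where $\rho_+$ charges mass outside $S$, this limit agrees there with the McShane (maximal) extension, so that it matches $\int\phi^{L,*}\,d\rho$ for the specific extension in the statement. The two delicate technical points are (i) a uniform bound on the inner minimizers $\nu_\epsilon$, obtained from superlinearity of $f$ making $\nu\mapsto\nu+E_Q[f^*(\phi-\nu)]$ coercive uniformly over the relevant $\phi$, and (ii) the uniform integrability needed to pass $E_P[\phi_\epsilon]\to E_P[\bar\phi]$ and $\int\phi_\epsilon\,d\rho\to\int\bar\phi\,d\rho$ to the limit, which the a priori linear-growth bound $|\phi_\epsilon(x)|\le L|x-x_0|$ supplies together with $P,\rho_\pm\in\mathcal{P}_1(\mathbb{R}^d)$.
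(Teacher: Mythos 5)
Your overall architecture coincides with the paper's: a lower bound obtained by testing the perturbed dual problem with $\phi^{L,*}$, and an upper bound obtained by extracting an Arzel\`a--Ascoli limit $\bar\phi$ of the perturbed optimizers $\phi_\epsilon$ and identifying it with $\phi^{L,*}$ where $\rho$ lives. One genuine difference is in your favor: deriving $\tfrac1\epsilon\bigl(F(\epsilon)-F(0)\bigr)\le\int\phi_\epsilon\,d\rho$ directly from the envelope inequality is cleaner than the paper's route, which goes through convexity of $F$, differentiability off a countable set, and the identity $F'(\epsilon)=\int\phi^{L,*}_\epsilon\,d\rho$; your version also yields the Lipschitz estimate on $F$ for free. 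A technical caveat: to show that $\bar\phi$ is optimal for $P$ you only need \emph{upper} semicontinuity of $H(\cdot,P)$ along the sequence, i.e.\ lower semicontinuity of $g$, which Fatou supplies using $f^*(y)\ge y$ and $|\phi_{\epsilon_k}(x)|\le L|x|$. The full ``continuity of $g$ under local uniform convergence'' that you invoke would require a $Q$-integrable dominator for $f^*(\phi_{\epsilon_k}-\nu)$, which is not available in general (for KL, $f^*$ is exponential and $Q$ is only assumed to be in $\mathcal{P}_1$); the paper is careful to use only the one-sided Fatou bound.

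The genuine gap is in the final identification ``$\bar\phi=\phi^{L,*}+c$ $|\rho|$-a.e.'' You establish this on $S=\mathrm{supp}(P)\cup\mathrm{supp}(Q)$ and claim it on $\mathrm{supp}(\rho_+)$, but you say nothing about where $\rho_-$ lives; if $\rho_-$ charged the complement of $S$, the conclusion would not follow from what you prove. The missing observation --- derived explicitly in the paper in \cref{eq:perturb:abs_contin} --- is that positivity of the measure $P+\epsilon\rho$ forces $\rho_-\ll P$, so $\rho_-$ is automatically carried by $\mathrm{supp}(P)\subset S$, where $\bar\phi=\phi^{L,*}+c$ does hold. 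You must add this. Conversely, your treatment of the $\rho_+$ part is both stronger than needed and not actually proved: the assertion that the maximal-extension property of $\phi_{\epsilon_k}$ survives the local uniform limit (so that $\bar\phi$ coincides on $\mathrm{supp}(\rho_+)$ with the McShane extension of its restriction to $\mathrm{supp}(Q)$) requires an argument, since the infimum defining the extension runs over a possibly unbounded set and does not commute with local uniform convergence for free. The paper sidesteps this entirely: because $\phi^{L,*}$ is by convention the \emph{maximal} $L$-Lipschitz extension (\Cref{lemma}), any $L$-Lipschitz competitor agreeing with it on $\mathrm{supp}(Q)$ satisfies $\bar\phi\le\phi^{L,*}+c$ everywhere, and since $\rho_+\ge 0$ this one-sided bound gives $\int\bar\phi\,d\rho_+\le\int\phi^{L,*}\,d\rho_++c$; combined with equality $\rho_-$-a.e.\ and cancellation of $c$ (as $\rho$ has total mass zero) it yields $\limsup_{\epsilon\to0^+}\int\phi_\epsilon\,d\rho\le\int\phi^{L,*}\,d\rho$, which together with your Step 1 and \cref{eq:leftderivative}-type lower bound is all that is required. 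Replace the exact-agreement claim on $\mathrm{supp}(\rho_+)$ with this inequality and supply the $\rho_-\ll P$ argument, and your proof closes.
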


\begin{remark}
\label{rem:first variation}
{\rm 
The first variation of the Lipschitz-regularized KL divergence given in  \Cref{thm:first variation}, is defined on $\mathcal{P}_1(\mathbb{R}^d)$ which includes singular measures such as empirical distributions. On the other hand,  the classical Fokker-Planck  \cref{eq:gradflow:FPE:KL} (where $L=\infty$) can be  re-written in a gradient flow formulation 
\begin{equation}\label{eq:FPE:transporter}
\begin{aligned}   
   \partial_t p_t &= \nabla \cdot \left(\nabla \phi^* (x, t) p_t\right)\, , \quad \mbox{where} \\ \quad \phi_t^*&=\log\frac{p_t(x)}{q(x)}= \underset{\phi\in C_b(\mathbb{R}^d)}{\rm argmax}\left\{E_{P_t}[\phi]-\inf_{\nu \in \mathbb{R}}\left\{ \nu + E_{Q}[e^{\phi -\nu-1}]\right\}  \right\}\, 
   \end{aligned}
\end{equation}
is built on the first variation of the (un-regularized) KL divergence  given by 
\[
\frac{\delta D_{KL}(P\| Q)}{\delta P}=\log\frac{dP}{dQ}=\phi^*=
\underset{\phi\in C_b(\mathbb{R}^d)}{\rm argmax}
\left\{E_{P}[\phi]-\inf_{\nu \in \mathbb{R}}\left\{ \nu + E_{Q}[e^{\phi -\nu-1}]\right\}  \right\}
\]
where $C_b(\mathbb{R}^d)$ is the space of all bounded continuous functions on $\mathbb{R}^d$. In this case, the first variation is defined on the space of  probability measures which are absolutely continuous with respect to $Q$.
}
\end{remark}
The proof of Theorem \ref{thm:first variation} is partly based on the next lemma (proof in \ref{subsec:appendix:lemma}).
\begin{lemma}\label{lemma}
Let  $f$ be superlinear 
and strictly convex and  $P,Q \in \mathcal{P}_1(\mathbb{R}^d)$. For $y\notin {\rm{supp}}(P) \cup {\rm{supp}}(Q)$, we define 
\begin{equation}
\label{eq:phi_extensions:sec2}
    \phi^{L,*}(y)=\sup_{x\in {\rm{supp}}(Q)}\left\{\phi^{L,*}(x)+ L|x-y|\right\}\, . 
\end{equation}
Then $\phi^{L,*}$ is Lipschitz continuous on $\mathbb{R}^d$  with Lipschitz constant $L$ 
and $\phi^{L,*}=\sup\{h(x): h\in\Gamma_L,\, h(y)=\phi^{L,*}(y), {\textrm{for every }} y\in{\rm{supp}}(Q)\}$.
\end{lemma}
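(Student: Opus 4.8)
The plan is to recognize the lemma as the classical McShane--Whitney Lipschitz‑extension statement applied to the pinned data $g:=\phi^{L,*}|_{\mathrm{supp}(Q)}$, together with one consistency check that links the explicit extension \cref{eq:phi_extensions:sec2} to the abstract optimizer supplied by \Cref{thm:first variation}. From \Cref{thm:first variation} (this is where superlinearity and strict convexity of $f$ are used) we take as input that $\phi^{L,*}$ exists, belongs to $\Gamma_L$, and is determined up to an additive constant on $\mathrm{supp}(P)\cup\mathrm{supp}(Q)$; in particular $g$ is $L$-Lipschitz on the closed set $\mathrm{supp}(Q)$. Let $\Phi\colon\mathbb{R}^d\to\mathbb{R}$ denote the maximal $L$-Lipschitz extension of $g$, $\Phi(y)=\inf_{x\in\mathrm{supp}(Q)}\{\phi^{L,*}(x)+L|x-y|\}$, which is exactly what \cref{eq:phi_extensions:sec2} evaluates for $y$ outside $\mathrm{supp}(P)\cup\mathrm{supp}(Q)$.

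First I would verify the standard properties of $\Phi$. Finiteness: for any fixed $x_0\in\mathrm{supp}(Q)$ the triangle inequality gives $g(x)+L|x-y|\ge g(x_0)-L|x_0-y|$ for all $x$, while $x=x_0$ gives the upper bound $g(x_0)+L|x_0-y|$. The elementary bound $|x-y|-|x-y'|\le|y-y'|$ then yields $|\Phi(y)-\Phi(y')|\le L|y-y'|$, so $\Phi\in\Gamma_L$; and for $y\in\mathrm{supp}(Q)$ the choice $x=y$ together with $L$-Lipschitzness of $g$ shows $\Phi(y)=g(y)$. Maximality: any $h\in\Gamma_L$ with $h|_{\mathrm{supp}(Q)}=g$ satisfies $h(y)\le h(x)+L|x-y|=g(x)+L|x-y|$ for every $x\in\mathrm{supp}(Q)$, hence $h\le\Phi$ pointwise, and since $\Phi$ is itself such an $h$ we conclude $\Phi=\sup\{h\in\Gamma_L:h|_{\mathrm{supp}(Q)}=g\}$.

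It then remains to show that $\Phi$ agrees on all of $\mathrm{supp}(P)\cup\mathrm{supp}(Q)$ with the $\phi^{L,*}$ of \Cref{thm:first variation}, so that keeping $\phi^{L,*}$ on $\mathrm{supp}(P)\cup\mathrm{supp}(Q)$ and using \cref{eq:phi_extensions:sec2} off that set genuinely reconstructs $\Phi$; this is the only step carrying real content. On $\mathrm{supp}(Q)$ it is the previous paragraph. On $\mathrm{supp}(P)$ the maximality bound already gives $\phi^{L,*}\le\Phi$, since $\phi^{L,*}\in\Gamma_L$ and agrees with $g$ on $\mathrm{supp}(Q)$. Now I would insert $\Phi$ into the variational functional of \cref{eq: discriminators:sec2}: the term $\inf_{\nu\in\mathbb{R}}\{\nu+E_Q[f^*(\phi-\nu)]\}$ depends on $\phi$ only through $\phi|_{\mathrm{supp}(Q)}$, where $\Phi=g=\phi^{L,*}$, so it is unchanged, whereas $E_P[\Phi]\ge E_P[\phi^{L,*}]$ because $\Phi\ge\phi^{L,*}$ on $\mathrm{supp}(P)$ (both integrals are finite since $\Phi,\phi^{L,*}$ have linear growth and $P\in\mathcal{P}_1(\mathbb{R}^d)$). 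Hence $\Phi$ attains at least the optimal value, so it is itself a maximizer; by the uniqueness‑up‑to‑a‑constant in \Cref{thm:first variation}, $\Phi-\phi^{L,*}$ is constant on $\mathrm{supp}(P)\cup\mathrm{supp}(Q)$, and since the two already coincide on $\mathrm{supp}(Q)$ that constant vanishes. Therefore the assembled function is precisely $\Phi$, which by the second paragraph is $L$-Lipschitz on $\mathbb{R}^d$ and is the pointwise supremum of all $h\in\Gamma_L$ matching $\phi^{L,*}$ on $\mathrm{supp}(Q)$ — the two assertions of the lemma.

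I expect the main obstacle to be exactly this identification: the McShane construction and its maximality are routine, but matching the abstract maximizer of \Cref{thm:first variation} to the explicit maximal extension requires the competitor‑comparison argument above, and it relies essentially on the fact that the $f$-divergence term in \cref{eq: discriminators:sec2} only ``sees'' $\mathrm{supp}(Q)$. Two degenerate cases deserve a sanity check but cause no trouble: when $\mathrm{supp}(P)\cup\mathrm{supp}(Q)=\mathbb{R}^d$ there is nothing to extend and the claim collapses to $\phi^{L,*}\in\Gamma_L$ plus the identification step, and when $\mathrm{supp}(Q)$ is unbounded the finiteness estimate still goes through verbatim since it uses only a single reference point $x_0$.
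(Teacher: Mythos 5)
Your proposal is correct, and its core coincides with the paper's argument in \ref{subsec:appendix:lemma}: the maximality claim is obtained exactly as you do it, from $h(x)\le h(y)+L|x-y|$ for $h\in\Gamma_L$ pinned to $\phi^{L,*}$ on $\mathrm{supp}(Q)$, which gives $h\le \inf_{y\in\mathrm{supp}(Q)}\{\phi^{L,*}(y)+L|x-y|\}$, and the Lipschitz bound comes from the triangle inequality applied to the McShane formula. Where you go beyond the paper is the identification step: the paper's two-line proof simply equates $\inf_{y\in\mathrm{supp}(Q)}\{\phi^{L,*}(y)+L\|x-y\|\}$ with $\phi^{L,*}(x)$ for all $x\notin\mathrm{supp}(Q)$, but \cref{eq:phi_extensions:sec2} only \emph{defines} $\phi^{L,*}$ by that formula off $\mathrm{supp}(P)\cup\mathrm{supp}(Q)$; on $\mathrm{supp}(P)\setminus\mathrm{supp}(Q)$ the optimizer comes from \cref{eq: discriminators:sec2} and its agreement with the maximal extension genuinely requires an argument. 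Your competitor comparison — the $Q$-term of the objective sees only $\phi|_{\mathrm{supp}(Q)}$, while $E_P[\Phi]\ge E_P[\phi^{L,*}]$ by maximality, so $\Phi$ is itself a maximizer and uniqueness up to a constant (pinned to zero on $\mathrm{supp}(Q)$) forces $\Phi=\phi^{L,*}$ there — supplies exactly the missing justification, and it is also what makes the global $L$-Lipschitz continuity of the assembled function immediate rather than a boundary-matching check. This identification is implicitly relied on later (e.g.\ the inequality $\phi_0^{L,*}\le\phi^{L,*}$ in the proof of \Cref{thm:first variation}), so making it explicit is a genuine improvement.

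One small point worth flagging: you silently read \cref{eq:phi_extensions:sec2} as the inf-convolution $\inf_{x\in\mathrm{supp}(Q)}\{\phi^{L,*}(x)+L|x-y|\}$. That is clearly the intended formula — it is the one used in the paper's own proof and the only one consistent with the stated maximality property (the printed $\sup\{\cdots+L|x-y|\}$ would in general exceed the pinned values by $L\cdot\mathrm{diam}(\mathrm{supp}(Q))$ near $\mathrm{supp}(Q)$ and so could not extend them $L$-Lipschitzly) — but a careful write-up should state the correction rather than make it tacitly.
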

 See \Cref{rem:first.variation}, part (b) for the algorithmic intepretation of this Lemma.

\begin{proof}[Proof of Theorem \ref{thm:first variation}] 

If $\rho=\rho_+-\rho_-$, we may assume (Jordan decomposition) that  $\rho_{\pm}\in \mathcal{P}(X)$ are mutually singular so there exist two disjoint sets $X_\pm$ such that $\rho_\pm(A)=\rho_\pm(A \cap X_\pm)$
for all measurable sets $A$. The measure $P+ \epsilon(\rho_+-\rho_-)$ has total mass $1$ but to be a probability measure we need that
$\epsilon \rho_-(A) \le (P+\epsilon \rho_+)(A)$ holds for all $A$. This implies that $\rho_-$ is absolutely continuous with respect to $P$.  Indeed if $P(A)=0$ then 
\begin{equation}\label{eq:perturb:abs_contin}
\epsilon \rho_-(A)=\epsilon \rho_-(A \cap X_-) \le P(A\cap X_-) + \epsilon \rho_+(A \cap X_-) \le P(A) =0.   
\end{equation}
If $P+\epsilon \rho\in\mathcal{P}_1(\mathbb{R}^d)$  the divergence is finite and thus by \cref{eq:fgdivergence:dual} 
\begin{eqnarray}
D_{f}^{\Gamma_L}(P+\epsilon \rho\|Q)&=&\sup_{\phi\in\Gamma_L}\left\{E_{P+\epsilon \rho}[\phi]-\inf_{\nu \in \mathbb{R}}\left\{ \nu + E_{Q}[f^*(\phi -\nu)]\right\}\right\}\nonumber\\
&\geq&\int \phi^{L,*}\, d(P+\epsilon \rho)-\inf_{\nu \in \mathbb{R}}\left\{ \nu +\int f^*(\phi^{L,*} -\nu)dQ\right\}\nonumber\\
&=&\epsilon \int \phi^{L,*}d\rho+D_{f}^{\Gamma_L}(P\|Q)
\end{eqnarray}
Thus 
\begin{equation} \label{eq:leftderivative}
\liminf_{\epsilon\to 0^{+}}\frac{1}{\epsilon}\left(D_{f}^{\Gamma_L}(P+\epsilon \rho\|Q)-D_{f}^{\Gamma_L}(P\|Q)\right)\geq \int \phi^{L,*}d\rho
\end{equation}
For the other direction let us define $F(\epsilon)=D_{f}^{\Gamma_L}(P+\epsilon \rho\|Q)$.  By \cref{thm:fgdivergence_properties:appendix}
 $F(\epsilon)$ is convex, lower semicontinuous and finite on $[0,\epsilon_0]$. Due to the convexity of  $F$, it  is differentiable on $(0,\epsilon_0)$ except for a countable number of points. If $\phi^{L,*}_{\epsilon}$ is the optimizer for $D_{f}^{\Gamma_L}(P+\epsilon \rho\|Q)$ we have, using the same argument as before,  
\begin{equation}
    D_{f}^{\Gamma_L}(P+(\epsilon+\delta) \rho\|Q)-D_{f}^{\Gamma_L}(P+\epsilon\rho\|Q)\geq \delta\int \phi_{\epsilon}^{L,*}d\rho
\end{equation}
\begin{equation}
    D_{f}^{\Gamma_L}(P+(\epsilon-\delta) \rho\|Q)-D_{f}^{\Gamma_L}(P+\epsilon\rho\|Q)\geq -\delta\int \phi_{\epsilon}^{L,*}d\rho
\end{equation}
If $F$ is differentiable at $\epsilon$ this implies that   
\begin{eqnarray}
\int \phi_{\epsilon}^{L,*}d\rho&\leq&\lim_{\delta\to0}\frac{1}{\delta}\left(D_{f}^{\Gamma_L}(P+(\epsilon+\delta) \rho\|Q)-D_{f}^{\Gamma_L}(P+\epsilon\rho\|Q)\right)\nonumber=F'(\epsilon)\\
&=&\lim_{\delta\to0}\frac{1}{\delta}\left(D_{f}^{\Gamma_L}(P+\epsilon\rho\|Q)-D_{f}^{\Gamma_L}(P+(\epsilon-\delta) \rho\|Q)\right)\leq \int  \phi_{\epsilon}^{L,*}d\rho\,.
\end{eqnarray}
Consequently,
\begin{equation}\label{eq:F'}
    F'(\epsilon)=\int  \phi_{\epsilon}^{L,*}d\rho\,.
\end{equation}
Let $F_{+}'(0)$ be the right derivative at $\epsilon=0$, i.e. $F_{+}'(0)=\lim_{\epsilon\to 0^+}\frac{1}{\epsilon}\left(F(\epsilon)-F(0)\right)$. By convexity, for any sequence $\epsilon_{n}$ such that $F$ is differentiable at $\epsilon_n$ and $\epsilon_n \searrow 0$, we have
\[
F_{+}'(0)=\lim_{n\to\infty}F'(\epsilon_{n})=\lim_{n\to\infty}\int  \phi_{\epsilon_n}^{L,*}d\rho\, .
\]
We write  $\mathbb{R}^d=\cup_{m\in\mathbb{N}} K_m$ with $K_m\subset\mathbb{R}^d$ being compact set and $K_m\subset K_{m+1}$.  The optimizer $\phi_{\epsilon_n}^{L,*}$  are unique up to constant which we choose now such that $\phi_{\epsilon_n}^{L,*}(0)=0$. 
The Lipschitz condition implies that the sequence $\phi_{\epsilon_n}^{L,*}$ is equibounded and equicontinuous on $K_m$.  By the Arzel\`a-Ascoli
theorem, there exists a subsequence of 
$\phi_{\epsilon_n}^{L,*}$ that converges uniformly in
$K_m$. Using diagonal argument, by taking subsequences sequentially along $\{K_m\}_{m\in\mathbb{N}}$ we conclude there exists a subsequence 
such that $\phi_{\epsilon_{n_k}}^{L,*}$ converges uniformly in any $K_m$ and thus $\phi_{\epsilon_{n_k}}^{L,*}$ converges pointwise in $\mathbb{R}^d$. Let $\phi_0^{L,*}\in {\rm{Lip}}^L(\mathbb{R}^d)$ be the limit and for simplicity we also denote by $\phi_{\epsilon_n}^{L,*}$ the convergent subsequence.  The choice  $\phi_{\epsilon_n}^{L,*}(0)=0$ and the Lipschitz condition implies that 
$|\phi^{L,*}_{\epsilon_n}(x)|\leq L|x|$
which is integrable with respect to $\rho$ since $\rho_{\pm}\in \mathcal{P}_{1}(X)$.
Thus by dominated convergence 
\[
F_{+}'(0)=\lim_{n\to\infty}\int  \phi_{\epsilon_n}^{L,*}d\rho= \int \phi_0^* d\rho\, .
\]
By the lower semicontinuity of $D_f^{\Gamma_L}(\cdot\|Q)$, see \cref{thm:fgdivergence_properties:appendix}, we have 
\begin{equation}\label{eq:dfeq}
\begin{aligned}   
D_f^{\Gamma_L}(P\|Q)&\leq\liminf_{n\to\infty}D_f^{\Gamma_L}(P+\epsilon_n\rho\|Q) \nonumber\\
  &= \liminf_{n\to\infty}\left\{E_{P+\epsilon_n \rho}[\phi_{\epsilon_n}^{L,*}]-\inf_{\nu \in \mathbb{R}}\left\{ \nu + E_{Q}[f^*(\phi_{\epsilon_n}^{L,*} -\nu)]\right\}\right\}\nonumber\\
  &=\liminf_{n\to\infty}E_{P+\epsilon_n \rho}[\phi_{\epsilon_n}^{L,*}]-\limsup_{n\to\infty}\inf_{\nu \in \mathbb{R}}\left\{ \nu + E_{Q}[f^*(\phi_{\epsilon_n}^{L,*} -\nu)]\right\}\nonumber\\
  &\leq E_{P}[\phi_{0}^{L,*}]-\inf_{\nu \in \mathbb{R}}\left\{ \nu + E_{Q}[f^*(\phi_{0}^{L,*} -\nu)]\right\} \leq D_f^{\Gamma_L}(P\|Q)
\end{aligned}
\end{equation}
where for the second inequality we use the dominated convergence theorem, \cref{eq:F'} and that by Fatou's lemma, (using that $f^*(x)\ge x$
and that $|\phi^{L,*}_{\epsilon_n}(x)|\leq L|x|$),
\[
\limsup_{n\to\infty}\int  f^*(\phi_{\epsilon_n}^{L,*})dQ\geq\liminf_{n\to\infty}\int  f^*(\phi_{\epsilon_n}^{L,*})dQ\geq\int  f^*(\phi_{0}^{L,*})dQ\,.
\]
From  \eqref{eq:dfeq} we conclude that $\phi_{0}^{L,*}$ must be an optimizer, and thus $\phi_{0}^{L,*}(x) = \phi^{L,*}(x)$, $P$ a.s., 
and $\phi_{0}^{L,*} (x) \le \phi^{L,*}(x)$ for all $x$ (see \Cref{lemma}).
Using that $\rho_{-}$ is absolutely continuous with respect to $P$ we have then 
\begin{equation}\label{eq:mainthm: Maximal_Lip_extension}
    F_{+}'(0)= \int \phi_0^{L, *} d\rho 
=\int \phi_0^{L, *} d\rho_+ -\int \phi_0^{L, *} d\rho_- =\int \phi_0^{L, *} d\rho_+ -\int \phi^{L, *} d\rho_- \leq \int \phi^{L, *} d\rho.
\end{equation}
Combining with \cref{eq:leftderivative} implies that $F_{+}'(0)=\int \phi^{L, *} d\rho$.  
\end{proof}

\begin{remark}[Algorithmic perspectives and related results]\label{rem:first.variation} {\rm
   The statement and the proof of \Cref{thm:first variation} contain certain key algorithmic elements that will become relevant in later sections: \textbf{(a)} A version of  \Cref{thm:first variation} was proved   in \cite{dupuis2022formulation} for the special case of KL divergence. In \Cref{thm:first variation} our results are proved for  general  $f$-divergences. This  generality is necessary in generative modeling based on both past experience in GANs \cite{f-GAN,LS_GAN,birrell2022f, birrell2022structure}, as well as the demonstration examples with heavy tails considered here.
    \textbf{(b)} In \Cref{thm:first variation}, the maximizer  $\phi^{L,*} \in \Gamma_L$ defined on  $\textrm{supp}(P)\cup \textrm{supp}(Q)$, is maximally extended as an $L$-Lipschitz function to all of $\mathbb{R}^d$, see \Cref{lemma}. Notice that in our algorithms in \Cref{sec:GPA},  we  also allow for 
    $L$-Lipschitz extensions which  are constructed algorithmically simply by optimization in the space of $L$-Lipschitz neural networks, see \cref{algo:gpa}.
    \textbf{(c)} The derived (not assumed!) absolute continuity of the perturbation $\rho$ in \cref{eq:perturb:abs_contin}, captures  some important intuition about the nature of $P+\epsilon \rho$ when $P$ is an empirical measure, e.g. when  it is built from particles as in \cref{algo:gpa}: in this perturbation, existing particles can be removed from $P$ according to $\rho_{-}$, corresponding to  the absolute continuity \cref{eq:perturb:abs_contin}, while new particles can be created anywhere according to $\rho_+$, the latter not requiring absolute continuity. These perturbations/variations  of empirical measures are precisely the ones arising in the particle algorithm \cref{eq:GPA:sec3}.}
\end{remark}

Using  \Cref{thm:first variation} we can now rewrite  \cref{eq:fgdivergence:gradflow:sec2} as a
 \textit{transport/variational} PDE,
\begin{equation}
\begin{aligned}
     & \partial_{t}P_t +{\rm div}(P_t v_t^L) = 0 \, , 
     \quad P_{0}=P\in \mathcal{P}_1(\mathbb{R}^d)\, ,\label{eq:transport:variational:pde:sec2}\\ 
     & v_t^L= -\nabla \phi_t^{L,*} \, , \quad \phi_t^{L,*} = \underset{\phi\in \Gamma_L}{\rm argmax}  \left\{E_{P_t}[\phi]- \inf_{\nu \in \mathbb{R}}(\nu + E_{Q}[f^*(\phi-\nu)])\right\}\, . 
     \end{aligned}
\end{equation}  
The transport/variational reformulation \cref{eq:transport:variational:pde:sec2} is the starting point for developing our generative particle algorithms in \Cref{sec:GPA} based on data, when $P$ and $Q$ are replaced by their empirical measures $\hat{P}^M$, $\hat{Q}^N$
based on $M$ and $N$ i.i.d. samples respectively.  Furthermore, \cref{eq:transport:variational:pde:sec2} provides a numerical stability  perspective on the  Lipschitz regularization \cref{eq:fgdivergence:gradflow:sec2} 
In particular, the Lipschitz condition on $\phi \in \Gamma_L$ enforces a finite speed of propagation of at most $L$ in the transport equation in \cref{eq:transport:variational:pde:sec2}. This is in sharp contrast with the FP equation \cref{eq:gradflow:FPE:KL},
which is a diffusion equation and has infinite speed of propagation. We refer to \Cref{sec:example:cfllipschitzregularization} for connections to the Courant, Friedrichs, and Lewy (CFL) stability condition. 

The gradient flow structure of \cref{eq:fgdivergence:gradflow:sec2} is reflected in dissipation estimates, namely an equation for the rate of change (dissipation)
of the divergence along smooth solutions $P_t$  of \cref{eq:fgdivergence:gradflow:sec2}.
\begin{theorem}[Lipschitz-regularized dissipation]\label{thm:dissipation}
Along a trajectory of a smooth solution $\{P_t\}_{t\geq 0}$ of \cref{eq:transport:variational:pde:sec2} with source probability  $P_0=P$ we have the  rate of decay identity 
\begin{equation}\label{dissipation}
    \frac{d}{dt}D_{f}^{\Gamma_L}(P_t\|Q)=-I_f^{\Gamma_L}(P_t\|Q) \le 0
  \end{equation}
  where we define the Lipschitz-regularized Fisher Information as 
  \begin{equation}\label{eq:fisher_info}
   I_f^{\Gamma_L}(P_t\|Q)= E_{P_t}\left[|\nabla \phi^{L,*}|^2 \right] \,.
    \end{equation}
Consequently, for any $T\geq 0$, we have
$D_{f}^{\Gamma_L}(P_T\|Q) = D_{f}^{\Gamma_L}(P\|Q)-\int_{0}^{T} I_f^{\Gamma_L}(P_s\|Q)ds\,$. 
\end{theorem}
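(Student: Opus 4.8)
The plan is to obtain the dissipation identity by combining the first-variation formula of \Cref{thm:first variation} with the transport equation \cref{eq:transport:variational:pde:sec2} through a chain rule in probability space, and then performing one integration by parts; the sign is then automatic because the result is a square.

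First I would establish the chain rule
\[
\frac{d}{dt} D_{f}^{\Gamma_L}(P_t\|Q) \;=\; \int \phi_t^{L,*}\, d(\partial_t P_t)\, ,
\]
which is just \Cref{thm:first variation} — the first variation of $D_{f}^{\Gamma_L}(\cdot\|Q)$ is $\phi^{L,*}$, see \cref{eq: first variation} — applied along the curve $t\mapsto P_t$. Concretely, for a smooth solution $\partial_t P_t$ is a signed measure of total mass zero with $P_t+\epsilon\,\partial_t P_t\in\mathcal{P}_1(\mathbb{R}^d)$ for all small $|\epsilon|$; normalizing $\pm\partial_t P_t$ into unit-mass differences of mutually singular probability measures and invoking \Cref{thm:first variation} with $P=P_t$ gives the one-sided directional derivatives of $\epsilon\mapsto D_f^{\Gamma_L}(P_t+\epsilon\,\partial_t P_t\|Q)$. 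Since $P_{t\pm h}=P_t\pm h\,\partial_t P_t+o(h)$ and $\phi_t^{L,*}$ has at most linear growth (\Cref{lemma}) while $P_s\in\mathcal{P}_1(\mathbb{R}^d)$ along the flow, the $o(h)$ remainder contributes $o(h)$ to $\int\phi_t^{L,*}\,dP_{t\pm h}$, so the forward and backward difference quotients match and the full derivative equals $\int\phi_t^{L,*}\,d(\partial_t P_t)$.

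Next I would insert the PDE. From \cref{eq:transport:variational:pde:sec2}, $\partial_t P_t = -{\rm div}(P_t v_t^L) = {\rm div}(P_t\nabla\phi_t^{L,*})$, so substituting into the chain rule identity and integrating by parts,
\[
\frac{d}{dt} D_{f}^{\Gamma_L}(P_t\|Q) \;=\; \int \phi_t^{L,*}\,{\rm div}\!\big(P_t\nabla\phi_t^{L,*}\big) \;=\; -\int \nabla\phi_t^{L,*}\cdot\nabla\phi_t^{L,*}\, dP_t \;=\; -E_{P_t}\!\big[\,|\nabla\phi_t^{L,*}|^2\,\big]\, .
\]
There are no boundary terms: $\nabla\phi_t^{L,*}$ is bounded by $L$ and, for a smooth solution with the decay afforded by $P_t\in\mathcal{P}_1(\mathbb{R}^d)$, the flux $P_t\nabla\phi_t^{L,*}$ vanishes at infinity; equivalently one uses the weak form of \cref{eq:transport:variational:pde:sec2} tested against $\phi_t^{L,*}$. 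The right-hand side is $-I_f^{\Gamma_L}(P_t\|Q)$ by definition \cref{eq:fisher_info} and is manifestly $\le 0$, which is \cref{dissipation}. The integrated statement $D_{f}^{\Gamma_L}(P_T\|Q) = D_{f}^{\Gamma_L}(P\|Q)-\int_0^T I_f^{\Gamma_L}(P_s\|Q)\,ds$ then follows by integrating in $t$ over $[0,T]$, using that $t\mapsto I_f^{\Gamma_L}(P_t\|Q)$ is continuous along a smooth solution.

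The hard part will be the chain rule step: $D_{f}^{\Gamma_L}(P_t\|Q)$ depends on $t$ both explicitly, through the measure slot, and implicitly, through the maximizer $\phi_t^{L,*}$, and one must argue that the implicit dependence drops out — an envelope/Danskin phenomenon. \Cref{thm:first variation} already performs exactly this computation for linear perturbations $P+\epsilon\rho$; what remains is to upgrade its one-sided directional derivatives to a genuine two-sided derivative along the curve $t\mapsto P_t$, which is precisely where the ``smooth solution'' hypothesis and the linear-growth bound on $\phi_t^{L,*}$ (needed to dominate the $o(h)$ error inside the integral against $P_s\in\mathcal{P}_1(\mathbb{R}^d)$) enter. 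The integration by parts is routine given enough regularity but is the step at which the weak interpretation of the PDE must be invoked when $P_t$ fails to be classically smooth.
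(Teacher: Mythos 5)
Your proposal is correct and follows essentially the same route as the paper's proof: a chain rule via the first-variation formula of \Cref{thm:first variation}, substitution of the transport PDE, and one integration by parts (divergence theorem with vanishing boundary terms) to obtain $-E_{P_t}[|\nabla\phi_t^{L,*}|^2]$. The only difference is that you spell out the envelope-type justification of the chain rule and the decay of the flux at infinity, which the paper compresses into the phrase ``assuming sufficient smoothness.''
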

The proof can be found in \ref{subsec:appendix:dissipation}.
For the generative particle algorithms of \Cref{sec:GPA} the Lipschitz-regularized Fisher Information will  be interpreted as the total kinetic energy of the particles \cref{eq:Fisher_on_particles}.

\begin{remark}[Formal asymptotics of Lipschitz-regularized gradient flows]\label{rem:limit.pde}{\rm 
 The rigorous ($L\to \infty$)-asymptotic results of the limit of the Lipschitz-regularized $f$-divergences  to (un-regularized) $f$-divergences presented in \cite{dupuis2022formulation, birrell2022f} (see also \cref{thm:fgdivergence_properties:appendix}),
motivates a discussion  on the formal asymptotics  of the corresponding gradient flows. In particular, the Lipschitz-regularization  $L \to \infty$ asymptotics towards  the (unregularized) gradient flows
can be formally obtained as  the limit of the transport/variational PDEs \cref{eq:transport:variational:pde:sec2},
i.e., 
 \begin{equation}\label{eq: f lip L}
    \underbrace{\partial_{t}P_t = \text{div}\left(P_t\nabla \phi_t^{L,*}\right)}_{\textrm{Lip. regularized $f$-divergence flow}} \quad \quad \underset{L \to \infty}{\longrightarrow} \quad \quad 
    \underbrace{ \partial_{t}P_t = \text{div}\left(P_t\nabla \phi_t^{*}\right)}_{\textrm{$f$-divergence flow}},\;  {\rm where}\; \phi_t^*=f'\left(\frac{dP_t}{dQ}\right)
 \end{equation}
\noindent When  $p_t$, $q$ are the probability densities of $P_t$ and $Q$ respectively, and  $f(x)=f_{\rm{KL}}(x)=x\log(x)$ and $f_\alpha(x)=\frac{x^\alpha-1}{\alpha(\alpha-1)}$, the Lipschitz regularized $f$-divergence flow in \cref{eq: f lip L}
converges to  the classical Fokker-Planck equation given by $\partial_{t}p_t = \text{div}\left(p_t\nabla \log\left(\frac{p_t}{q}\right)\right)$  and Weighted Porous Medium equation given by  $\partial_{t}p_t = \frac{1}{\alpha -1}\text{div}\left(p_t\nabla 
 \left(\frac{p_t}{q}\right)^{\alpha-1}\right)$ respectively.}
 Similarly, when $f=f_{\rm{KL}}$, as $L \to \infty$, we formally recover from \cref{eq:fisher_info} the  usual Fisher information 
$
I_f^{\Gamma}(P\|Q)=E_{P}\left[|\nabla \log\left(\frac{p}{q}\right)|^2\right]
$.
%


\paragraph{Some PDE questions for Lipschitz-regularization}
A rigorous analysis encompassing aspects such as well-posedness, stability, regularity, and convergence to equilibrium $Q$, remains to be explored. {For example,  the DiPerna-Lions theory \cite{ambrosio2017lecture,diperna1989ordinary}} for transport equations with rough velocity fields and its more recent variants could be useful for proving well-posedness.
Additionally, functional inequalities tailored for porous medium and Fokker-Planck equations contribute to proving   convergence of a PDE to its equilibrium such as exponential or polynomial convergence.  Classical examples of such inequalities are Poincar\'e and Logarithmic Sobolev-type inequalities, and generalizations thereof for Fokker-Planck and porous medium equations \cite{ambrosio2005gradient,OTTO2000361,
 dolbeault2007lq}. However, convergence of the new class of PDE gradient flows  \cref{eq:fgdivergence:gradflow:sec2} to their equilibrium states, will require new functional inequalities entailing the Lipschitz-regularized Fisher Information and probability measures $Q$ which may not have densities.

\end{remark}

\section{Generative Particle Algorithms} 
\label{sec:GPA}

\begin{algorithm}[t]
\caption{[$(f, \Gamma_L)$-GPA] Lipschitz regularized generative particles algorithm}
\label{algo:gpa}
\begin{algorithmic}[1]
\Require{$f$ for the choice of $f$-divergence and its Legendre conjugate $f^*$, $L$: Lipschitz constant, 
$n_\text{max}$: number of updates for the particles, $\Delta t$: time step size, $M$: number of initial particles, $N$: number of target particles}
\Require{$W = \{W^{l}\}_{l=1}^D$: parameters for the NN $\phi: \mathbb{R}^d \rightarrow \mathbb{R}$, $D$: depth of the NN, $\delta$: learning rate of the NN, $m_{\text{max}}$: number of updates for the NN.}
\Ensure{\{$Y_{n_\text{max}}^{(i)}\}_{i=1}^M$}
\State{Sample $\{Y_0^{(i)}\}_{i=1}^M\sim P_0=P$, a batch of prior samples}
\State{Sample $\{X^{(j)}\}_{j=1}^N \sim Q$, a batch from the real data}
\State{Initialize $\nu \leftarrow 0$}
\State{Initialize $W$ randomly and $W^l \leftarrow L^{1/D} * W^l/\|W^l\|_2$, $l=1,\cdots,D$ \;} \Comment{$\phi_0^{L}(\cdot; W) \in \Gamma_L$}
\For{$n=0$ {\bf to} $(n_\text{max}-1)$}
 \For{$m=0$ {\bf to} $m_{\text{max}}-1$}
 \State{$grad_{W,\nu}  \leftarrow \nabla_{W, \nu} \left[M^{-1}\sum_{i=1}^M\phi_n^L(Y_n^{(i)}; W) - N^{-1} \sum_{j=1}^N f^*(\phi_n^L(X^{(j)}; W) -\nu) + \nu \right]$}
 \State{$(\nu, W) \leftarrow (\nu, W) + \delta \cdot optimizer(grad_\nu, grad_W)$}
 \State{$W^l \leftarrow L^{1/D} * W^l/\|W^l\|_2$ , $l=1,\cdots,D$}  
 \EndFor \Comment{$\phi_n^{L,*}(\cdot; W) \in \Gamma_L$}
 \State{$Y_{n+1}^{(i)} \leftarrow Y_{n}^{(i)} - \Delta t \nabla\phi_n^{L,*}(Y_n^{(i)}; W),\;\; i=1,\cdots, M\, $} \Comment{forward Euler}

  \EndFor
  \newline
{\rm $L$-Lipschitz continuity is imposed by $W^l \leftarrow L^{1/D} * W^l/\|W^l\|_2$, $l=1,\cdots, D$.}
\end{algorithmic}
\end{algorithm}

In this section we build a numerical algorithm to solve the transport/discriminator gradient flow \cref{eq:transport:variational:pde:sec2} when $N$ i.i.d. samples from the target distribution $Q$ are given. We first discretize the system  in time using a forward-Euler scheme,
\begin{equation}\label{eq:scheme:1:sec3}
\begin{aligned}
    P_{n+1}&=\left(I-\Delta t\nabla \phi_n^{L,*} \right)_{\#}P_n,\;\;\;\mathrm{where}\,  P_0=P \\
    \phi_n^{L,*}&=\underset{\phi\in\Gamma_L}{\argmax}\left\{E_{P_n}[\phi]- \inf_{\nu \in \mathbb{R}} \left\{ \nu + E_{Q}[f^*(\phi-\nu)]\right\}\right\}\, .
    \end{aligned}
\end{equation}
Here, the pushforward measure for a map $T:\mathbb{R}^d \to \mathbb{R}^d$  and $P \in \mathcal{P}(\mathbb{R}^d)$  is denoted by $T_{\#}P$ (i.e. $T_{\#}P(A)=P(T^{-1}(A)$).
Next, given $N$ i.i.d. samples $\{X^{(i)}\}_{i=1}^{N}$ from the target distribution $Q$, we consider the empirical measure ${\hat{Q}}^N=N^{-1}\sum_{i=1}^{N}\delta_{X^{(i)}}$. Likewise, given $M$ i.i.d. samples $\{Y_0^{(i)}\}_{i=1}^{M}$ from a known initial (source) probability measure $P$  and consider the empirical measure $ {\hat{P}}^M=M^{-1}\sum_{i=1}^{M}\delta_{Y_0^{(i)}}$. By replacing the measures $P$ and $Q$ in \cref{eq:scheme:1:sec3} by their empirical measures ${\hat{P}}^M$ and ${\hat{Q}}^N$ we obtain the following particle system. 
\begin{equation}
\begin{aligned}
\label{eq:GPA:sec3}
    Y_{n+1}^{(i)}&=Y_n^{(i)}-\Delta t\nabla {\phi}_n^{L, *}(Y_n^{(i)})\,, \quad Y^{(i)}_0=Y^{(i)}\, , \, Y^{(i)} \sim P\, , \quad i=1,..., M
    \\ 
   {\phi}_n^{L,*}&=\underset{\phi\in \Gamma_L^{NN}}{\rm argmax} \left\{\frac{\sum_{i=1}^{M}\phi(Y_n^{(i)})}{M}- \inf_{\nu \in \mathbb{R}}\left\{ \nu + \frac{\sum_{i=1}^{N}f^*(\phi(X^{(i)})-\nu)}{N}\right\}\right\} \, ,
  \end{aligned}
\end{equation}
where the function space $\Gamma_L$ in \cref{eq:scheme:1:sec3} is approximated by a space of neural network (NN) approximations $\Gamma_L^{NN}$. 
We will refer to  this particle algorithm  as  $(f, \Gamma_L)$-GPA or simply GPA.
The transport mechanism given by \cref{eq:GPA:sec3} corresponds to a linear transport PDE in \cref{eq:transport:variational:pde:sec2}.  However, between particles nonlinear interactions   are introduced via the discriminator ${\phi}_n^{L,*}$ which in turn depends on all particles in \cref{eq:GPA:sec3} at step $n$ of the algorithm, namely the generated particles $(Y_n^{(i)})_{i=1}^M$, as well as the  ``target" particles $(X^{(i)})_{i=1}^N$.
Notice that  
 $\phi_n^{L,*}$ discriminates the generated samples  at time $n$  from  the target data  using the second equation of \cref{eq:GPA:sec3}, and is not directly using  the generated data of the previous steps  up to step $n-1$. 
Moreover the gradient of the discriminator is computed only at the positions of the particles.

Overall, \cref{eq:GPA:sec3}
is  an approximation scheme of the Lagrangian formulation \cref{eq:gradient_flow:particles:lagrangian} of the Lipschitz-regularized gradient flow \cref{eq:fgdivergence:gradflow}, where we have (a) discretized time, (b) approximated the function space $\Gamma_L$ in terms of neural networks, and (c) used empirical distributions/particles to build approximations of  the target $Q$, (d) used gradient-based optimization methods to approximate the discriminator ${\phi}_n^{L,*}$ such as stochastic gradient descent or the Adam optimizer.
All these elements are combined in \cref{algo:gpa}.

\begin{remark}[Lipschitz regularization for GPA] 
{\rm Lipschitz regularized $f$-divergences are practically advantageous since they allow to calculate divergences between arbitrary empirical measures with non-overlapping supports. Indeed, given a Lipschitz constant $L$, the $L$-Lipschitz regularized $f$-divergence is bounded by $L$ times the 1-Wasserstein metric as stated in \cref{eq:fgdivergence:bounds}
 and  discussed in more detail in \cite{birrell2022f}. 
Therefore a suitable choice of $L$ depending on  data offers numerical tractability for the particle system in \cref{eq:GPA:sec3} and \cref{algo:gpa}.  Without proper Lipschitz regularization, GPA diverges or produces inaccurate solutions as illustrated in \Cref{fig: Mixture_of_gaussians_2D - KL Lip GPA}. 
In our implementation, the Lipschitz regularization is enforced 
via  Spectral Normalization (SN) for neural networks, \cite{Miyato_spectral_norm}.
Despite its clear numerical benefits, SN incurs a relatively modest computational cost. 
Applying SN in an experiment leads to a 10\% increase in computational time compared to a non-regularized counterpart. Another way to impose Lipschitz regularization for neural networks is to add a gradient penalty to the loss \cite{GP-WGAN, birrell2022f}.}

\end{remark}

\begin{remark}[Improved accuracy and higher-order 
schemes]
{\rm 
Replacing  the forward Euler in \cref{eq:GPA:sec3} or Line 10 in \cref{algo:gpa} with Heun's predictor/corrector method is observed to lead to a significant improvement in the accuracy of the GPA for several examples, see for instance \Cref{fig:12D - 2D gaussian mixtures}. In addition, adopting a smaller $\Delta t$ in \cref{eq:GPA:sec3} and \cref{algo:gpa} may contribute  to enhanced accuracy in GPA outcomes.    Employing a smaller $\Delta t$ often requires a smoother discriminator, achieved by substituting the ReLU activation function with a smoothed ReLU.
We refer to \ref{subsec:settings:improve:accuracy} for details. 
}
\end{remark}

\paragraph{GPA kinetic energy and  Lipschitz-regularized Fisher Information}
 \Cref{thm:dissipation} suggests the empirical  Lipschitz-regularized Fisher Information,  
\begin{equation}\label{eq:Fisher_on_particles}
    I_f^{\Gamma_L}(\hat{P}_n^M\|\hat{Q}^N)=\int|\nabla {\phi}_n^{L, *}|^2 \hat{P}_n^M(dx)=\frac{1}{M}\sum_{i=1}^M |\nabla {\phi}_n^{L,*}(Y^{(i)}_n)|^2\, ,
\end{equation}
as a quantity of interest to monitor the convergence of GPA \cref{eq:GPA:sec3}. 
Here $\hat{P}_n^M$ denotes the empirical distribution of the generative particles $(Y_n^{(i)})_{i=1}^M$.
Indeed,  $I_f^{\Gamma_L}(\hat{P}_n^M\|\hat{Q}^N)$ is  the total kinetic energy of the generative particles since $\nabla {\phi}_n^{L,*}(Y^{(i)}_n)$ is the velocity of the $i^{th}$ particle at time step $n$. The algorithm will stop when the total kinetic energy $I_f^{\Gamma_L}(\hat{P}_n^M\|\hat{Q}^N)\approx 0$. 

Overall, 
\cref{algo:gpa} estimates two natural quantities of interest: the Lipschitz regularized $f$-divergence $M^{-1}\sum_{i=1}^M\phi_n^{L,*}(Y_n^{(i)}; W) - N^{-1} \sum_{j=1}^N f^*(\phi_n^{L,*}(X^{(j)}; W) -\nu^*) + \nu^*$ and the Lipschitz regularized Fisher information \cref{eq:Fisher_on_particles}. 
These quantities 
are used to track the progress and  terminate the simulations.


\section{Generalization properties of GPA}\label{sec:generalization-overfit}
{ The transport/discriminator formulation in \cref{eq:scheme:1:sec3} is the core mechanism in GPA, facilitating sample generation by transporting particles through time-dependent vector fields obtained by iteratively solving \cref{eq:GPA:sec3} over time. Ensuring the diversity of generated samples and avoiding ``memorization" of the target data,  is a critical challenge in generative modeling, as discussed extensively in recent publications, for instance in the context of  diffusion models,
\cite{pidstrigach2022scorebased,
somepalli2023diffusion,somepalli2023understanding,gu2023memorization,li2024good,carlini2023extracting}, including empirical \cite{somepalli2023understanding} and theory-based mitigation strategies \cite{zhang2024wasserstein}.  In GPA  as well, there is the theoretical possibility,  based on the gradient flow dynamics and the dissipation estimate in \Cref{thm:dissipation}, that with a rich enough neural network to learn the discriminator, suitable learning rates, and and long enough runs, \Cref{algo:gpa} may reproduce the empirical distribution of the target data, especially when $M=N$. This phenomenon can be observed for the MNIST data set in \Cref{fig:mnist_rich_nn}}. 
To mitigate these challenges and ensure better generalization for the proposed GPA algorithms,  we explore three distinct strategies: 
\begin{enumerate}
\item {\it From training particles to generated particles.}
In this approach  we use $M$ training particles from an initial distribution $P_0$ and $N$ target particles to learn the time-dependent vector fields given by  \Cref{algo:gpa}.  This vector field is  constructed as a neural network  on the \textit{entire} space. Therefore,
we can transport (e.g. simultaneously) any additional number of particles sampled from $P_0$ using this, already learned, vector field. We refer to the latter type of particles as ``generated particles".
See \Cref{fig:mnist_different_method} and \Cref{fig:mnist:scalability} for practical demonstrations of such generated particles.

This approach which is  based on learning a time-dependent vector field aligns with other flow-based generative models such as 
score-based generative models (SGM) \cite{Song2021ScoreBasedGM}, 
and normalizing flows \cite{NEURALODE}. 
However the latter methods are more efficient in learning their time-dependent vector field by employing a corresponding space/time objective functional. We believe that a similar formulation can be built for GPA, by using the mean-field game functionals for Wasserstein gradient flows in \cite{zhang2023meanfield}. We plan to explore this space/time approach in a follow-up work.

\item {\it Imbalanced sample sizes.} In this strategy we choose  $M \gg N$ in \Cref{algo:gpa}. First, we empirically found strong evidence of overfitting and memorization in the $M = N$ case, i.e. 
    training particles eventually match  the target particles. 
    However,  in the setting of the imbalanced sample sizes $M \gg N$
    particles maintain their  sample  diversity. See \Cref{fig:mnist_overfitting}. 
    These different behaviors are captured and quantified by the two estimators  (divergence and kinetic energy) in \Cref{algo:gpa},  compare  the findings in parts (c, e) of \Cref{fig:mnist_overfitting}.

\item {\it GPA for data augmentation.} Lastly, we demonstrate that GPA can serve as a data augmentation tool to train other generative models 
particularly those requiring large sample sizes. For instance, the examples in \Cref{fig:swissroll} and \Cref{fig:mnist:augmentation} showcase the effectiveness of GPA-based data augmentation for GANs.
\end{enumerate}

Overall, GPA learns from target data and training particles,  a time-dependent vector field represented by Lipschitz neural networks  defined on the entire space. In this sense, GPA is expected to gain in \textit{extrapolation} properties since the learned vector field can be used to move arbitrary new particles towards the target data.

\section{Data Processing Inequality and latent space GPA
}
\label{sec:dpi:latentGPA}


Performance degradation is a common challenge for all generative models in high-dimensional settings, a problem that becomes more pronounced in regimes with low sample sizes. For GPA, the optimization of the discriminator within the neural network space exhibits superior scalability, particularly in regimes of  hundreds of dimensions, compared to optimization in RKHS which typically performs well in lower dimensions. 
However, similarly to other neural-based generative models,  GPA faces challenges in really high dimensional problems. 
To overcome this type of scalability constraints, we can take advantage of latent space formulations used in recent papers in generative flows, e.g.  \cite{vahdat2021score,
RombachDiffusionLatent,onken2021otflow}, to complement and scale-up score-based models, diffusion models  and normalizing flows. 
The key idea is simple and powerful as demonstrated in these earlier works: a pre-trained auto-encoder first projects the high-dimensional real space to a lower dimensional latent space and then a generative model is trained in the compressed latent space. Subsequently, the decoder of the auto-encoder allows to map  the data generated in the latent space back to the original high-dimensional space. 

In \Cref{thm:reconstructed_variable_converges_main}, we demonstrate that operating in the latent space can be understood in light of a suitable Data Processing Inequality (DPI) and we provide conditions which guarantee that the error induced by the transportation of a high-dimensional data distribution via combined encoding/decoding and particle transportation  in a  lower dimensional latent space  is  controlled by the error only in the (much more tractable) latent space.
More specifically, we consider the following mathematical setting: i) a probability $Q=Q^\mathcal{Y}$, defined on the original, high dimensional space $\mathcal{Y}$, typically   supported on some low dimensional set  $S \subset \mathcal{Y}=\mathbb{R}^d$; ii)  an  encoder map $\mathcal{E}:  \mathcal{Y} \rightarrow \mathcal{Z}$ where  $\mathcal{Z} \subset \mathbb{R}^{d'}$, $d'<d$ and a decoder map $\mathcal{D}: \mathcal{Z} \rightarrow \mathcal{Y}$  which are invertible in $S$, i.e.
$\mathcal{D} \circ \mathcal{E} (S) = S $.
Let $\mathcal{E}_\# Q^\mathcal{Y}$ denote the image of the measure $Q^\mathcal{Y}$ by the map $\mathcal{E}$, i.e. for 
$A\subset \mathcal{Z}$,  $\mathcal{E}_\# Q^\mathcal{Y}(A):= Q^\mathcal{Y}(\mathcal{E}^{-1}(A))$. Similarly we define $\mathcal{D}_\# P^\mathcal{Z}$ as the combination of the encoding/decoding and particle transportation $\mathcal{T}^n$ in a lower dimensional latent space where
$
P^\mathcal{Z}:=\mathcal{T}^n_{\#}\mathcal{E}_{\#}P_0\,$.
The fidelity of the  approximation $Q^{\mathcal{Y}}\approx \mathcal{D}_\# P^\mathcal{Z}$ of the target measure $Q^{\mathcal{Y}}$ in the original space $\mathcal{Y}$ will be then guaranteed by the  \textit{a posteriori} estimate  in \cref{thm:reconstructed_variable_converges_main}, interpreted  in the sense of numerical analysis, where  the approximation in the compressed latent space $\mathcal{Z}$  bounds the error in the original space $\mathcal{Y}$. Its proof 
is a consequence of  a new, tighter data processing inequality derived in \cite{birrell2022f}, see also \cref{thm: data processing inequality},  that involves both transformation of probabilities and discriminator  space $\Gamma$. 

\begin{theorem}[Autoencoder performance guarantees]
\label{thm:reconstructed_variable_converges_main}
For $Q^\mathcal{Y} \in \mathcal{P}(\mathcal{Y})$, suppose that there is a exact encoder/decoder with  encoder $\mathcal{E}: \mathbb{R}^d \rightarrow \mathbb{R}^{d'}$ and  decoder $\mathcal{D}: \mathbb{R}^{d'} \rightarrow \mathbb{R}^{d}$, where exact means  perfect reconstruction $\mathcal{D}_\# \mathcal{E}_\# Q^\mathcal{Y} = Q^\mathcal{Y}$. 
Furthermore, assume  the decoder is Lipschitz continuous with Lipschitz constant $a_{\mathcal{D}}$.
Then, for any  $P^\mathcal{Z} \in \mathcal{P}_1(\mathcal{Z})$ we have  
\begin{equation}
\label{eq:DPI:aposteriori}
    D_f^{\Gamma_L} ( \mathcal{D}_\# P^\mathcal{Z} \|  Q^\mathcal{Y} ) \leq D_f^{a_\mathcal{D}\Gamma_{L}} (P^{\mathcal{Z}} \| \mathcal{E}_\# Q^\mathcal{Y}).
\end{equation}
\end{theorem}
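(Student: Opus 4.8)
The plan is to derive \cref{eq:DPI:aposteriori} directly from the general data processing inequality for $(f,\Gamma)$-divergences (\cref{thm: data processing inequality}), combined with two elementary facts: post-composition of a discriminator with the $a_\mathcal{D}$-Lipschitz decoder inflates its Lipschitz constant by at most the factor $a_\mathcal{D}$, and $\Gamma\mapsto D_f^\Gamma$ is monotone in $\Gamma$.

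First I would recall the DPI in the form it takes for a deterministic push-forward: for any (Borel) map $g:\mathcal{Z}\to\mathcal{Y}$ and any $\mu,\nu\in\mathcal{P}_1(\mathcal{Z})$,
\[
D_f^{\Gamma}(g_\#\mu\,\|\,g_\#\nu)\;\le\;D_f^{\Gamma\circ g}(\mu\,\|\,\nu),\qquad \Gamma\circ g:=\{\phi\circ g:\phi\in\Gamma\},
\]
which is the specialization of the kernel-version DPI of \cite{birrell2022f} to a degenerate (point-mass) kernel. Applying this with $g=\mathcal{D}$, $\mu=P^\mathcal{Z}$, $\nu=\mathcal{E}_\#Q^\mathcal{Y}$, and $\Gamma=\Gamma_L$ yields
\[
D_f^{\Gamma_L}\!\left(\mathcal{D}_\#P^\mathcal{Z}\,\|\,\mathcal{D}_\#\mathcal{E}_\#Q^\mathcal{Y}\right)\;\le\;D_f^{\Gamma_L\circ\mathcal{D}}\!\left(P^\mathcal{Z}\,\|\,\mathcal{E}_\#Q^\mathcal{Y}\right).
\]
Then I would invoke the exact-reconstruction hypothesis $\mathcal{D}_\#\mathcal{E}_\#Q^\mathcal{Y}=Q^\mathcal{Y}$ to identify the left-hand side with $D_f^{\Gamma_L}(\mathcal{D}_\#P^\mathcal{Z}\|Q^\mathcal{Y})$, which is precisely the quantity we want to bound.

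Next I would bound the right-hand side. If $\phi\in\Gamma_L$ and $\mathcal{D}$ is $a_\mathcal{D}$-Lipschitz, then for all $z_1,z_2$ one has $|\phi(\mathcal{D}(z_1))-\phi(\mathcal{D}(z_2))|\le L\,|\mathcal{D}(z_1)-\mathcal{D}(z_2)|\le L\,a_\mathcal{D}\,|z_1-z_2|$, so $\phi\circ\mathcal{D}\in\Gamma_{L a_\mathcal{D}}=a_\mathcal{D}\Gamma_L$ (using $L\Gamma_1=\Gamma_L$); hence $\Gamma_L\circ\mathcal{D}\subseteq a_\mathcal{D}\Gamma_L$. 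Since the dual representation \cref{eq:fgdivergence:dual} exhibits $D_f^\Gamma$ as a supremum over $\phi\in\Gamma$, enlarging the test-function class can only increase the divergence, so $D_f^{\Gamma_L\circ\mathcal{D}}(P^\mathcal{Z}\|\mathcal{E}_\#Q^\mathcal{Y})\le D_f^{a_\mathcal{D}\Gamma_L}(P^\mathcal{Z}\|\mathcal{E}_\#Q^\mathcal{Y})$. Chaining the two displayed inequalities gives \cref{eq:DPI:aposteriori}.

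The only genuinely delicate point is verifying that the general DPI applies at the level of generality needed here: that it is valid for a purely push-forward kernel with the discriminator class \emph{pulled back} to $\Gamma_L\circ\mathcal{D}$ (rather than held fixed), and \emph{without} any absolute-continuity assumption between $P^\mathcal{Z}$ and $\mathcal{E}_\#Q^\mathcal{Y}$ — which is exactly the regime the Lipschitz regularization is built to handle, since $\mathcal{E}_\#Q^\mathcal{Y}$ may be singular. Everything else is routine: the Lipschitz-constant arithmetic and the monotonicity in $\Gamma$. I would also note in passing that $\mathcal{D}$ being Lipschitz is continuous hence Borel, so $\mathcal{D}_\#P^\mathcal{Z}$ and the compositions $\phi\circ\mathcal{D}$ are well defined, and that $P^\mathcal{Z}\in\mathcal{P}_1(\mathcal{Z})$ together with the Lipschitz bound on $\mathcal{D}$ ensures $\mathcal{D}_\#P^\mathcal{Z}\in\mathcal{P}_1(\mathcal{Y})$, so all the divergences appearing are finite by \cref{eq:fgdivergence:bounds}.
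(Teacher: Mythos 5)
Your proposal is correct and follows essentially the same route as the paper's proof: apply the data processing inequality of \cref{thm: data processing inequality} to the push-forward by $\mathcal{D}$, observe that composing an $L$-Lipschitz discriminator with the $a_\mathcal{D}$-Lipschitz decoder lands in $a_\mathcal{D}\Gamma_L$, and then use perfect reconstruction $\mathcal{D}_\#\mathcal{E}_\#Q^\mathcal{Y}=Q^\mathcal{Y}$ to identify the left-hand side. You merely make explicit two steps the paper leaves implicit — the monotonicity of $\Gamma\mapsto D_f^\Gamma$ needed to pass from $\Gamma_L\circ\mathcal{D}$ to $a_\mathcal{D}\Gamma_L$, and the measurability/integrability bookkeeping — which is a welcome clarification but not a different argument.
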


\begin{proof} \label{proof:latentgpa}
From the data processing inequality \cref{thm: data processing inequality} and using that the composition of Lipschitz functions with Lipschitz constants $L_1, L_2$ is $L_1 L_2$-Lipschitz, we have:
\begin{equation}
    D_f^{\Gamma_L} ( \mathcal{D}_\# P^\mathcal{Z} \| \mathcal{D}_\# \mathcal{E}_\# Q^\mathcal{Y} ) \leq D_f^{a_\mathcal{D}\Gamma_{L}} (P^{\mathcal{Z}} \| \mathcal{E}_\# Q^\mathcal{Y}).
\end{equation}
Since the encoder $\mathcal{E}$ and the decoder $\mathcal{D}$ perfectly reconstruct $Q^{\mathcal{Y}}$, namely $\mathcal{D}_\# \mathcal{E}_\# Q^\mathcal{Y} = Q^\mathcal{Y}$,  we obtain that
\begin{equation}
    D_f^{\Gamma_L} ( \mathcal{D}_\# P^\mathcal{Z} \|  Q^\mathcal{Y} ) \leq D_f^{a_\mathcal{D}\Gamma_{L}} (P^{\mathcal{Z}} \| \mathcal{E}_\# Q^\mathcal{Y}).
\end{equation}
Note also that, if $a_\mathcal{D} \leq 1$,
$
    D_f^{\Gamma_L} ( \mathcal{D}_\# P^\mathcal{Z} \| \mathcal{D}_\# \mathcal{E}_\# Q^\mathcal{Y} ) \leq D_f^{\Gamma_{L}} (P^{\mathcal{Z}} \| \mathcal{E}_\# Q^\mathcal{Y})\, .
$
\end{proof}

 We apply this result in \Cref{sec:example:batch:effects} where 
the merging (transporting) of high-dimensional gene
expression data sets with dimension exceeding 54K in performed in a latent space which is constructed via  Principal Component Analysis (PCA), i.e. a linear auto-encoder.

\begin{remark}[Autoencoder guarantees in generative modeling]
{\rm It is clear that \Cref{thm:reconstructed_variable_converges_main} is a result about autoencoders and it is independent of the choice of any specific transport/generation algorithm in the latent space. In this sense our conclusions from \Cref{thm:reconstructed_variable_converges_main} are   generally applicable to other  latent space methods for generative modeling, such as GANs. }
\end{remark}
%


\section{Lipschitz regularization and numerical stability}
\label{sec:example:cfllipschitzregularization}

In this section, we discuss the numerical stability of GPA induced by Lipschitz regularization. 
The Lipschitz bound $L$ on the discriminator space implies a pointwise bound
$
|\nabla {\phi}_n^{L,*}(Y_n^{(i)})| \le L \, .
$
 Hence the Lipschitz regularization imposes a speed limit $L$ on the particles, ensuring the stability of the algorithm for suitable choices of $L$, as we will discuss next.  

We first illustrate how Lipschitz regularization works in GPA \cref{algo:gpa} in a mixture of 2D Gaussians. 
We explore the influence of the Lipschitz regularization constant $L$ by monitoring the  Lipschitz regularized Fisher information \cref{eq:Fisher_on_particles} (i.e. kinetic energy of particles). In \cref{fig:mixtureofgaussian_ke} we track this quantity in time. We empirically observe that   a proper choice of $L$ enables the particles slow down and eventually stop near the target particles, using \cref{eq:Fisher_on_particles} as a convergence indicator. 
Time trajectories of particles are displayed in \cref{fig: Mixture_of_gaussians_2D - KL Lip GPA}. 
Individual curves in \cref{fig:mixtureofgaussian_ke} result from the Lipschitz regularized $(f_\text{KL}, {\Gamma_L})$-GPA 
with $L=1$, 10, 100, $\infty$. We  fix all other parameters including time step  $\Delta t$, focusing on  the  influence of the Lipschitz constant $L$. 
For $L=1, 10$, the kinetic energy decreases and particles eventually stop. However, without Lipschitz regularization, the particles keep (relatively) high speeds of propagation. 
\Cref{fig: Mixture_of_gaussians_2D - KL Lip GPA} verifies that in this case ($L=\infty$) the algorithm fails to converge.

\begin{figure}[h]
    \centering
    \captionsetup{format=plain}
    \includegraphics[width=0.4\textwidth]{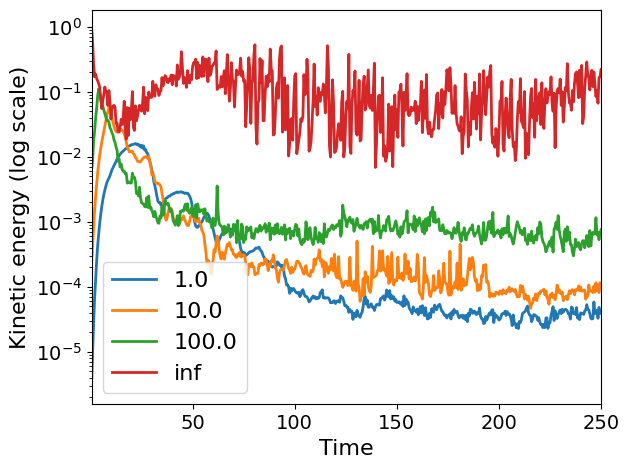}
    \caption{\textbf{(2D Mixture of Gaussians)} 
    Kinetic energy of particles \cref{eq:Fisher_on_particles}  for $(f_\text{KL}, {\Gamma_L})$-GPA with different $L$'s. \cref{thm:dissipation} suggests  that particles need to slow down and practically stop when they reach  the “vicinity”  of the target particles.
    }
    \label{fig:mixtureofgaussian_ke}
\end{figure}

\begin{figure}[h]
    \centering
    \includegraphics[width=\linewidth]{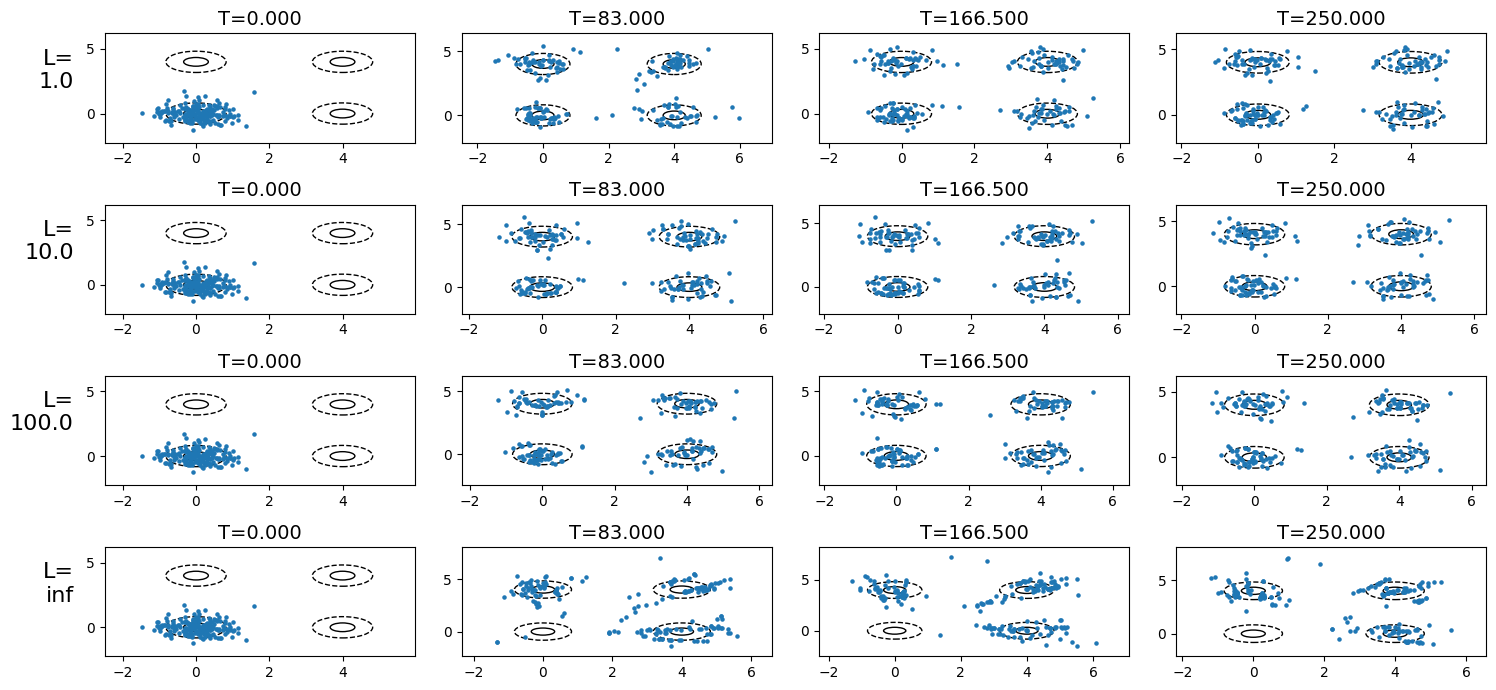}
    \caption{\textbf{(2D Mixture of Gaussians)} We empirically observe that Lipschitz constant $L$ controls the propagation speed of  $(f_\text{KL}, {\Gamma_L})$-GPA with different $L$'s. For $L < \infty$, the particles are propagated to the 4 wells. As $L$ gets larger, the algorithm becomes more unstable. 
    For $L = \infty$ (unregularized KL) GPA fails to capture the target. 
    }
    \label{fig: Mixture_of_gaussians_2D - KL Lip GPA}
\end{figure}

\paragraph{Numerical stability  of GPA}
\label{paragraph:example:cfl}

Based on  these empirical findings, we observe a close relationship between a finite propagation speed $L$ and numerical stability of the algorithm.
Indeed, from  a numerical analysis point of view,
\cref{eq:scheme:1:sec3} is a particle-based explicit scheme for the PDE \cref{eq:transport:variational:pde:sec2}. 
In this context, the Courant, Friedrichs, and Lewy (CFL) condition for stability of discrete schemes for transport PDEs such as the first equation in \cref{eq:transport:variational:pde:sec2}
becomes  
$\sup_x|\nabla \phi_t^{L,*}(x)|\frac{\Delta t}{\Delta x} \le 1$, 
 \cite{LeVeque}.
Clearly, the Lipschitz regularization $|\nabla \phi_t^{L,*}(x)|\le L$ enforces 
a CFL type condition with a learning rate $\Delta t$ proportional to the inverse of $L$. 
It remains an open question how to rigorously extend these CFL-based heuristics to particle-based algorithms, we also refer to some related questions in \cite{Carillo_CFL}.
However, in the context of the algorithm \cref{eq:GPA:sec3}, 
the  speed constraint $L$ on the particles  induces an implicit  spatial discretization grid $\Delta x$ where particles are transported for each $\Delta t$ by at most $\Delta x =  L \Delta t$.
Intuitively, this implicit spatio-temporal  discretization
suggests that $\sup_x|\nabla \phi_t^{L,*}(x)|\frac{\Delta t}{\Delta x} =
    \frac{\sup_x|\nabla \phi_t^{L,*}(x)|}{L} \le 1$.
    Hence  \cref{eq:GPA:sec3} or \cref{algo:gpa} are expected to  
satisfy  the same CFL condition for the transport PDE
in \cref{eq:transport:variational:pde:sec2}.
Based on these CFL heuristics for particles, here we keep the inversely proportional relation between $L$ and $\Delta t$  
as a criterion for tuning the learning rate $\Delta t$. 
Finally, these CFL-based bounds and the empirical findings in \Cref{fig: Mixture_of_gaussians_2D - KL Lip GPA} suggest that a time-dependent ``schedule" for $L$ could enhance the stability and convergence properties of GPAs, as the quantity $\sup_x|\nabla \phi_t^{L,*}(x)|$ could serve as (or inspire) an indicator of proximity to the target distribution. However, in this paper we do not explore further such time-adaptive strategies for $L$.


\section{Generative particle algorithms  for heavy-tailed data}
\label{sec:example:heavytails}

Lipschitz regularized gradient flows in \Cref{sec:L-reg:gradflow} and GPA in \Cref{sec:GPA} are built on a family of $f$-divergences as discussed in \Cref{introduction}. 
Here we study the  
choice   of $f_\text{KL}$ vs. $f_\alpha$ on GPA for  samples  from distributions with various tails, e.g. gaussian, stretched exponential, or polynomial. 
This   exploration rests on the intuition that  transporting a Gaussian  to a heavy-tailed distribution and vice-versa is a nontrivial task.  This is due to the fact that  a significant amount of  mass deep in the tail needs to be transported to and from a (light-tailed) Gaussian. 
Furthermore, for heavy tailed distributions, KL divergence may become infinity, and thus cannot be trained, while in the $f_\alpha$ divergence we have flexibility to accommodate heavy tails using the parameter $\alpha$. However, even with the use of an $f_\alpha$ divergence, transporting particles deep into the heavy tails takes a considerable amount of time due to the speed restriction $L$ of Lipschitz regularization, see \Cref{sec:example:cfllipschitzregularization}. Therefore, in our experiments, we are less focused on ``perfect" transportation and more on ``numerically stable" transportation of moderately heavy-tailed distributions.

Indeed,in our first experiment we observe the following. The choice of $f_\text{KL}$ for heavy-tailed data renders the function optimization step in \cref{eq:GPA:sec3} numerically unstable and eventually leads to the collapse of the algorithm.  On the other hand, the choice of $f_\alpha$ with $\alpha > 1$ makes the algorithm  stable. 
The different behaviors of $f_\text{KL}$ and $f_\alpha$ on heavy-tailed data is illustrated in \cref{fig:2D student-t}  and \cref{fig:2D student-t_additional}. 
\begin{figure}[h]
    \centering
    \begin{subfigure}{0.4\textwidth}
        \centering
        \includegraphics[width=\linewidth]{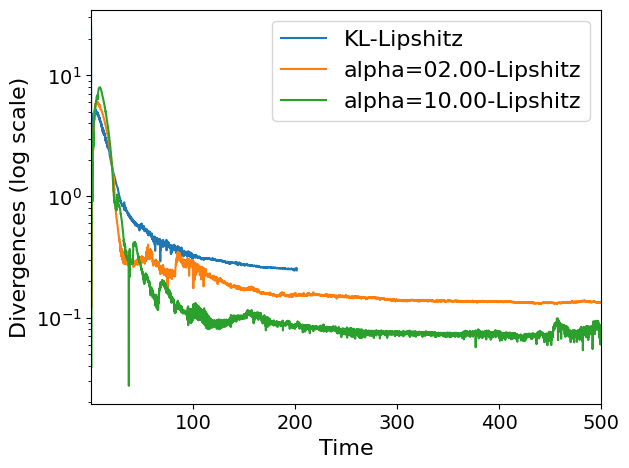}
        \caption{$(f, \Gamma_1)$-divergences}
    \end{subfigure}
    \begin{subfigure}{.59\textwidth}
        \centering
        \includegraphics[width=.9\linewidth]{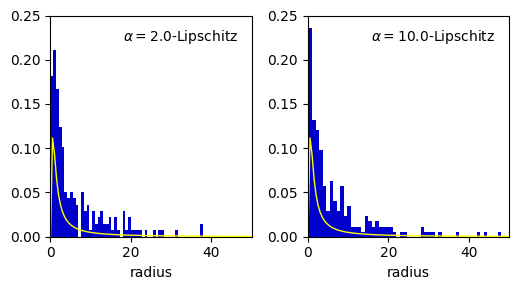}
        \caption{ 
        The radii of transported samples (blue), and the corresponding radial distribution function  (yellow).}
    \end{subfigure}  
\caption{\textbf{(Gaussian to Student-t with $\nu=0.5$ in 2D)} 
We consider 200 initial samples from $N((10,10), 0.5^2I)$, transported towards  200 target samples from $Student-t(\nu)$ with $\nu=0.5$ using $(f, \Gamma_1)$-GPA's for $f=f_\text{KL}$ and $f=f_\alpha$ with $\alpha=2, 10$. {\bf (a)} $(f, \Gamma_1)$-divergences are computed by the corresponding estimator in \cref{eq:GPA:sec3}. 
$(f_\text{KL}, \Gamma_1)$-GPA collapses at around $t=202$ as the function optimization step with $f_\text{KL}$ is numerically unstable on heavy-tailed data while $(f_\alpha, \Gamma_1)$-GPA with $\alpha=2, 10$ propagate particles stably during the entire simulation window. See \Cref{fig:2D student-t_additional} for details. However, GPA still appears to take a long time to transport particles deep into the  heavy tails due to the speed restriction of the Lipschitz regularization. Stability in  performance that lacks in accuracy is manifested 
in the relatively large size of the $\alpha$-divergences. 
{\bf (b)} We observed that $(f_\alpha, \Gamma_1)$-GPA with $\alpha=10$ transports particles further and deeper into   the tails than $(f_\alpha, \Gamma_1)$-GPA with $\alpha=2$.
}
\label{fig:2D student-t}
\end{figure}
\begin{table}[H]
    \centering
    \begin{tabular}{|c|c|c||c|c|}
        \hline
        Case &  GPA source $P_0$ & GPA target $Q$  & $D_\text{KL}^{\Gamma_1}$ & $D_\alpha^{\Gamma_1}$ with $\alpha=2$ \\ \hline
        1 &  $GGM(0.5)$ & $\mathcal{N}((10,10), 0.5^2 I)$ & $O(10^{-6})$ & $O(10^{-6})$ \\ \hline
        2 & $Student-t(3)$ & $\mathcal{N}((10,10), 0.5^2 I)$ & $O(10^{-4})$ & $O(10^{-4})$ \\ \hline
        3 &  $Student-t(1.5)$ & $\mathcal{N}((10,10), 0.5^2 I)$  & \textbf{diverged at $t=0$} & $O(10^{0})$  \\ \hline
        4 &  $Student-t(0.5)$ & $\mathcal{N}((10,10), 0.5^2 I)$ & \textbf{diverged at $t=0$} & $O(10^{7})$ \\ \hline \hline
        5 &  $\mathcal{N}((10,10), 0.5^2 I)$ & $GGM(0.5)$ &   $O(10^{-6})$ & $O(10^{-3})$\\ \hline
        6 &  $\mathcal{N}((10,10), 0.5^2 I)$ & $Student-t(3)$ & $O(10^{-6})$ & $O(10^{-4})$ \\ \hline
        7 & $\mathcal{N}((10,10), 0.5^2 I)$ & $Student-t(1.5)$ & $O(10^{-3})$ & $O(10^{-3})$  \\ \hline
        8 & $\mathcal{N}((10,10), 0.5^2 I)$ & $Student-t(0.5)$ & \textbf{diverged at $t=202$} & $O(10^{-1})$\\ \hline   
    \end{tabular}
\caption{Transportation of heavy-tails to Gaussian (cases 1-4) and Gaussian to heavy-tails (cases 5-8) by $(f, \Gamma_1)$-GPA with $f_\text{KL}$ and $f_\alpha$ with $\alpha=2$. When the algorithm collapses, the corresponding time is reported. In other cases,
the converged $D_f^{\Gamma_1}(P_T\| Q)$'s are reported. }
    \label{tab:heavy_tail_result_summary}
\end{table}
Next, we explore the performance of GPA for several   distributions with varying degrees of heavy-tailed structure. 
Initial distributions $P_0$ are chosen as heavy-tailed distributions in cases 1-4 in \cref{tab:heavy_tail_result_summary}, whereas target distribution $Q$ are chosen as heavy-tailed distributions in cases 5-8. We chose Generalized Gaussian distribution (Stretched exponential distribution, $GMM(\beta) \propto \exp(-|x|^\beta)$) with $\beta=0.5$ as a heavy-tailed distribution because it fails to be subexponential. But it
has finite moments of all orders. On the other hand,
Student-t distributions with degree of freedom $\nu$ ($Student-t(\nu)$) have  polynomial tails. Among them, $Student-t(3.0)$ has a finite second moment, $Student-t(1.5)$ has an infinite second moment but has a finite first moment, and $Student-t(0.5)$ has an infinite second moment but its first moment is undefined. 
In all cases in \cref{tab:heavy_tail_result_summary} we use  the Gaussian distribution $N((10,10), I)$ as either source or target.
\Cref{tab:heavy_tail_result_summary} displays the summary of the transportation of particles for different cases. 
Overall, with the exception of especially heavy-tailed distributions in cases 3 \& 4 (both with infinite second moments  and thus very heavy tails), KL and/or $\alpha$-divergences work reasonably well.
We also note that  $\alpha$-divergences in GANs for images can  provide superior performance to KL and related divergences, even in the abscence of heavy tails \cite{f-GAN, LS_GAN, birrell2022f, birrell2022structure}.

\section{Learning from scarce data}
\label{sec:example:scarceMNIST}

In this section, we empirically demonstrate that GPA can be an effective generative model when only  scarce target data is available. 
We analyze three types of problems: GPA for generating images in a high-dimensional space given scarce target data, GPA for data augmentation, and GPA for approximating a 
multi-scale distribution represented by scarce data. Experiments for the first two applications are conducted following the strategies outlined in \Cref{sec:generalization-overfit} to uphold the generalization properties of GPA.

\paragraph{GPA for image generation  given scarce target data}
Here we consider the example of MNIST image generation using GPA, given a target data set that is relatively sparse compared to the corresponding spatial dimensionality. Recall the entire MNIST data set has $60,000$ images. We demonstrate an example of 
generating images for MNIST in $\mathbb{R}^{784}$ from 200 target samples
in \cref{fig:mnist_different_method}. 
We showcase results from our first two strategies in \Cref{sec:generalization-overfit} to ensure the generalization property of GPA: (i) the imbalanced sample sizes $M \gg N$ (\Cref{fig:sub1}) and (ii) the  generated particles that are simulataneously transported with $M$ training particles (\Cref{fig:sub2}). {In addition, we highlight the efficiency of GPA in training time and target sample size by comparing GPA against WGAN \cite{Wasserstein:GAN} and SGM \cite{Song2021ScoreBasedGM} in \Cref{fig:mnist:diff:methods:2}, in a scarce data regime. 
On the other hand, for a demonstration of scalability of GPA in the number of data, we refer to \cref{fig:mnist:scalability}.

\begin{figure}[h]
     \centering
     \begin{subfigure}{.32\linewidth}
   \centering
   \includegraphics[width=1.\textwidth]{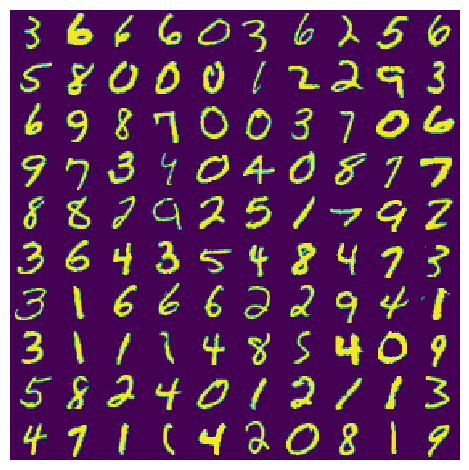}
   \caption{Fixed target samples with sample size $N=200$}
 \end{subfigure}%
 \begin{subfigure}{.32\linewidth}
   \centering
   \includegraphics[width=1.\textwidth]{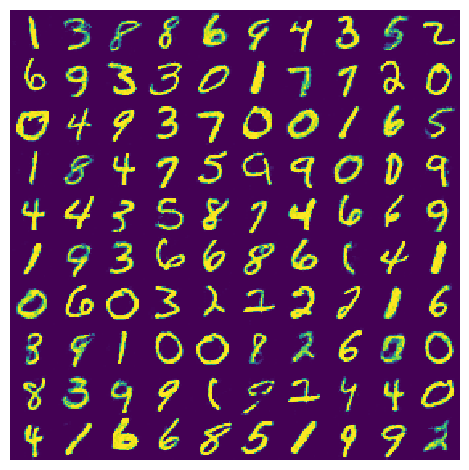}
   \caption{$M=600$ transported particles from $(f_\text{KL},\Gamma_5)$-GPA}
   \label{fig:sub1}
 \end{subfigure}%
 \begin{subfigure}{.32\linewidth}
   \centering
   \includegraphics[width=1.\textwidth]{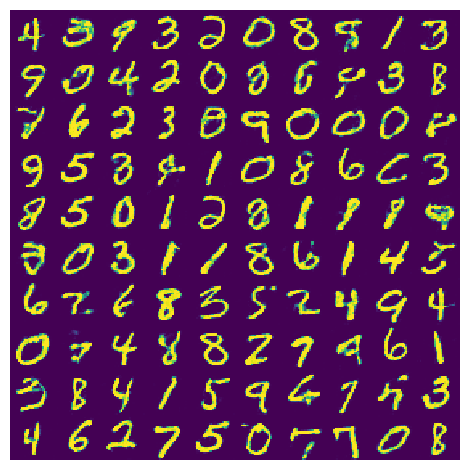}
   \caption{600 generated particles that are simultaneously transported from $(f_\text{KL},\Gamma_5)$-GPA}
   \label{fig:sub2}
 \end{subfigure}%
 \caption{\textbf{(MNIST) GPA for image generation given scarce target data. }
 {\bf (a)} A subset of the $N=200$ target samples. Results in (b-c) are generated by $(f_\text{KL}, \Gamma_5)$-GPA based on the first two strategies in \Cref{sec:generalization-overfit}.  
 We report GPA results with $L=5$, which was empirically found to generate samples stably and in a reasonable amount of time.
 {\bf (b)}  $M=600$ initial particles from $Unif([0,1]^{784})$ were transported toward the target in the setting of $M \gg N$, which promotes sample diversity. See \Cref{fig:mnist:si} for details. {\bf (c)} 
 A new set of 600 initial particles from $Unif([0,1]^{784})$ were transported through the previously learned vector fields. These transported samples are referred to as generated particles, as explained in \Cref{sec:generalization-overfit}. 
 Training time: 5000 time steps ($T=2500$) or 48 minutes in the setting \ref{subsec:appendix:nn:architecture:comp:resources}.} 
\label{fig:mnist_different_method}
\end{figure}

 \begin{figure}[h]
    \centering
    
    \begin{subfigure}{\textwidth}
        \centering
        \includegraphics[width=.9\textwidth]{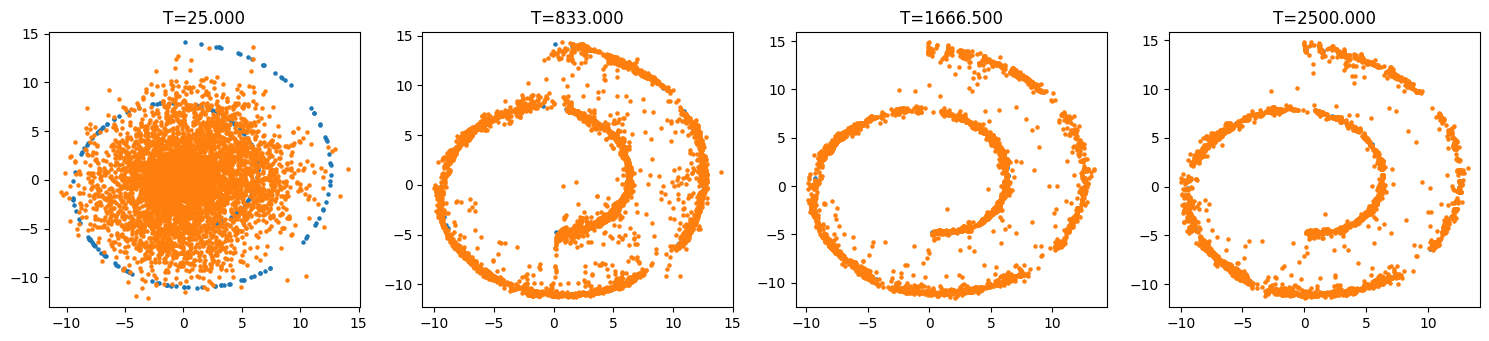}
    \caption{Trajectories of $(f_\text{KL}, \Gamma_1)$-GPA. 
    {\bf (blue)} $N=200$ target data, {\bf (orange)} $M=5000$ transported particles}
    \label{subfig:swissroll:gpa}
    \end{subfigure}
    \begin{subfigure}{0.33\textwidth}
        \centering
        \includegraphics[width=\textwidth]{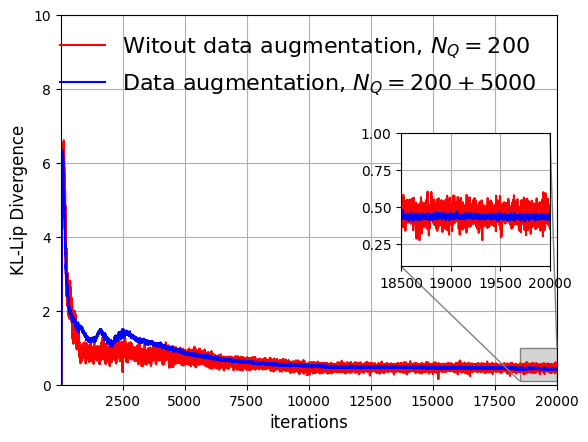}
    \caption{$(f_\text{KL}, \Gamma_1)$-divergences}
    \end{subfigure}
    \begin{subfigure}{0.65\textwidth}
        \centering
        \includegraphics[width=\textwidth]{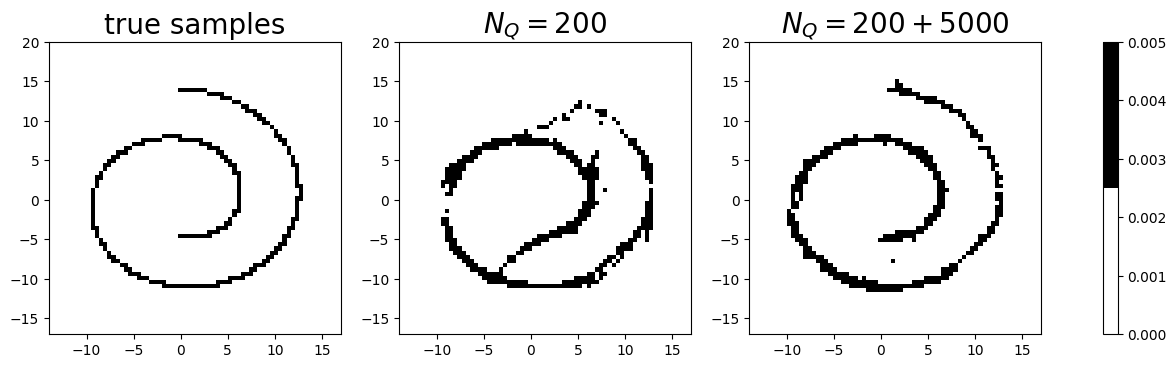}
    \caption{ Data manifolds from true distribution \textbf{(left)}, $(f_\text{KL}, \Gamma_1)$-GAN trained without data augmentation \textbf{(center)} and the same GAN trained with data augmentation  \textbf{(right)}.}
    \end{subfigure}

    \caption{{\bf(Swiss roll) 
    Data augmentation using  GPA.} {\bf (a)} Given  $N=200$ samples from the Swiss roll uniform distribution $Q$,  $M=5000$ additional samples are generated by  transporting initial samples from  $P_0=\mathcal{N}(0, 3^2I)$ using $(f_\text{KL}, \Gamma_1)$-GPA.  Imbalanced sample sizes $M\gg N$ strategy in \Cref{sec:generalization-overfit} is used to ensure  sample diversity. Particles at  $T=2500$ with $D_{f_\text{KL}}^{\Gamma_1}(P_T\| Q) \leq 1.07 * 10^{-4}$ are used as the augmented data. {\bf (b)} When $(f_\text{KL}, \Gamma_1)$-GAN is trained from 200 original samples {\bf (red)}, the loss (divergence) oscillates, see inset in (b). To improve the GAN, we train it with 200 original + 5000 augmented samples. By GPA-data augmentation, the GAN loss decreases stably, see inset in (b).  {\bf (c)} 
    GPA-augmented data significantly enhanced the learning of the manifold when using a GAN on the 5200 samples. 
   }
    \label{fig:swissroll}
\end{figure}

\paragraph{GPA for data augmentation} 
Here, we further verify the capabilities of GPA to learn from scarce target data  in low- and high-dimensional examples such as \Cref{fig:swissroll,fig:mnist:augmentation}. Specifically, GPA can serve as a data augmentation tool for GANs or other generative models, including variational autoencoders \cite{VAE}, autoencoders, and conditional generative models. These models often require a substantial amount of target data in order to enable effective learning of generators.
GPA provides augmented data needed for the proper training of the generative model with both sample diversity and quality, as depicted in \Cref{fig:swissroll} and \Cref{fig:mnist:augmentation}.
An additional advantage of GPA augmentation is that proximity between the augmented data and the original data can be
monitored and controlled by the GPA termination time $T$.
 Indeed, the $(f, \Gamma_L)$-divergence, one of the estimators of GPA in \Cref{algo:gpa}, ensures that the divergence between these datasets remains below the tolerance error $\epsilon_{TOL}$:
\begin{equation}
\label{eq:gpa:augmentation:error}
    D_{f_\text{KL}}^{\Gamma_1}(P_T\| Q) \le \epsilon_{TOL}.
\end{equation}
Other data augmentation techniques, such as small noise injection or transformations, do not inherently ensure the proximity to the target distribution, as  captured in \cref{eq:gpa:augmentation:error}.
 Here we present two examples for this purpose. First, we use a Swiss roll example in \Cref{fig:swissroll} to illustrate the procedure and features of GPA augmentation. 
Furthermore, in  \Cref{fig:mnist:augmentation}, 
we showcase a high-dimensional and consequently more intriguing example of data augmentation for the MNIST dataset. 
 This illustration demonstrates that a WGAN trained with GPA augmented data performs similarly to one trained with original, real  data of the same size.
In conclusion, we demonstrated how to employ GPA for data augmentation as another strategy  for acquiring the generalization properties discussed in  \Cref{sec:generalization-overfit}.

\paragraph{GPA for 
multi-scale distribution}
We consider  a target distribution with  a multi-scale (fractal) structure such as a Sierpinski carpet of level 4.  Namely, this uniform distribution is  constructed  from a fractal set  by keeping  the 4 largest scales and truncating all finer scales.  We refer to  \cref{subfig:sierpinski_target} where we consider  4096 target particles in $[0, 10] \times [0, 10]$. 
Each target particle is random-sampled only once in each dark pixel with size of $[0, 10/3^4]\times [0, 10/3^4]$.
We transport 4096 initial samples from $N(0, 3^2I)$ using $(f_\text{KL}, \Gamma_1)$-GPA. 
\Cref{subfig:sierpinski_gpa_ke,subfig:sierpinski_gpa_output} indicate that $(f_\text{KL}, \Gamma_1)$-GPA approximates the target distribution and stops in a reasonable time $T=1000$ with time steps $n=5000$. 
We also refer to  the related 3D result   in \cref{fig:3dsierpinskicarpet}, where   particles in 3D find a multi-scale structure in the 2D plane. 
On the other hand, training the generator for a multi-scale distribution with the given dataset size posed a significant challenge for both  Wasserstein GAN \cite{Wasserstein:GAN}, $(f_\text{KL}, \Gamma_1)$-GAN \cite{birrell2022f} and score-based generative models (SGM) \cite{Song2021ScoreBasedGM}, as evident in \cref{subfig:sierpinski_wgan2,subfig:sierpinski_fgammagan,subfig:sierpinski_sgm}. 

\begin{figure}[h]
    \centering
    \begin{subfigure}{0.32\textwidth}
   \centering
   \includegraphics[width=0.7\linewidth]{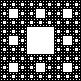}
   \caption{Target distribution }
   \label{subfig:sierpinski_target}
 \end{subfigure}
 \begin{subfigure}{.32\textwidth}
   \centering
   \includegraphics[width=.92\linewidth]{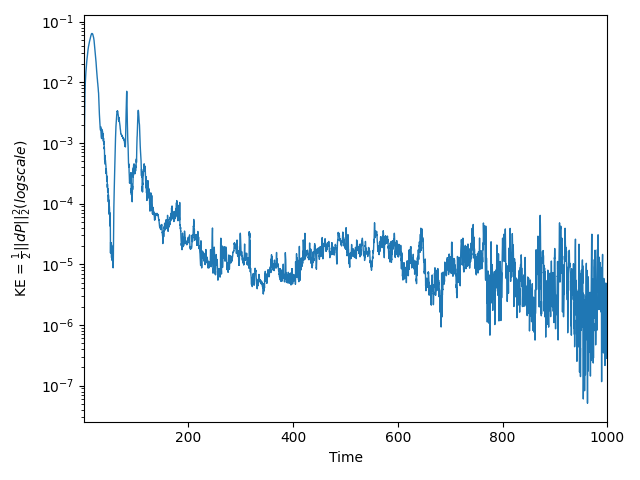}
   \caption{Kinetic energy of particles \cref{eq:Fisher_on_particles} for $(f_\text{KL}, \Gamma_1)$-GPA}
   \label{subfig:sierpinski_gpa_ke}
 \end{subfigure}
  \begin{subfigure}{.32\textwidth}
   \centering
   \includegraphics[width=\linewidth]{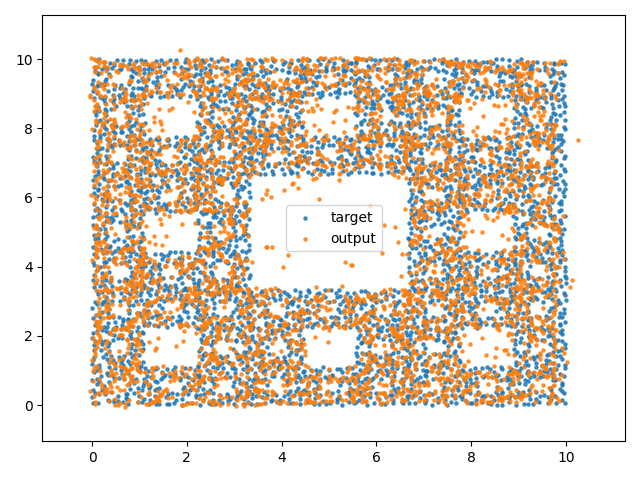}
   \caption{Output of $(f_\text{KL}, \Gamma_1)$-GPA}
   \label{subfig:sierpinski_gpa_output}
 \end{subfigure}
 \begin{subfigure}{.32\textwidth}
   \centering
   \includegraphics[width=\linewidth]{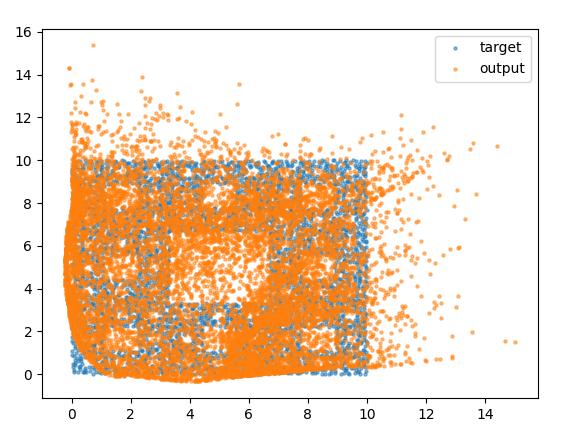}
   \caption{Output of WGAN \cite{Wasserstein:GAN}}
   \label{subfig:sierpinski_wgan2}
 \end{subfigure}
 \begin{subfigure}{0.32\textwidth}
   \centering
   \includegraphics[width=\linewidth]{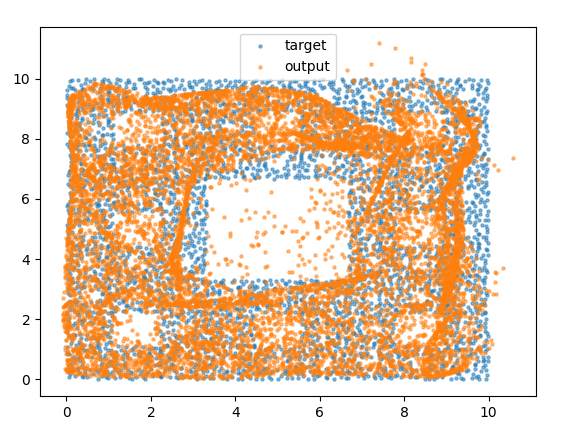}
   \caption{Output of $(f_\text{KL}, \Gamma_1)$-GAN \cite{birrell2022f} }
   \label{subfig:sierpinski_fgammagan}
 \end{subfigure}
 \begin{subfigure}{0.32\textwidth}
   \centering
   \includegraphics[width=\linewidth]{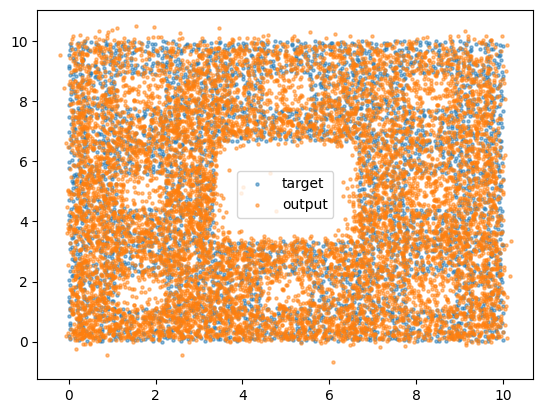}
   \caption{Output of SGM \cite{Song2021ScoreBasedGM} }
   \label{subfig:sierpinski_sgm}
 \end{subfigure}
    \caption{\textbf{(Sierpinski carpet of level 4) GPA for  multi-scale distributions.} 
    GPA demonstrates superior performance over two widely employed generative models in  approximating multi-scale distributions. 
    \textbf{(a)} The problem is to approximate a target distribution with four different scales using 4096  samples.
    \textbf{(b - c)} The $(f_\text{KL}, \Gamma_1)$-GPA successfully transports 4096 Gaussian samples to capture the three largest scales of the target distribution. 
    \textbf{(d - e)} 
   GANs exhibit notably inferior performance compared to GPA, even when sharing the same discriminator structure and loss function, as evidenced in Figure (e). See also \ref{sec:appendix:gpa_vs_gan}. 
 \textbf{(f)} SGM is unable to capture finer scales, even with prolonged training.}
    \label{fig: Sierpinski carpet}
\end{figure}

\section{Latent-space GPA for high-dimensional dataset integration}
\label{sec:example:batch:effects}
%




The integration of two or more datasets that essentially contain the same information, yet  whose statistical properties are different due to e.g., distributional shifts is crucial for the successful training and deployment of statistical and machine learning models \cite{Kirk2012-integrate_datasets,Hendler2014-Kirk2012-integrate_datasets,Samuelsen2019}. Taking bioinformatics as an example, datasets, even when they study the same disease, have been created from different labs around the globe resulting in statistical differences which are also known as batch effects \cite{TranHoaThiNhu2020Abob}. Furthermore, those datasets often have low sample size due to budget constraints or limited availability of patients (e.g., rare diseases).
GPA offers an elegant solution for dataset integration by transporting samples from one dataset to another. Unlike the standard generation process, where the source distribution typically needs to be simple and explicit (e.g., isotropic Gaussian), GPA imposes no assumptions on the source and target distributions. It can also produce stable and accurate results even with very small sample sizes, as demonstrated in \Cref{sec:example:scarceMNIST}.
%
However, applying GPA becomes challenging when the dimensionality of the data rests in the order of tens of thousands. Therefore, we first substantially reduce the dimensionality of the data before employing GPA. After the dimensionality reduction, we apply GPA in the latent space and, when necessary, reconstruct the transported data back to its original high-dimensional space.
%
\emph{This three-step approach efficiently transports samples from the source dataset to the target dataset.} Additionally, it is worth noting that the error resulting from the projection to a lower dimensional latent space is handled via  \Cref{thm:reconstructed_variable_converges_main}. This theorem states that when the target distribution is supported on a lower dimensional manifold, it is theoretically guaranteed through the new data processing inequality that the error in the original space can be bounded by the error occurred in the latent space.

\paragraph{Gene expression datasets}
We consider the integration of two gene expression datasets which are publicly available at \url{https://www.ncbi.nlm.nih.gov/geo/} with accession codes GSE76275 and GSE26639. These datasets have been measured using the GLP570 platform which creates samples with  $d=54,675$ dimensions. Each dataset consists of a low number of data while each individual sample corresponds to the gene expression levels of a patient. Moreover, each sample is labeled by a clinical indicator which informs if the patient was positive or negative to ER (estrogen receptor), see \cref{tab:samplesize}. 
\begin{table}[h]
    \centering
    \begin{tabular}{c|c|c||c}
         & Positive & Negative & Total \\ \hline
        GSE26639 (source) & 138 & 88 & 226 \\ \hline
        GSE76275 (target) & 49 & 216 & 265 \\ 
    \end{tabular}
    \caption{Sample sizes of the studied gene expression  datasets.}
    \label{tab:samplesize}
\end{table}
 The dataset with accession code GSE26639 was selected as the source dataset, while  GSE76275 was chosen as the target. In this example, we chose GSE76275 as the target due to its more distinguishable geometric structure compared to the source, as illustrated in \Cref{fig:source_target_gene_expression}.
 This choice is aimed at showcasing the transportation capabilities of GPA. However, in reality, the decision of selecting the source and target datasets depends on the user and the application context.
Despite measuring the same quantities, a direct concatenation of the two datasets will result in erroneous statistics as is evident in  \cref{fig:source_target_gene_expression} where a 2D visualization reveals that the two datasets are completely separated.

\paragraph{Dimensionality reduction using PCA} 
Applying GPA, along with most machine learning models that do not utilize transfer learning, in the original high-dimensional space is especially challenging when dealing with a low sample size regime.
Hence, we first perform dimensionality reduction constructing a latent space and subsequently perform GPA within the latent space. 
Specifically, we use invertible dimensionality reduction methods by deploying autoencoders suitable for the data. An autoencoder comprises of two functions: the encoder, denoted as $\mathcal{E}(\cdot)$, compresses information from a high-dimensional space to a lower-dimensional latent space, while the decoder, represented as $\mathcal{D}(\cdot)$, decompresses latent features back to the original space. 
Given that training a nonlinear autoencoder based on neural networks requires tens of thousands of samples, we choose PCA as a linear alternative, 
\cite{Bishop:book, pca_review,hastie01statisticallearning}.
Using PCA, we derive a $d'$-dimensional linear basis $\{\mathbf{v}_i\}_{i=1}^{d'}$
 from the entire set of samples in both the source and the target datasets.
Then each sample $\mathbf{x}$ is projected to a $d'$-dimensional space, defining the encoder as the corresponding projection: $\mathbf{z}= \mathcal{E}(\mathbf{x})= \text{Proj}_{\mathbf{v}_{1:d'}}(\mathbf{x})$.
 Subsequently,  the 
GPA \Cref{algo:gpa} will be applied on the latent samples $\mathbf{z}$. 
The decoder $\mathbf{x}=\mathcal{D}(\mathbf{z})$ is also defined by PCA using a reconstruction on the entire $d$-dimensional  space, e.g.  
\cite[Ch 12.1.2]{Bishop:book}. The decoder  is 1-Lipschitz continuous   
since $\|\mathcal{D}(\mathbf{z})-\mathcal{D}(\mathbf{z'})\|^2 = \|\sum_{i=1}^{d'}(z_i - z'_i) \mathbf{v}_i\|^2 = \sum_{i=1}^{d'} |z_i- z_i'|^2 \|\mathbf{v}_i\|^2    =   \|\mathbf{z}- \mathbf{z'}\|^2$. Here we used that    $\mathbf{v}_i$'s are orthonormal and that decoders $\mathcal{D}(\mathbf{z}), \mathcal{D}(\mathbf{z'})$ only differentiate  on the $d'$-dimensional  space in PCA \cite[Ch 12.1.2]{Bishop:book}.
Here we chose $d'=50$ to balance 
computational cost of \Cref{algo:gpa}
and 
error between reconstructed and original datasets, aiming for a practically applicable  approximation of an ideal encoder/decoder, see \Cref{fig:standardized_all_pca}. In this context, \Cref{thm:reconstructed_variable_converges_main}  guarantees that the projection error  remains controlled under encoding/decoding assuring that the performance of the transportation in the original space is dictated by the performance of the GPA in the latent space. 

\paragraph{Results on dataset integration} 
We integrate gene expression datasets by applying the latent-space GPA, transporting samples from the positive-labeled source distribution to the corresponding positive-labeled target distribution and similarly for the negative-labeled data. The respective transportation maps $\mathcal{T}^{n,+}$ and $\mathcal{T}^{n,-}$ are composed of $(f_\text{KL},\Gamma_1)$-GPA transport maps as defined in \cref{eq:scheme:1:sec3}, executed for $n=5000$ time steps (and $\Delta t=0.2$). Each of these separate transportation maps utilizes its own independent discriminator, each with its own unique parameters.
%
%
The visualization of the dataset integration in \cref{fig:gpa_integration} shows that both positive and negative distributions have been efficiently transported via latent-space GPA. 
As a comparison, we present a baseline data transformation for each class, denoted by $\mathcal{F}^+$ and $\mathcal{F}^-$, respectively, which performs mean and standard deviation (std) adjustment. 
As it is evident in \Cref{fig:simple_integration}, the baseline dataset integration only partially relocates the samples from the transformed distribution to the target distribution. The discrepancies are especially pronounced in the negative samples (see inset in \Cref{fig:simple_integration}).

\begin{figure}[h]
    \centering
    \begin{subfigure}{0.325\linewidth}
        \includegraphics[width=\textwidth]{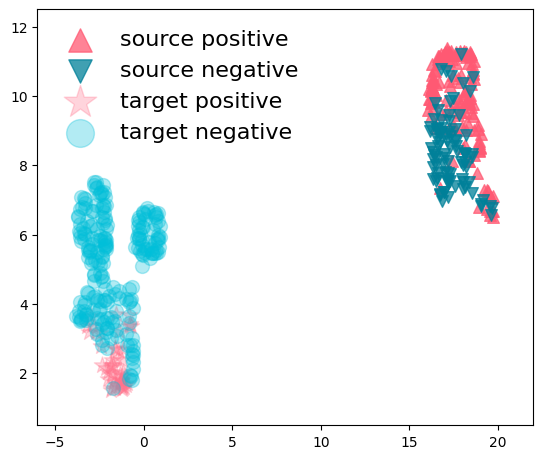}
    \caption{Two gene expression data sets without any transformation.}
    \label{fig:source_target_gene_expression}
    \end{subfigure}
    \begin{subfigure}{0.325\linewidth}
        \includegraphics[width=\textwidth]{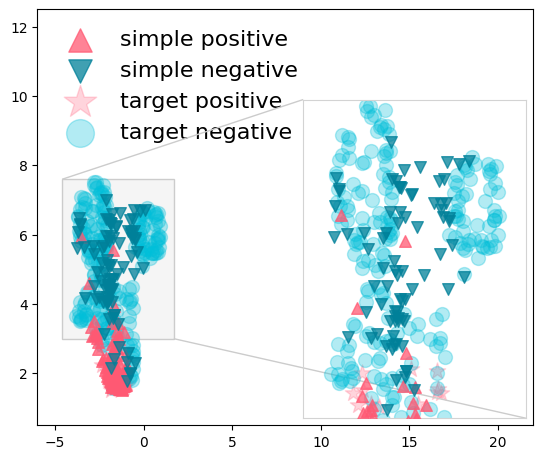}
    \caption{Dataset integration using mean and std adjustment.}
    \label{fig:simple_integration}
    \end{subfigure}
    \begin{subfigure}{0.325\linewidth}
        \includegraphics[width=\textwidth]{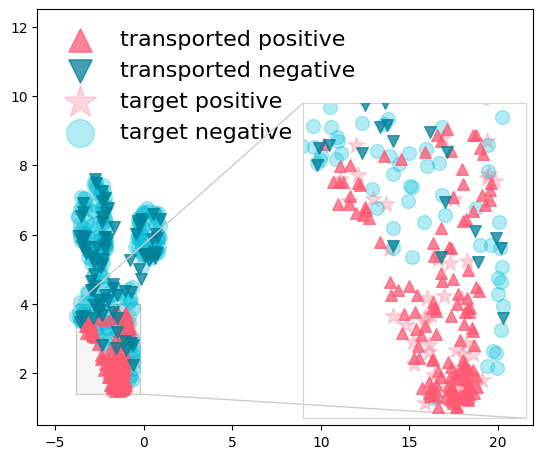}
    \caption{Dataset integration using latent $(f_\text{KL}, \Gamma_1)$-GPA with $d'=50$.}
    \label{fig:gpa_integration}
    \end{subfigure}
    
    \caption{\textbf{Gene expression dataset integration by GPA.} We integrate two high-dimensional gene expression datasets via GPA transportation. \textbf{(a)} A direct concatenation of the two datasets results in incorrect integration as visualized in the 2D plane using UMAP algorithm \cite{McInnes2018}. 
    \textbf{(b)} The baseline approach consists of a mean and std adjustment of each feature in the original space.
    In the inset, we notice that  transformed negative samples do not evenly cover the support of the negative target samples.  \textbf{(c)} The proposed latent GPA data transportation results in transported distributions close to the target ones.  
    }
    \label{fig:integration_gene_expression_data}
\end{figure}

We  quantify the distributional differences between the transported and target distributions via the 2-Wasserstein distance in \Cref{tab:sinkhorn}, which is a metric not used in latent GPA and can also be efficiently computed with the Sinkhorn algorithm. In summary, the 2-Wasserstein distance between datasets in the original space ($d = 54,675$) is reduced by two orders of magnitude (1.4726\% on positive datasets and 2.6104\% on negative datasets), while GPA is twice as effective compared to the baseline mean and standard deviation adjustment transformation (3.9526\% on positive datasets and 4.8718\% on negative datasets). 
Finally, we remark that there are other metrics that can be used to assess the quality of the latent GPA-based dataset integration. For instance, the merged dataset can be tested on subsequent tasks such as phenotype classification or feature selection and evaluate the relative improvement resulting from the integration. We reserve this type of evaluation for future research since it is beyond the scope of this paper. Conducting such an analysis would require dedicated experiments and comparisons specific to the selected subsequent task.

\section*{Appendix} 
Here we  provide \Cref{fig:mnist:augmentation} and \Cref{fig:mnist:diff:methods:2}, discussed earlier in \Cref{sec:example:scarceMNIST}.

\begin{figure}[h]
    \centering
    \begin{subfigure}{.32\textwidth}
      \centering  
      \includegraphics[width=\linewidth]{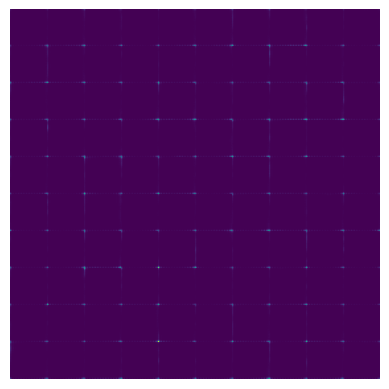}
        \caption{WGAN \cite{Wasserstein:GAN} trained with  200 original data for 3000 training epochs}
    \end{subfigure}
    \begin{subfigure}{.32\textwidth}
        \centering
        \includegraphics[width=\linewidth]{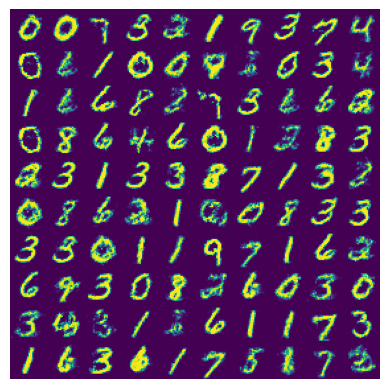}
        \caption{WGAN \cite{Wasserstein:GAN} trained with  original 1400 data for 500 training epochs}
    \end{subfigure}
   \begin{subfigure}{.32\textwidth}
      \centering  
      \includegraphics[width=\linewidth]{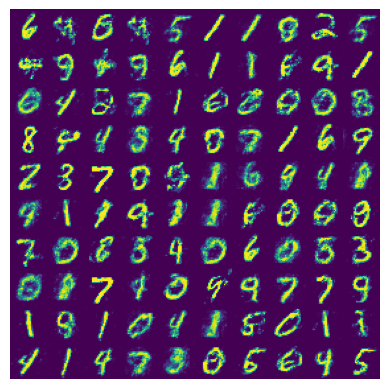}
        \caption{WGAN \cite{Wasserstein:GAN} trained with 200 original data and 1200 GPA-augmented data for 500 training epochs}
    \end{subfigure}
    
    \caption{ 
    {\bf (MNIST) Performance of data augmentation using GPA in a high-dimensional example.} {\bf (a)} WGAN was not able to learn from 200 original samples from the MNIST data base. {\bf (b)} WGAN trained with 1400 original data can now generate  samples but in a moderate quality. {\bf (c)} We obtained 600 GPA-transported data in \Cref{fig:sub1} and 600 generated  data in \Cref{fig:sub2} (see \Cref{sec:generalization-overfit}) from the 200 original   target samples  and used them for augmenting data to train a WGAN with a mixture of 1400 real, transported and generated  samples in total. Such a GAN generated samples    of  similar quality compared to the GAN trained with 1400 original samples in (b).
}
    \label{fig:mnist:augmentation}
\end{figure}

\begin{figure}[p]
    \centering
    
    \begin{subfigure}{\textwidth}
        \centering
        \includegraphics[width=.315\linewidth]{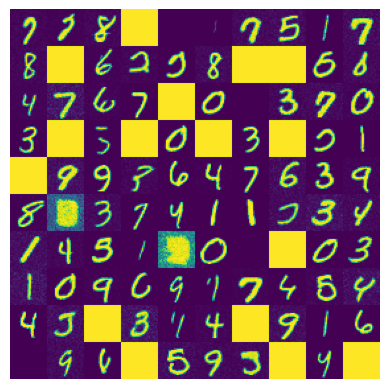}
        \includegraphics[width=.315\linewidth]{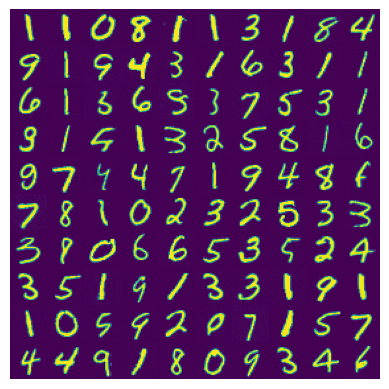}
        \caption{{\bf SGM \cite{Song2021ScoreBasedGM} needs more time.} SGM was able to generate samples from 200 target samples. However, the training was still ongoing for 30 minutes (7,500 training epochs) {\bf (left)}, and eventually overfitted (see related discussion in 
    \Cref{sec:generalization-overfit}.) running for  62 minutes (20,000 epochs) {\bf (right)}.}
    \end{subfigure}
    
    \begin{subfigure}{\textwidth}
        \centering
        \includegraphics[width=.315\linewidth]{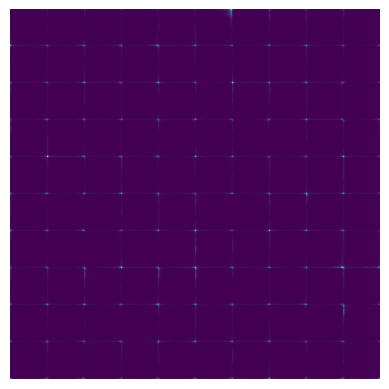}
        \includegraphics[width=.315\linewidth]{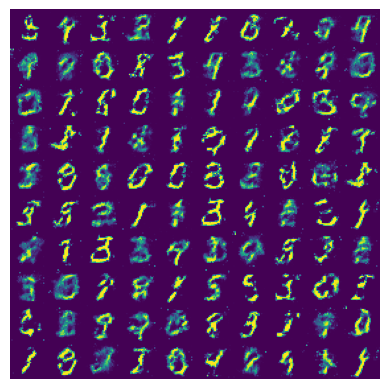}
        \caption{{\bf GAN \cite{Wasserstein:GAN} needs more data.} WGAN trained with 200 target samples did not generate samples while the same GAN trained with 1400 samples could. Its training time is also the slowest among the three models: 350 epochs {\bf (left)} and 
        70 training epochs {\bf (right)} were trained for 30 minutes. }
    \end{subfigure}

    \begin{subfigure}{\textwidth}
        \centering
        \includegraphics[width=.315\linewidth]{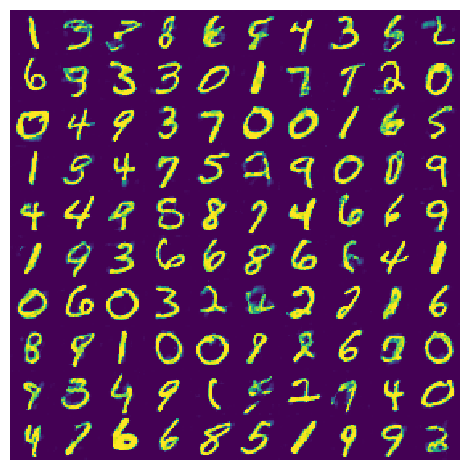}
        \includegraphics[width=.315\linewidth]{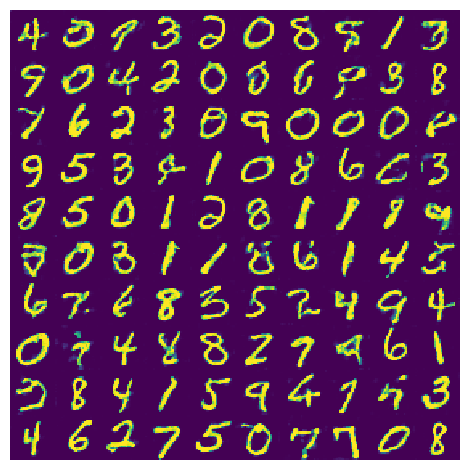}
        \caption{{\bf GPA is ``just right".}  $(f_\text{KL}, \Gamma_5)$-GPA generated samples from $N=200$ target samples in two different ways in \Cref{sec:generalization-overfit}: (i) transporting $M=600 \gg N$ samples {\bf (left)}, and (ii) generating additional 600 samples by transporting through the learned vector fields {\bf (right)}. Both settings in (i) and (ii) were able to produce samples. Lastly, 3160 training epochs were trained for 30 minutes.}
    \end{subfigure}

    \caption{  
    {\bf (MNIST) Comparison of image generation via GPA to SGM and GAN  models.}
    We demonstrate the efficiency of training GPA for image generation in terms of both training time and target sample sizes. The baseline setting restricts training time to 30 minutes and provides a fixed number of 200 target samples to each model. GPA learns to generate samples within this restricted setting, while SGM requires longer training time and WGAN requires more data to be trained.
     }
    \label{fig:mnist:diff:methods:2}
\end{figure}


\bibliographystyle{plain}
\bibliography{references}

\clearpage

\setcounter{page}{1}

\section*{Supplementary Materials} 




\appendix

\section{Background on Lipschitz-regularized divergences}
\label{sec:appendix:fdivergences}

In the paper \cite{dupuis2022formulation}, continuing with  \cite{birrell2022f} a new general class of divergences has been constructed which interpolate 
between $f$-divergences and integral probability metrics and inherit desirable properties from both.  We focus here one specific family which we  view as a Lipschitz regularization of the KL-divergence (or  $f$-divergences) or as an entropic  regularization of the 1-Wasserstein metric.  
We denote by $\mathcal{P}(\mathbb{R}^d)$ the space of all Borel probability measures on $\mathbb{R}^d$ 
by $\mathcal{P}_1(\mathbb{R}^d)= \left\{ P \in \mathcal{P}(\mathbb{R}^d) : \int |x| dP(x) < \infty\right\}$. We denote by $C_b(\mathbb{R}^d)$ the bounded continuous function and by  $\Gamma_L=\{f:\mathbb{R}^d\to \mathbb{R}: |f(x)-f(y)|\le L |x-y|\, \textrm{ for all } x,y\}$ the Lipschitz continous functions with  Lipschitz constant bounded by $L$ (note that $a\Gamma_L = \Gamma_{aL}$).

\paragraph{$f$-divergences} If $f:[0,\infty)\to\mathbb{R}$ is strictly convex and lower-semicontinuous with $f(1)=0$ the $f$-divergence of $P$ with respect to $Q$ is defined by  $D_f(P\|Q)= E_Q[f(\frac{dP}{dQ})]$ if $P\ll Q$ and set to be $+\infty$ otherwise. We have the variational representation (see e.g. \cite{birrell2022f} for a proof)
\begin{equation}\label{eq:VarfDiv}
  D_f(P\|Q)=  \sup_{\phi\in C_b(\mathbb{R}^d)}\left\{E_P[\phi]-\inf_{\nu \in \mathbb{R}}\left\{ \nu + E_Q[f^*(\phi -\nu)]\right\}  \right\}
\end{equation}
where $f^*(s)=\sup_{t\in \mathbb{R}}\left\{ st- f(t)\right\}$ is the Legendre-Fenchel transform of $f$.  We will use  the KL-divergence with  $f_{\mathrm{KL}}(x)=x\log x$ and the $\alpha$-divergence: $f_{\alpha}=\frac{x^{\alpha}-1}{\alpha(\alpha-1)}$ with  Legendre transforms $f^*_{\mathrm{KL}}(y)=e^{y-1}$ and $f^*_{\alpha} \propto y^{\frac{\alpha}{(\alpha-1)}}$. For KL the infimum over $\nu$ can be solved analytically and yields the Donsker-Varadhan with a $\log E_Q[e^\phi]$ term (see \cite{Birrell_IEEE_2022} for more on variational representations). 

\paragraph{Wasserstein metrics} The 1-Wasserstein metrics $W^{\Gamma_1}(P,Q)$ with transport cost $|x-y|$ is an integral probability metrics, see \cite{Wasserstein:GAN}. By keeping the Lipschitz constant as a regularization parameter we set 
\begin{equation}
    W^{\Gamma_L}(P, Q)=\sup_{\phi\in\Gamma_L}\left\{E_{P}[\phi]-E_Q[\phi]  \right\}
\end{equation}
and note that we have $W^{\Gamma_L}(P,Q)=LW^{\Gamma_1}(P,Q)$.

\paragraph{Lipschitz-regularized $f$-divergences} 
The Lipschitz regularized $f$-divergences are defined directly in terms their variational representations,  by replacing the optimization over bounded continuous functions in \eqref{eq:VarfDiv} by Lipschitz continuous functions in $\Gamma_L$.
\begin{equation}\label{def: f, gamma div 2}
D_{f}^{\Gamma_L}(P\|Q):=\sup_{\phi\in\Gamma_L}\left\{E_{P}[\phi]-\inf_{\nu \in \mathbb{R}}\left\{ \nu + E_{Q}[f^*(\phi -\nu)]\right\}\right\}.
\end{equation}
Some of the important properties of Lipschitz regularized $f$-divergences, which summarizes results from 
\cite{dupuis2022formulation,birrell2022f} are given in 
Theorem~\ref{thm:fgdivergence_properties:appendix}. It is assumed there that $f$ is super-linear (called admissible in \cite{birrell2022f}), that is $\lim_{s\to\infty}f(s)/s=+\infty$.  The case of $\alpha$-divergences for $\alpha<1$ is discussed in detail in \cite{birrell2022f}.

\begin{theorem}
\label{thm:fgdivergence_properties:appendix} Assume that $f$ is superlinear and strictly convex.  Then for $P,Q \in \mathcal{P}_1(\mathbb{R}^d)$ we have 

\begin{enumerate}
\item {\bf Divergence:} $D_{f}^{\Gamma_L}(P\|Q)$ is a divergence, i.e. $D_{f}^{\Gamma_L}(P\|Q)\ge 0$ and $D_{f}^{\Gamma_L}(P\|Q)= 0$ if and only if $P=Q$. Moreover the map $(P,Q)\to D_{f}^{\Gamma_L}(P\|Q)$  is convex and lower-semicontinuous. 

\item {\bf Infimal Convolution Formula:} We have 
\begin{equation}\label{def:inf con}
\displaystyle D_{f}^{\Gamma_L}(P\|Q)=\inf_{\gamma\in\mathcal{P}(\Omega)}\left\{W^{\Gamma_L}(P,\gamma) + D_f(\gamma\|Q) \right\}\,.
\end{equation}
In particular we have 
\begin{equation} 
0\leq D_{f}^{\Gamma_L}(P\|Q) \leq\min\left\{ D_f(P\|Q),W^{\Gamma_L}(P,Q)\right\}.
\end{equation}

\item
{\bf Interpolation and limiting behavior of  $D_{f}^{\Gamma_L}(P\|Q)$:}
\begin{equation}\label{property: limiting beh}
\lim_{L\to\infty}D_{f}^{\Gamma_L}(P\|Q)=D_{f}(P\|Q) \mathrm{\quad and\quad } \lim_{L\to0}\frac{1}{L}D_{f}^{\Gamma_L}(P\|Q)=W^{\Gamma_1}(P,Q)\, .
\end{equation}

\item
{\bf Optimizers:} There exists an optimizer $\phi^{L,*} \in \Gamma_L$, whic is unique up to a constant in $\textrm{supp}(P)\cup \textrm{supp}(Q)$.
The optimizer $\gamma^{L,*}$ in the infimal convolution formula exists and is unique and we have $ d\gamma^{L,*}\propto (f^*)'(\phi^{L,*})dQ$ (see \cite{birrell2022f} for details).  
For example for KL we get $d\gamma^{L,*} \propto e^{\phi^{L, *}}dQ$. 

\end{enumerate}

\end{theorem}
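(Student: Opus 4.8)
The plan is to establish part~2 (the infimal convolution formula) first, then read parts~1 and~3 off it, and obtain part~4 from the direct method; this is essentially the route of \cite{dupuis2022formulation,birrell2022f}, which I sketch here. For part~2 the inequality ``$\le$'' is elementary: given $\phi\in\Gamma_L$ and $\gamma\in\mathcal{P}_1(\mathbb{R}^d)$, write $E_P[\phi]-\inf_{\nu}\{\nu+E_Q[f^*(\phi-\nu)]\}=\big(E_P[\phi]-E_\gamma[\phi]\big)+\big(E_\gamma[\phi]-\inf_{\nu}\{\nu+E_Q[f^*(\phi-\nu)]\}\big)$, bound the first summand by $W^{\Gamma_L}(P,\gamma)$ via Kantorovich--Rubinstein duality and the second by $D_f(\gamma\|Q)$ via the variational formula \cref{eq:VarfDiv} (legitimate on $\Gamma_L$ because superlinearity of $f$ makes $f^*$ finite and $f^*(y)\ge y$ keeps the integrals well defined against $\mathcal{P}_1(\mathbb{R}^d)$ measures), then take $\sup_\phi$ and $\inf_\gamma$. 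For ``$\ge$'' I would substitute the Gibbs--Fenchel identity $\inf_{\nu}\{\nu+E_Q[f^*(\phi-\nu)]\}=\sup_{\gamma}\{E_\gamma[\phi]-D_f(\gamma\|Q)\}$ (the same \cref{eq:VarfDiv}, read as the Legendre dual of $\gamma\mapsto D_f(\gamma\|Q)$) into \cref{eq:fgdivergence:dual}, getting $D_f^{\Gamma_L}(P\|Q)=\sup_{\phi\in\Gamma_L}\inf_{\gamma}\big\{E_P[\phi]-E_\gamma[\phi]+D_f(\gamma\|Q)\big\}$, and then exchange $\sup$ and $\inf$. The objective is affine in $\phi$, convex and weakly lower semicontinuous in $\gamma$; after the harmless normalization $\phi(x_0)=0$ the set $\Gamma_L$ is convex and locally-uniformly compact (Arzel\`a-Ascoli), and superlinearity of $f$ makes the sublevel sets of $\gamma\mapsto D_f(\gamma\|Q)$ weakly compact, so a minimax theorem (Sion's) applies; equivalently one invokes Fenchel duality, the conjugate of $D_f(\cdot\|Q)$ plus the convex constraint $\Gamma_L$ being the infimal convolution of the two conjugates. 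Recognizing the post-swap inner supremum as $W^{\Gamma_L}(P,\gamma)$ yields \cref{eq:inf_conv}, and evaluating the infimum at $\gamma=P$ and at $\gamma=Q$ gives the two-sided bound \cref{eq:fgdivergence:bounds}.

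Parts~1 and~3 are then easy consequences. Non-negativity follows from \cref{eq:inf_conv} since $W^{\Gamma_L}\ge 0$ and $D_f(\cdot\|Q)\ge 0$; if $D_f^{\Gamma_L}(P\|Q)=0$, a minimizer $\gamma$ has $W^{\Gamma_L}(P,\gamma)=0$ and $D_f(\gamma\|Q)=0$, hence $\gamma=P$ and, by strict convexity of $f$, $\gamma=Q$, so $P=Q$; the converse is trivial. Convexity of $(P,Q)\mapsto D_f^{\Gamma_L}(P\|Q)$ is clear from \cref{eq:fgdivergence:dual} (each bracket is affine in $(P,Q)$), and lower semicontinuity along $W_1$-convergent $(P_n,Q_n)$ follows by choosing near-optimal $\gamma_n$ in \cref{eq:inf_conv}, using the (finite-liminf) energy bound and weak compactness of the $D_f$-sublevel sets to extract a limiting $\gamma$, and invoking joint lower semicontinuity of $W^{\Gamma_L}$ and of $D_f$. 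For part~3, $D_f^{\Gamma_L}(P\|Q)=\inf_\gamma\{L\,W^{\Gamma_1}(P,\gamma)+D_f(\gamma\|Q)\}$ is nondecreasing in $L$ and bounded above by $D_f(P\|Q)$; when $D_f(P\|Q)<\infty$, near-optimal $\gamma_L$ must have $W^{\Gamma_1}(P,\gamma_L)\to 0$, so $\gamma_L\to P$ in $W_1$ and lower semicontinuity of $D_f(\cdot\|Q)$ gives the matching lower bound (a truncation argument handles $D_f(P\|Q)=\infty$). The $L\to 0$ claim is symmetric: $\tfrac1L D_f^{\Gamma_L}(P\|Q)=\inf_\gamma\{W^{\Gamma_1}(P,\gamma)+\tfrac1L D_f(\gamma\|Q)\}\le W^{\Gamma_1}(P,Q)$, and the exploding penalty $\tfrac1L D_f(\gamma\|Q)$ forces near-optimal $\gamma$ towards $Q$.

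For part~4, existence of $\phi^{L,*}$ is exactly the compactness argument already carried out in the proof of \Cref{thm:first variation}: normalize $\phi(x_0)=0$, take a maximizing sequence (now equi-Lipschitz), extract a locally uniformly convergent subsequence by Arzel\`a-Ascoli, and pass to the limit by dominated convergence and Fatou using $f^*(y)\ge y$ and $|\phi_n(x)|\le L|x-x_0|\in L^1(P)\cap L^1(Q)$. The minimizer $\gamma^{L,*}$ in \cref{eq:inf_conv} is unique because $\gamma\mapsto D_f(\gamma\|Q)$ is strictly convex (strict convexity of $f$) while $W^{\Gamma_L}(P,\cdot)$ is convex; $\phi^{L,*}$ is then the Kantorovich potential of the transport from $P$ to $\gamma^{L,*}$, unique up to an additive constant on $\mathrm{supp}(P)\cup\mathrm{supp}(\gamma^{L,*})=\mathrm{supp}(P)\cup\mathrm{supp}(Q)$, and the Euler--Lagrange condition for $\gamma\mapsto W^{\Gamma_L}(P,\gamma)+D_f(\gamma\|Q)$ --- the potential must coincide with $f'(d\gamma^{L,*}/dQ)$ up to a constant, and $f$ strictly convex makes $f^*$ differentiable --- yields $d\gamma^{L,*}=(f^*)'(\phi^{L,*})\,dQ$, i.e.\ $d\gamma^{L,*}\propto e^{\phi^{L,*}}\,dQ$ for $f_{\mathrm{KL}}$; for the full verification I would defer to \cite{birrell2022f}.

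The main obstacle is the minimax exchange in part~2: $\mathbb{R}^d$ is non-compact and $f^*(\phi-\nu)$ need not be $Q$-integrable (e.g.\ heavy-tailed $Q$ with $f_{\mathrm{KL}}$), so a minimax/duality theorem does not apply off the shelf; I would establish the identity first on an exhaustion of $\mathbb{R}^d$ by compact sets and then pass to the limit, controlling tightness and the tails through the superlinearity of $f$. The secondary subtle point is the Euler--Lagrange identification $d\gamma^{L,*}\propto(f^*)'(\phi^{L,*})\,dQ$ in part~4, which genuinely couples the optimal-transport problem behind $W^{\Gamma_L}$ with the convex minimization behind $D_f$.
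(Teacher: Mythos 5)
The paper offers no proof of this statement: it appears in the supplementary background section explicitly as a summary of results imported from \cite{dupuis2022formulation,birrell2022f}, so there is no in-paper argument to compare yours against. On its own terms, your sketch reconstructs the strategy of those references: the easy half of the infimal-convolution formula by inserting $E_\gamma[\phi]$ and invoking Kantorovich--Rubinstein together with the Fenchel--Young bound, the hard half by a sup--inf exchange, parts 1 and 3 read off the two representations, and existence of $\phi^{L,*}$ by the same normalize-at-a-point-plus-Arzel\`a--Ascoli compactness argument that the paper does carry out explicitly in its proof of \Cref{thm:first variation}. That is the right outline.

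Two caveats. The steps you yourself flag --- the minimax exchange over the non-compact $\mathbb{R}^d$ with possibly non-$Q$-integrable $f^*(\phi-\nu)$, and the Euler--Lagrange identification $d\gamma^{L,*}\propto (f^*)'(\phi^{L,*})\,dQ$ --- are exactly where the substance of \cite{birrell2022f} lies, and your proposal does not carry them out; the exhaustion-by-compacts plan is plausible but would need care with tightness of near-optimal $\gamma$ and with how the Lipschitz test functions are extended off the compact sets. Since the paper itself defers to the reference for precisely these points, this is an acceptable stopping place, but the proposal is a roadmap rather than a self-contained proof. Separately, in part 1 the parenthetical ``each bracket is affine in $(P,Q)$'' justifies convexity but not lower semicontinuity: $Q\mapsto E_Q[f^*(\phi-\nu)]$ is lower (not upper) semicontinuous when $f^*(\phi-\nu)$ is unbounded above, so the term-by-term argument yields lsc of the supremum only after restricting to bounded $\phi$; otherwise one must fall back on your compactness argument through the infimal convolution, which is also what rescues the implication $D_f^{\Gamma_L}(P\|Q)=0\Rightarrow P=Q$ if attainment of the minimizing $\gamma$ has not yet been established at that stage.
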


\noindent
For connections with Sinkhorn regularizations \cite{Cuturi:stoch_optim_OT:2016} we refer to \cite{birrell2022structure}.
Another useful result established in \cite{birrell2022f} is a new type of data processing inequality. For probability kernel $K(x,dy)$
we denote $K_\#P(dy)= \int K(x,dy)P(dx)$ and $Kf(x)=\int f(y)K(x,dy)$. We have
\begin{theorem}[Data Processing Inequality]\label{thm: data processing inequality} For probability kernel $K(x,dy)$ we have 
\begin{equation}
D_{f}^{\Gamma}( K_\# P\| K_\# Q) \le  D_{f}^{K(\Gamma)}( P\| Q)   
\end{equation}
\end{theorem}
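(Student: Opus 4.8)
The plan is to argue directly from the variational representation \eqref{def: f, gamma div 2}, pushing test functions through the kernel. Fix an arbitrary $\phi \in \Gamma$. By the definition of the pushforward and Fubini--Tonelli, the linear term rewrites as $E_{K_\# P}[\phi] = \int\!\!\int \phi(y)\,K(x,dy)\,P(dx) = E_P[K\phi]$, and for each $\nu \in \mathbb{R}$ the $f^*$ term rewrites as $E_{K_\# Q}[f^*(\phi-\nu)] = E_Q\big[K\big(f^*(\phi-\nu)\big)\big]$. So the entire objective functional of $D_{f}^{\Gamma}(K_\# P\|K_\# Q)$ evaluated at $\phi$ gets transported back to the spaces $P,Q$ live on, at the cost of replacing $\phi$ by $K\phi$ and $f^*(\phi-\nu)$ by $K(f^*(\phi-\nu))$.

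The key step is Jensen's inequality applied kernel-wise: since $f^*$ is convex (being a Legendre--Fenchel transform) and $K(x,\cdot)$ is a probability measure for each $x$,
\[
K\big(f^*(\phi-\nu)\big)(x) = \int f^*\big(\phi(y)-\nu\big)\,K(x,dy) \;\ge\; f^*\!\left(\int \big(\phi(y)-\nu\big)\,K(x,dy)\right) = f^*\big((K\phi)(x)-\nu\big).
\]
Integrating against $Q$, adding $\nu$, and taking the infimum over $\nu$ gives $\inf_{\nu}\{\nu + E_{K_\# Q}[f^*(\phi-\nu)]\} \ge \inf_{\nu}\{\nu + E_Q[f^*(K\phi-\nu)]\}$. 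Combined with the identity for the linear term, the objective functional of $D_{f}^{\Gamma}(K_\# P\|K_\# Q)$ evaluated at $\phi$ is bounded above by the objective functional (now with function class $K(\Gamma)$) of $D_{f}^{K(\Gamma)}(P\|Q)$ evaluated at $K\phi$; since $K\phi \in K(\Gamma)$ by definition, this is $\le D_{f}^{K(\Gamma)}(P\|Q)$. Taking the supremum over $\phi \in \Gamma$ yields the claim.

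The main obstacle is not conceptual but a matter of integrability and measurability bookkeeping: one should check that the variational representation \eqref{def: f, gamma div 2} is applicable to $K_\# P$ and $K_\# Q$ (e.g. that the moment conditions needed for it — membership in $\mathcal{P}_1(\mathbb{R}^d)$ — are inherited through $K$, or else argue at the level of the supremum regardless), that $K\phi$ is measurable and $x\mapsto K(f^*(\phi-\nu))(x)$ is quasi-integrable so the order of integration may be exchanged and the Jensen estimate is legitimate (here $f^*(s)\ge s$ together with the a priori bound $|\phi(x)|\le L|x|+c$ for $\phi\in\Gamma_L$ controls the negative part), and that both sides are permitted to equal $+\infty$. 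I would also note that no Lipschitz structure of $K(\Gamma)$ is used in the proof: the right-hand side is read literally with the image class $K(\Gamma)=\{K\phi:\phi\in\Gamma\}$, and the specialization to, say, a deterministic $a_{\mathcal{D}}$-Lipschitz map with $K(\Gamma_L)\subseteq\Gamma_{a_{\mathcal{D}}L}=a_{\mathcal{D}}\Gamma_L$ happens only at the application stage (\Cref{thm:reconstructed_variable_converges_main}).
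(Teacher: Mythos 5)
Your proof is correct. The paper does not actually reproduce a proof of this theorem---it cites \cite{birrell2022f} and remarks only that the argument is ``similar to'' the autoencoder bound---but your route (rewrite $E_{K_\# P}[\phi]=E_P[K\phi]$ and $E_{K_\# Q}[f^*(\phi-\nu)]=E_Q[K(f^*(\phi-\nu))]$ via Fubini, apply Jensen kernel-wise to the convex $f^*$ to get $K(f^*(\phi-\nu))\ge f^*(K\phi-\nu)$, then take the infimum over $\nu$ and the supremum over $\phi\in\Gamma$, using $K\phi\in K(\Gamma)$) is exactly the intended dual-representation argument, and your integrability remarks (using $f^*(s)\ge s$ and the linear growth of Lipschitz $\phi$ to control the negative part) correctly identify the only bookkeeping needed. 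Your closing observation that no Lipschitz structure is used until the specialization $K(\Gamma_L)\subseteq a_{\mathcal D}\Gamma_L$ in \cref{thm:reconstructed_variable_converges_main} matches how the paper deploys the result.
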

Note that this is a stronger form than the usual data processing inequality since $K(\Gamma)$ maybe (much smaller) than $\Gamma$. This inequality will be used to construct and assess GPA  in latent space in  \Cref{sec:example:batch:effects}, see \cref{thm:reconstructed_variable_converges_main}.
The proof of \Cref{thm: data processing inequality} is similar to \Cref{thm:reconstructed_variable_converges_main}.
 
\section{Proofs for \Cref{sec:L-reg:gradflow}}

\subsection{Proof of \Cref{lemma}}\label{subsec:appendix:lemma}
\begin{lemma}
Let  $f$ be superlinear 
and strictly convex and  $P,Q \in \mathcal{P}_1(\mathbb{R}^d)$. For $y\notin {\rm{supp}}(P) \cup {\rm{supp}}(Q)$, we define 
\begin{equation}
\label{eq:phi_extensions:appendix}
    \phi^{L,*}(y)=\sup_{x\in {\rm{supp}}(Q)}\left\{\phi^{L,*}(x)+ L|x-y|\right\}\, . 
\end{equation}
Then $\phi^{L,*}$ is Lipschitz continuous on $\mathbb{R}^d$  with Lipschitz constant $L$ 
and $\phi^{L,*}=\sup\{h(x): h\in\Gamma_L,\, h(y)=\phi^{L,*}(y), {\textrm{for every }} y\in{\rm{supp}}(Q)\}$.
\end{lemma}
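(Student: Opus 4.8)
The plan is to recognize the extension in \cref{eq:phi_extensions:appendix} as the classical McShane--Whitney $L$-Lipschitz extension of the restriction $g:=\phi^{L,*}|_{\mathrm{supp}(Q)}$, and to check three things: (a) the formula produces a genuine element of $\Gamma_L$ on all of $\mathbb{R}^d$; (b) this function is consistent with the value of the optimizer $\phi^{L,*}$ on $\mathrm{supp}(P)\cup\mathrm{supp}(Q)$; and (c) among all $h\in\Gamma_L$ agreeing with $g$ on $\mathrm{supp}(Q)$ it is the pointwise extremal one, which is exactly the asserted supremum representation.

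First I would check that the formula is well defined (finite) and $L$-Lipschitz. For fixed $y$, the quantity $\phi^{L,*}(x)+L|x-y|$ stays bounded as $x$ ranges over $\mathrm{supp}(Q)$: fixing a base point $x_0\in\mathrm{supp}(Q)$, the $L$-Lipschitz bound on $g$ gives $|\phi^{L,*}(x)-\phi^{L,*}(x_0)|\le L|x-x_0|$, which together with $|x-x_0|-|x_0-y|\le|x-y|\le|x-x_0|+|x_0-y|$ controls the expression (on a bounded support this is immediate). The Lipschitz estimate then follows from the triangle inequality $|x-y|\le|x-y'|+|y-y'|$, which gives $\phi^{L,*}(x)+L|x-y|\le\bigl(\phi^{L,*}(x)+L|x-y'|\bigr)+L|y-y'|$ for every $x$; passing to the extremum over $x\in\mathrm{supp}(Q)$ and exchanging the roles of $y$ and $y'$ yields $|\phi^{L,*}(y)-\phi^{L,*}(y')|\le L|y-y'|$. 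I would also verify the extension property --- that the formula reproduces $g$ on $\mathrm{supp}(Q)$ --- which is immediate from the $L$-Lipschitz continuity of $g$: the term $x=y$ contributes $\phi^{L,*}(y)$, and the Lipschitz bound places every other term on the appropriate side of $\phi^{L,*}(y)$, so no new value is created.

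Next I would establish the supremum characterization together with the consistency on $\mathrm{supp}(P)\cup\mathrm{supp}(Q)$. If $h\in\Gamma_L$ with $h=g$ on $\mathrm{supp}(Q)$, then for each $x\in\mathrm{supp}(Q)$ the $L$-Lipschitz continuity of $h$ bounds $h(y)$ in terms of $\phi^{L,*}(x)+L|x-y|$, so $h$ is dominated by the extension built from the formula; since that extension is itself such an $h$, it equals $\psi:=\sup\{h(y):h\in\Gamma_L,\,h|_{\mathrm{supp}(Q)}=g\}$. Agreement of $\psi$ with the optimizer on $\mathrm{supp}(Q)$ is covered by the step above. For agreement on $\mathrm{supp}(P)$, I would exploit that in the variational problem \cref{eq: discriminators:sec2} the term $\inf_{\nu\in\mathbb{R}}\{\nu+E_Q[f^*(\phi-\nu)]\}$ depends on $\phi$ only through $\phi|_{\mathrm{supp}(Q)}$, whereas $\phi\mapsto E_P[\phi]$ is strictly monotone on sets of positive $P$-mass. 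Any $L$-Lipschitz extension of the optimizer is a competitor in the supremum defining $\psi$, hence $\phi^{L,*}\le\psi$ pointwise on $\mathrm{supp}(P)$; if this inequality were strict on a set of positive $P$-mass, then $\psi$ would give the same second term but a strictly larger $E_P[\cdot]$, contradicting the maximality of $\phi^{L,*}$. Thus $\phi^{L,*}=\psi$ $P$-a.s.\ on $\mathrm{supp}(P)$, hence everywhere on $\mathrm{supp}(P)$ by continuity of both functions and the definition of support. Combining, the function equal to the optimizer on $\mathrm{supp}(P)\cup\mathrm{supp}(Q)$ and to the formula elsewhere coincides globally with $\psi\in\Gamma_L$, which is the claimed object.

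I expect the main obstacle to be this last point --- the saturation of the Lipschitz constraint on $\mathrm{supp}(P)$, i.e.\ that the optimizer already coincides there with the maximal $L$-Lipschitz extension of its own restriction to $\mathrm{supp}(Q)$. It relies on existence of the optimizer (\Cref{thm:fgdivergence_properties:appendix}), on the separation of the two terms in the objective, and on strict monotonicity of $E_P[\cdot]$; one also has to be a little careful upgrading ``$P$-a.s.\ equality on $\mathrm{supp}(P)$'' to genuine pointwise equality, which is where continuity and the definition of support come in. The remaining ingredients --- finiteness of the extremum, the Lipschitz estimate, and the elementary fact that the pointwise supremum of the admissible competitors is the McShane--Whitney extension --- are routine.
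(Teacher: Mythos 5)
Your proposal is correct, and its core coincides with the paper's own argument: read the displayed formula as the McShane--Whitney extension of $\phi^{L,*}|_{\mathrm{supp}(Q)}$, get the global Lipschitz bound from the triangle inequality, dominate every competitor $h\in\Gamma_L$ that agrees with $\phi^{L,*}$ on $\mathrm{supp}(Q)$ via $h(y)\le \phi^{L,*}(x)+L|x-y|$ for each $x\in\mathrm{supp}(Q)$, and note that the extension is itself a competitor, so it realizes the pointwise supremum. Two remarks. First, as your use of the word ``extremum'' suggests you noticed, the formula must be read with an infimum: $\inf_{x\in\mathrm{supp}(Q)}\{\phi^{L,*}(x)+L|x-y|\}$ is the maximal $L$-Lipschitz extension and is exactly what the paper's proof manipulates; with the printed supremum the expression can be $+\infty$ for unbounded $\mathrm{supp}(Q)$ and in any case strictly exceeds $\phi^{L,*}$ at interior points of $\mathrm{supp}(Q)$, so neither the extension property nor the sup-characterization would hold. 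Second, the step you flag as the main obstacle---that the optimizer already saturates the maximal extension on $\mathrm{supp}(P)\setminus\mathrm{supp}(Q)$, where the lemma does \emph{not} redefine it---is genuinely needed (otherwise the glued function fails to be $L$-Lipschitz across $\mathrm{supp}(P)$, let alone maximal there), and the paper's one-line proof passes over it silently, simply writing $\inf_{y\in\mathrm{supp}(Q)}\{\phi^{L,*}(y)+L\|x-y\|\}=\phi^{L,*}(x)$ for all $x\notin\mathrm{supp}(Q)$. Your argument for this point---the $Q$-term of the objective depends only on $\phi|_{\mathrm{supp}(Q)}$, $E_P[\cdot]$ is monotone, so a strict gap on a set of positive $P$-mass would contradict optimality, and $P$-a.s.\ equality upgrades to equality on $\mathrm{supp}(P)$ by continuity and density of full-measure sets in the support---is sound and fills a real gap. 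A slightly shorter route to the same conclusion is available from the uniqueness statement in \Cref{thm:fgdivergence_properties:appendix}: the maximal extension attains the same objective value, hence is also an optimizer, hence differs from $\phi^{L,*}$ by a constant on $\mathrm{supp}(P)\cup\mathrm{supp}(Q)$, and that constant is zero because the two agree on $\mathrm{supp}(Q)$.
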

\begin{proof}
The fact that $\phi^{L,*}$ is Lipschitz continuous on $\mathbb{R}^d$ is straightforward by using the triangle inequality. Moreover, since $h\in \Gamma_L$, we have that $h(x)\leq h(y)+L\|x-y\|$. This implies that for $y\in{\rm{supp}}(Q)$ and $x\notin{\rm{supp}}(Q)$, $h(x)\leq\inf_{y\in{\rm{supp}}(Q)}\{ h(y)+L\|x-y\|\}=\inf_{y\in{\rm{supp}}(Q)}\{\phi^{L,*}(y)+L\|x-y\|\}=\phi^{L,*}(x)$. Since $\phi^{L,*}(y)\in\Gamma_L$, this concludes the proof.
\end{proof}

\subsection{Proof of \cref{thm:dissipation}}
\label{subsec:appendix:dissipation}
\begin{theorem}[Lipschitz-regularized dissipation]
Along a trajectory of a smooth solution $\{P_t\}_{t\geq 0}$ of \cref{eq:transport:variational:pde:sec2} with source probability distribution $P_0=P$ we have the following rate of decay identity: 
\begin{equation}\label{dissipation_appendix}
    \frac{d}{dt}D_{f}^{\Gamma_L}(P_t\|Q)=-I_f^{\Gamma_L}(P_t\|Q) \le 0
  \end{equation}
  where we define the Lipschitz-regularized Fisher Information as 
  \begin{equation}\label{eq:fisher_info_appendix}
   I_f^{\Gamma_L}(P_t\|Q)= E_{P_t}\left[|\nabla \phi^{L,*}|^2 \right] \,.
    \end{equation}
Consequently, for any $T\geq 0$, we have
$D_{f}^{\Gamma_L}(P_T\|Q) = D_{f}^{\Gamma_L}(P\|Q)-\int_{0}^{T} I_f^{\Gamma_L}(P_s\|Q)ds\,$. 
\end{theorem}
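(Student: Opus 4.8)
The plan is to combine the first variation formula of \Cref{thm:first variation} with the transport structure in \cref{eq:transport:variational:pde:sec2} and a single integration by parts. Since $\{P_t\}$ is assumed smooth, the curve $t\mapsto P_t$ is differentiable in $\mathcal{P}_1(\mathbb{R}^d)$ with time derivative $\partial_t P_t$, a signed measure of total mass zero; writing $\partial_t P_t$ in its Jordan decomposition, the smoothness (in particular positivity of the density of $P_t$ where the negative part is supported) ensures that the negative part is absolutely continuous with respect to $P_t$, so the hypotheses of \Cref{thm:first variation} are satisfied in the direction $\rho=\partial_t P_t$. The chain rule along the curve, together with \cref{eq: first variation}, then gives
\begin{equation}
\frac{d}{dt}D_{f}^{\Gamma_L}(P_t\|Q)=\int \frac{\delta D_{f}^{\Gamma_L}(P_t\|Q)}{\delta P}\, d(\partial_t P_t)=\int \phi_t^{L,*}\, d(\partial_t P_t)\,.
\end{equation}
The point already absorbed into \Cref{thm:first variation} is that, although $\phi_t^{L,*}$ varies with $t$, no extra term appears: the derivative of the supremum in \cref{eq:fgdivergence:dual} is computed by freezing $\phi$ at the optimizer $\phi_t^{L,*}$, an envelope/Danskin phenomenon --- here for the constrained maximization over $\Gamma_L$ with $Q$ fixed --- which is precisely the content of \cref{eq: Gateaux derivative}. (Alternatively one could redo this envelope argument directly on the dual representation, but \Cref{thm:first variation} makes it unnecessary.)

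Next I would substitute the transport equation $\partial_t P_t=-{\rm div}(P_t v_t^L)={\rm div}(P_t\nabla\phi_t^{L,*})$ from \cref{eq:transport:variational:pde:sec2} and integrate by parts. Under the assumed regularity, so that $\phi_t^{L,*}$ is $C^1$ with $\nabla\phi_t^{L,*}\in L^2(P_t)$ and no boundary contributions arise at infinity,
\begin{equation}
\int \phi_t^{L,*}\, d(\partial_t P_t)=\int \phi_t^{L,*}\,{\rm div}\big(P_t\nabla\phi_t^{L,*}\big)=-\int \nabla\phi_t^{L,*}\cdot\nabla\phi_t^{L,*}\, dP_t=-E_{P_t}\big[|\nabla\phi_t^{L,*}|^2\big]\,,
\end{equation}
which equals $-I_f^{\Gamma_L}(P_t\|Q)$ by \cref{eq:fisher_info} and is manifestly $\le 0$. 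This establishes \cref{dissipation}. The final statement then follows by integrating this identity in time from $0$ to $T$, using continuity of $t\mapsto D_{f}^{\Gamma_L}(P_t\|Q)$, to obtain $D_{f}^{\Gamma_L}(P_T\|Q)-D_{f}^{\Gamma_L}(P\|Q)=-\int_0^T I_f^{\Gamma_L}(P_s\|Q)\,ds$.

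The hardest part will be the rigorous justification of the chain-rule step, i.e. showing that differentiating the composition $t\mapsto D_{f}^{\Gamma_L}(P_t\|Q)$ reduces to pairing the first variation $\phi_t^{L,*}$ against $\partial_t P_t$. This requires upgrading from the one-parameter linear perturbations $P+\epsilon\rho$ treated in \Cref{thm:first variation} to a genuine curve: one writes $P_{t+h}=P_t+h\,\partial_t P_t+o(h)$ and must control the remainder uniformly in $t$, while also checking the mutual-singularity and absolute-continuity bookkeeping for $\partial_t P_t$ along the whole trajectory. This is exactly where the qualitative ``smooth solution / sufficient regularity'' hypotheses of the statement enter, consistent with the paper's stated choice not to develop the full PDE theory here. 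A secondary technical point is ensuring $\nabla\phi_t^{L,*}$ is defined $P_t$-a.e. and square integrable so that $I_f^{\Gamma_L}(P_t\|Q)$ and the integration by parts are meaningful; this too is covered by the regularity assumption.
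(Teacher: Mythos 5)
Your argument is correct and follows essentially the same route as the paper's proof: apply the chain rule with the first variation formula of \Cref{thm:first variation} to get $\langle \phi_t^{L,*}, \partial_t P_t\rangle$, substitute the transport equation, and integrate by parts (divergence theorem with vanishing boundary terms) to obtain $-E_{P_t}[|\nabla\phi_t^{L,*}|^2]$. Your additional remarks on the envelope/Danskin step and the rigorous justification of the chain rule along a curve are sensible elaborations of what the paper subsumes under the ``assuming sufficient smoothness'' hypothesis, but they do not change the argument.
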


\begin{proof} We obtain \cref{dissipation_appendix} by  the next  calculation, assuming sufficient smoothness. We use the divergence theorem together with vanishing  boundary conditions, as well as \cref{eq:gradflow:variational:discriminator} and \cref{eq:transport:variational:pde}.
\begin{eqnarray}
\frac{d}{dt}D_{f}^{\Gamma_L}(P_t\|Q)&=&\left\langle\frac{\delta  D_{f}^{\Gamma_L}(P\|Q)}{\delta P}(P),\frac{\partial P_t}{\partial t}\right\rangle=\left\langle \phi_t^{L,*},{\rm div} \left(P_t\nabla\phi_t^{L,*}\right)\right\rangle\nonumber\\
&=&-\int|\nabla\phi_t^{L,*}|^2dP_t=-E_{P_t}\left[|\nabla \phi_t^{L,*}|^2 \right]\, .
\end{eqnarray}

\end{proof}

\section{Computational details}
\label{sec:appendix:experimentalsetting}

\subsection{Neural network architectures and computational resources}
\label{subsec:appendix:nn:architecture:comp:resources}

\paragraph{Neural network architectures} Discriminators $\phi:\mathbb{R}^d \rightarrow \mathbb{R}$'s are implemented using neural networks. We implemented FNN discriminators for general $\mathbb{R}^d$ problems and CNN discriminator especially for 2D image generation problems. For both networks, we impose the Lipschitz constraint on $\phi$ by spectral normalization (SN), where the weight matrix in each layer of the $D$ layers in total has spectral norm $\|W^l\|_2=L^{1/D}$.
See \cref{table:nn architecture} for details. Exact numbers of parameters differ in each example and can be found in the code repository \url{https://github.com/HyeminGu/Lipschitz_regularized_generative_particles_algorithm} v0.2.0. 

\begin{table}[ht]
\begin{minipage}{.49\linewidth}
\medskip
\centering
\begin{tabular}{c}

\hline
FNN Discriminator \\
\hline
  $W^1 \in \mathbb{R}^{d \times \ell_1}$ with SN, $b^1 \in \mathbb{R}^{\ell_1}$ \\
  ReLU\\
  \hline
  $W^2 \in \mathbb{R}^{\ell_1 \times \ell_2}$ with SN, $b^2 \in \mathbb{R}^{\ell_2}$ \\
  ReLU\\
  \hline
  $W^3 \in \mathbb{R}^{\ell_2 \times \ell_3}$ with SN, $b^3 \in \mathbb{R}^{\ell_3}$ \\
  ReLU\\
  \hline
  $W^4 \in \mathbb{R}^{\ell_3 \times 1}$ with SN, $b^4 \in \mathbb{R}$ \\
  Linear\\
\hline
\end{tabular}
\subcaption{General problems with dimension $d$}
\label{table:fnn}
\end{minipage}
\begin{minipage}{.49\linewidth}
\centering
\begin{tabular}{c}

\hline
CNN Discriminator \\
\hline
  $5 \times 5$ Conv SN, $2 \times 2$ stride  ($1 \rightarrow ch_1$) \\
  leaky ReLU\\
  Dropout, rate 0.3 \\
  \hline
  $5 \times 5$ Conv SN, $2 \times 2$ stride ($ch_1 \rightarrow ch_2$)  \\
  leaky ReLU\\
  Dropout, rate 0.3 \\
  \hline
  $5 \times 5$ Conv SN, $2 \times 2$ stride ($ch_2 \rightarrow ch_3$)  \\
  leaky ReLU\\
  Dropout, rate 0.3 \\
  \hline
  Flatten with dimension $\ell_3$ \\
  \hline
  $W^4 \in \mathbb{R}^{\ell_3 \times d}$ with SN, $b^4 \in \mathbb{R}^{d}$ \\
  ReLU\\
  \hline
  $W^5 \in \mathbb{R}^{d \times 1}$ with SN, $b^5 \in \mathbb{R}$ \\
  Linear\\
\hline
\end{tabular}\subcaption{2D image data (MNIST)}
\medskip
\label{table:cnn}
\end{minipage}

\caption{Neural network architectures of the discriminator $\phi: \mathbb{R}^d \rightarrow \mathbb{R}$}
\label{table:nn architecture}
\end{table}

\paragraph{Computational resources} 
MNIST image generation example is computed in the environment: \texttt{tensorflow-gpu=2.7.0} with \texttt{Intel(R) Xeon(R) CPU E5-2620 v3 @ 2.40GHz 4 cores} and \texttt{Tesla M40 24GB}. Other examples are computed in the environment: \texttt{Apple M2 8 cores} and \texttt{Apple M2 24 GB - Metal 3}.

\subsection{Additional features to improve the accuracy}
\label{subsec:settings:improve:accuracy}
\paragraph{Higher order explicit ODE solvers}

Besides the  forward Euler scheme considered in \cref{eq:scheme:1:sec3} and \cref{eq:GPA:sec3}, we can also take advantage of higher order schemes for differential equations such as 
Heun's method 
\begin{align}\label{eq: heun}
\begin{split}
    \tilde{y}_{t+1} & = y_{t} - \Delta t \nabla \phi_t (y_t) \\
    y_{t+1} & = y_{t} - \frac{\Delta t}{2} (\phi_t (y_t) + \phi_{t+1} (\tilde{y}_{t+1}))
\end{split}
\end{align}
and RK4. As we demonstrate in an example in \cref{fig:12D - 2D gaussian mixtures}, they can substantially  improve the accuracy of solution. 
In this example  GPA  learns a 2D Mixture of Gaussians embedded in 12D. We consider 600 particles from the 12D Gaussian ball $P_0 = N(8* \mathbf{1}_{12}, 0.5^2 I_{12})$, which are   transported via GPA to the target distribution.  In this example, forward Euler produces an oscillatory pattern in the orthogonal 10D subspace while \cref{eq: heun} produces a convergent approximation. 

\paragraph{Smooth activation functions} 
 Smoother discriminators $\phi_n^{L,*}$ allow us to take  smaller time step sizes $\Delta t$ in \cref{eq:GPA:sec3} so that the algorithm can slow down and eventually stop, avoiding oscillations around the target.  We build smoother discriminators  by replacing the standard $ReLU$ activation function in NNs by a smoother one, namely  $ReLU_s^\epsilon \in C^3$ with $0 \leq (ReLU_s^\epsilon)'(x)\leq 1$ given by \cref{eq:RELU_s}. This activation function is compatible with spectral normalization technique for imposing Lipschitz continuity to a NN and  is given by
 
\begin{minipage}{.55\linewidth}
    \begin{align}\label{eq:RELU_s}
\begin{split}
    & ReLU_s^\epsilon(x), ~~ \epsilon=2^{-n} \\
    = & \begin{cases}
    0,  & x \leq 0 \\
    \frac{x^2}{4 \epsilon} + \frac{\epsilon}{2 \pi^2} (\cos{(\frac{\pi}{\epsilon}x)-1)}, & 0 < x < 2 \epsilon \\
    x- \epsilon, & x \geq 2 \epsilon.
    \end{cases}
\end{split}
\end{align} 
\end{minipage}
\begin{minipage}{.4\linewidth}
    \centering
    \includegraphics[width=.85\textwidth]{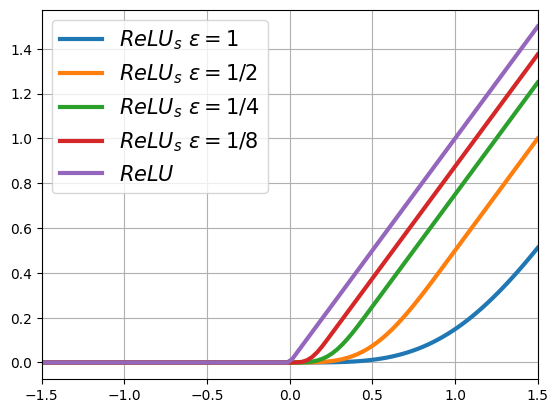}
\end{minipage}

\begin{figure}[h]
     \centering
     \begin{subfigure}{.45\linewidth}
   \centering
   \includegraphics[width=.9\textwidth]{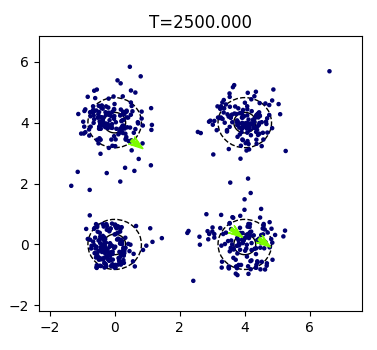}
   \caption{Forward Euler, $\Delta t=0.25$,  Last snapshot in the 2D subspace}
 \end{subfigure}%
 \begin{subfigure}{.5\linewidth}
   \centering
   \includegraphics[width=0.9\textwidth]{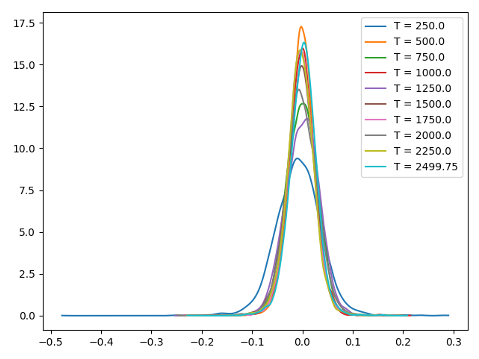}
   \caption{Forward Euler, $\Delta t=0.25$, Evaluation at orthogonal axes}
 \end{subfigure}
  
  \begin{subfigure}{.45\linewidth}
   \centering
   \includegraphics[width=.9\textwidth]{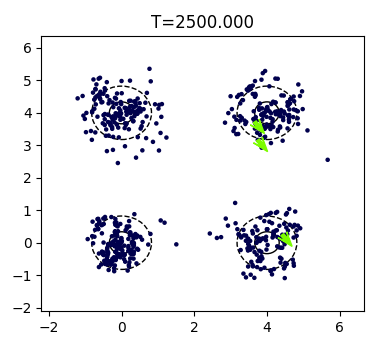}
   \caption{Heun, $\Delta t=0.25$, Last snapshot  in the 2D subspace}
 \end{subfigure}%
 \begin{subfigure}{.5\linewidth}
   \centering
   \includegraphics[width=.9\textwidth]{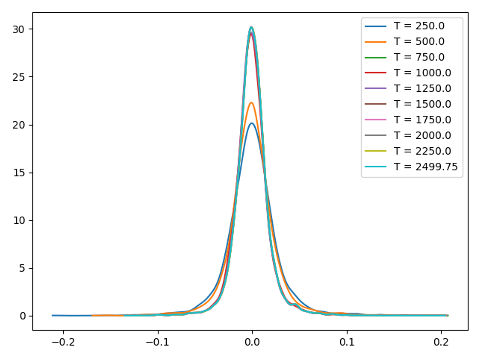}
   \caption{Heun, $\Delta t=0.25$, Evaluation at orthogonal axes}
 \end{subfigure}
 \caption{\textbf{(2D Mixture of Gaussians embedded in 12D) Forward Euler and Heun  for $(f_\text{KL},\Gamma_1)$-GPA}. Both Forward Euler and Heun were able to capture the 4 wells in the 2D subspace but forward Euler shows oscillatory behavior (in time) in the orthogonal subspace while Heun shows convergent in the orthogonal subspace. 
 }
 \label{fig:12D - 2D gaussian mixtures}
 \end{figure} 

The smooth ReLU in \Cref{eq:RELU_s} has a superior compatibility with spectral normalization technique compared to other candidate functions such as Softplus, Exponential Linear Unit (ELU), Scaled Exponential Linear Unit (SELU) and Gaussian Error Linear Unit (GELU) since outputs of hidden layers are inclined to be concentrated near 0 after the weight normalization.  
Therefore, the threshold to discriminate the outputs should be assigned as 0 which can be attained by putting an activation function which passes the origin and has distinguishable gradients on the left $x<0$ and the right $x>0$ near 0. 

\section{GPA vs. GAN}
\label{sec:appendix:gpa_vs_gan}

GPA generates particles by iteratively  solving \cref{eq:GPA:sec3}.  The velocities of the  particles   are computed  by the evaluation of the gradient of   the discriminator  $\phi_n^{L,*}$, and updated at each time step $n$. 
This discriminator evaluation feature is shared with  GANs  \cite{goodfellow2014generative,Wasserstein:GAN,birrell2022f}. However in GANs the particle generation step is different and involves  also  learning a generator $g_{\theta} : \mathbb{R}^{d'} \rightarrow \mathbb{R}^{d}$ parametrized in turn by a second  NN with its own parameters $\theta$. 
For each time step $n$, GANs solve two optimization problems on $\theta$ and $\phi$. For instance, an 
$(f, \Gamma_L)$-based GAN, \cite{birrell2022f},  is the minmax problem
\begin{equation}
\begin{aligned}
\label{eq:gan_generator}
    & \inf_{\theta} \sup_{\phi} H_f[\phi; g_\theta (Z), X]\, , \quad \text{where the objective function is} \\
    & H_f[\phi; g_\theta (Z), X] = \frac{\sum_{i=1}^{M}\phi(g_{\theta} (Z^{(i)}))}{M}- \inf_{\nu \in \mathbb{R}}\left\{ \nu + \frac{\sum_{i=1}^{N}f^*(\phi(X^{(i)})-\nu)}{N}\right\}\, .
  \end{aligned}
\end{equation}
Here $Z^{(i)}$ denote  random data  usually from the standard Gaussian in $\mathbb{R}^{d'}$ and $X^{(i)}$ correspond to the  given training data set. Different GANs \cite{goodfellow2014generative,Wasserstein:GAN,f-GAN,GP-WGAN, LS_GAN} have their own objective functionals  $H_f[\phi; g_\theta (Z), X]$, however $(f, \Gamma_L)$-based GANs provide a common, mathematically unifying framework \cite{birrell2022f}.
Once a GAN is trained, new samples can be reproduced instantly by evaluating the generator $g_{\theta_{\text{final}}}^*$ on random Gaussian samples $Z$. 
 GANs are  discriminator-generator models, while GPA are a discriminator-transport model
where the generator is replaced  by the transport mechanism, and  does not need to be learned, see \cref{algo:gpa}.
Since GPA does not  learn a generator,  instant generation is not allowed as in GAN. But GPA can excel in some tasks that GANs fail, see for example  \Cref{sec:example:scarceMNIST}.

\section{Supplementary experiments}
Here is the list of supplementary experiments/results to support the main text. 
\begin{itemize}
    \item \Cref{fig:2D student-t_additional}: (Gaussian to Student-t with $\nu=0.5$ in 2D) Snapshots and estimators of $(f, \Gamma_1)$-GPA introduced in \Cref{fig:2D student-t}
    \item \Cref{fig:mnist_overfitting}: (MNIST) Sample diversity for GPA obtained by $M \gg N$. See \Cref{sec:generalization-overfit}
    \item \Cref{fig:mnist_rich_nn}: (MNIST) A case study on the impact of complexity for neural network architecture. See \Cref{sec:generalization-overfit}
    \item \Cref{fig:mnist:si}: (MNIST) Sample diversity  of transported data in \Cref{fig:sub1}
    \item \Cref{fig:mnist:scalability}: (MNIST) The impact of increased sample sizes compared to \Cref{fig:mnist_different_method}
    \item \Cref{fig:standardized_all_pca}: (Gene expression data integration with GPA) Dimension reduction with PCA and choice of latent dimensions in \Cref{sec:example:batch:effects}
    \item \Cref{tab:sinkhorn}: (Gene expression data integration with GPA) Quantitative results of data integration in \Cref{fig:integration_gene_expression_data}
\end{itemize}

\begin{figure}[h]
    \centering
    
    \begin{subfigure}{\textwidth}
        \centering
        \includegraphics[width=\linewidth]{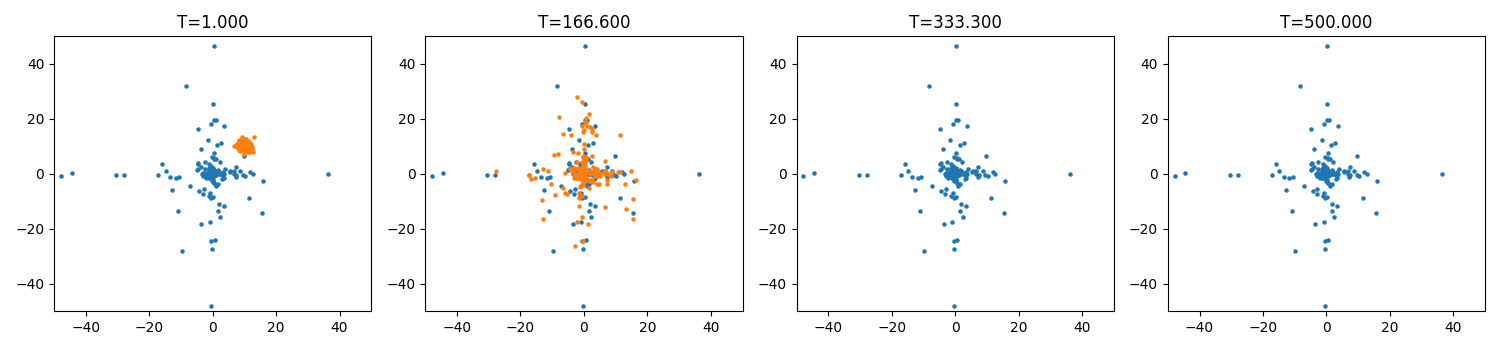}
        \caption{$f_\text{KL}$, $L=1$ collapses when the particle distribution become heavy-tailed}
        \label{subfig:heavytailed_kl}
    \end{subfigure}
    
    \begin{subfigure}{0.35\textwidth}
        \centering
        \includegraphics[width=\linewidth]{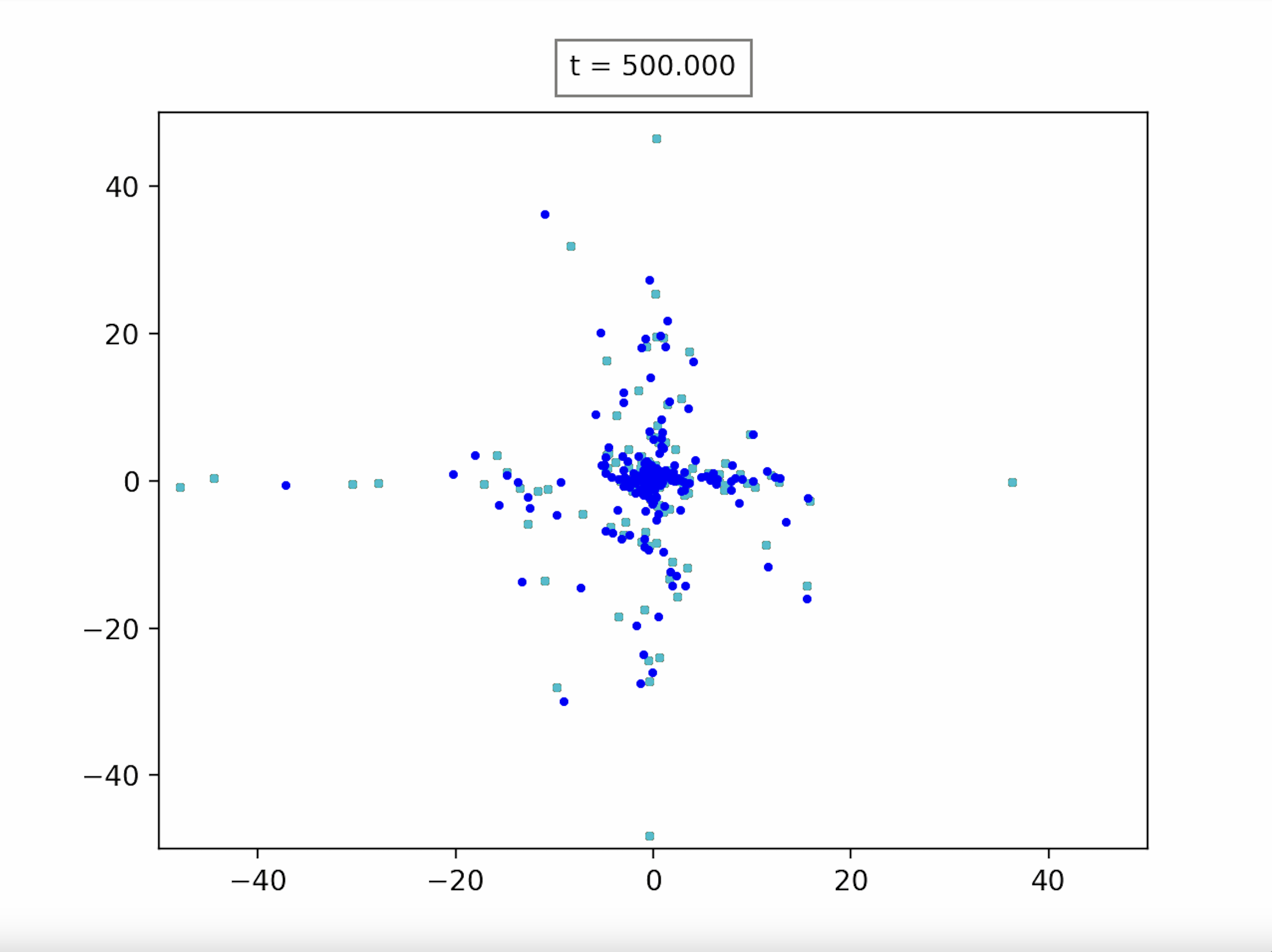}
        \caption{$f_\alpha$ with $\alpha=2$, $L=1$ at $t=500$}
    \end{subfigure}
    \begin{subfigure}{0.35\textwidth}
        \centering
        \includegraphics[width=\linewidth]{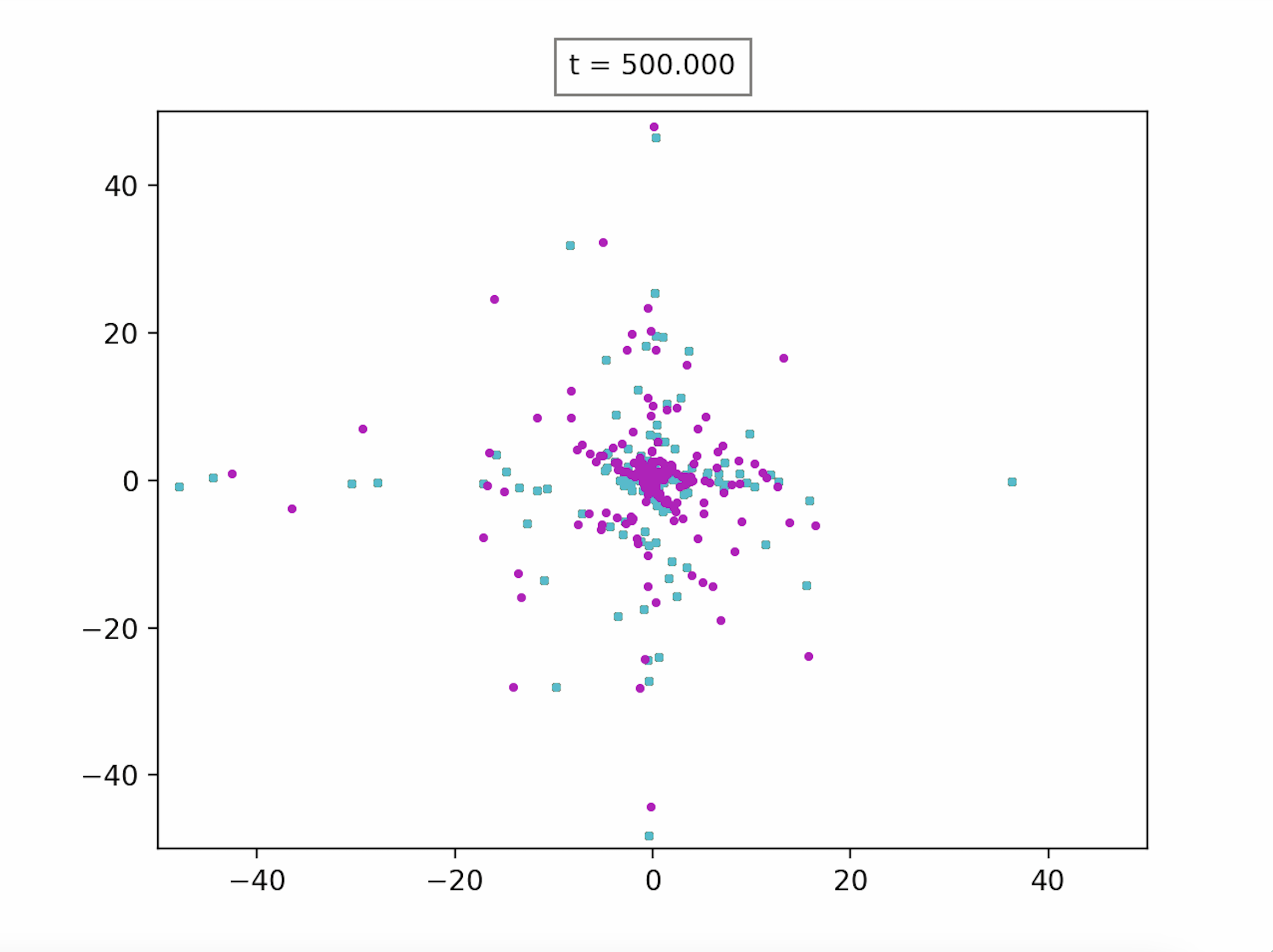}
        \caption{$f_\alpha$ with $\alpha=10$, $L=1$ at $t=500$}
    \end{subfigure}
    \begin{subfigure}{0.285\textwidth}
        \centering
        \includegraphics[width=\linewidth]{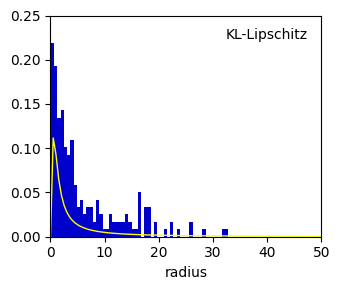}
        \caption{ Distribution of the radii of transported samples for $f_\text{KL}$, $L=1$ before the algorithm collapses}
    \end{subfigure}

 \begin{subfigure}{0.49\textwidth}
        \centering
        \includegraphics[width=\linewidth]{plots_final/different_fdivergences.png}
        \caption{Lipschtz regularized $f$-divergences}
    \end{subfigure}
    \begin{subfigure}{0.49\textwidth}
        \centering
        \includegraphics[width=\linewidth]{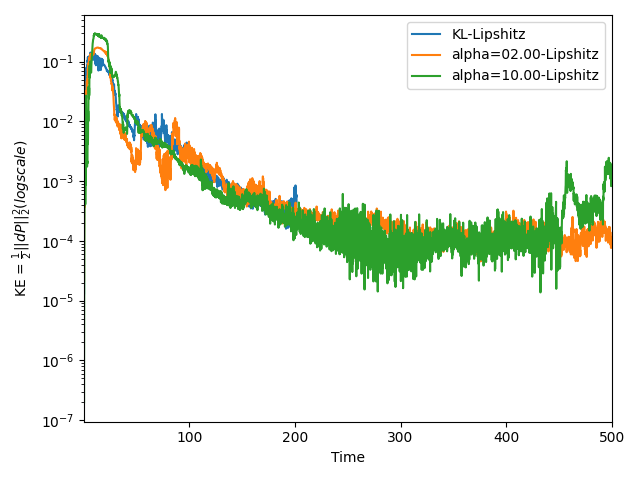}
        \caption{Kinetic energy of particles}
    \end{subfigure}
\caption{\textbf{(Gaussian to Student-t with $\nu=0.5$ in 2D) Snapshots and estimators of $(f, \Gamma_1)$-GPA introduced in \Cref{fig:2D student-t}.} 
{\bf (a)} Snapshots of $(f_\text{KL}, \Gamma_1)$-GPA at time points $t=1.0, 166.6, 333.0, 500.0$. The GPA using a KL divergence collapses at around $t=202$ as the function optimization step with $f_\text{KL}$ is numerically unstable on heavy-tailed data.
{\bf (b - c)} Snapshots of $(f_\alpha, \Gamma_1)$-GPA with $\alpha=2, 10$ at time point $t=500.0$. 
{\bf (d)} The radii of propagated particles from $(f_\text{KL}, \Gamma_1)$-GPA at $t=202$ compared to \Cref{fig:2D student-t} (b).  {\bf (e - f)} After $t > 300$, transportation speeds remain strictly positive but low, indicating that GPA continues to require significant time for transporting particles into the heavy tails.
}
\label{fig:2D student-t_additional}
\end{figure}

\begin{figure}[h]
    \begin{subfigure}{.6\linewidth}
    \includegraphics[width=\linewidth]{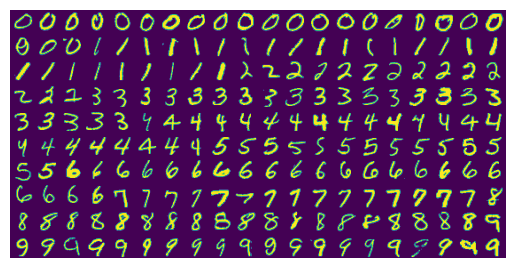}
    \caption{$N=200$ target samples from the true MNIST data set}
    \label{fig:mnist_target}
    \end{subfigure}
    
    \begin{subfigure}{.6\linewidth}
        \centering
    \includegraphics[width=\linewidth]{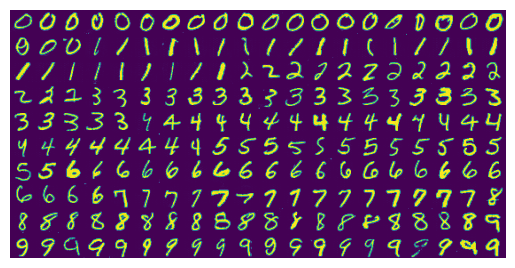}
    \caption{$M =200 (=N)$ transported samples from $(f_\text{KL}, \Gamma_1)$-GPA }
    \label{fig:mnist_M=N}
    \end{subfigure}
    \begin{subfigure}{.39\linewidth}
        \centering
        \includegraphics[width=\linewidth]{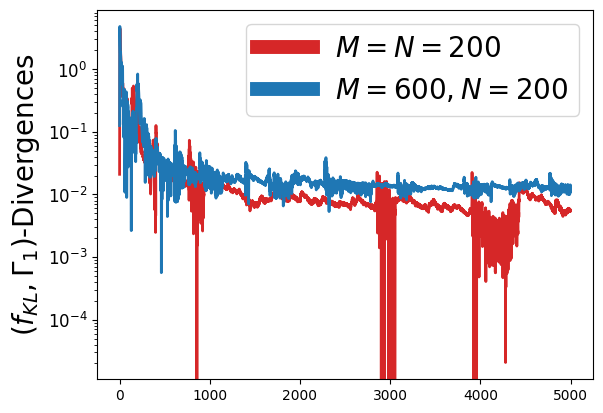}
    \caption{ $(f_\text{KL}, \Gamma_1)$-divergences}
    \label{fig:mnist_divergence}
    \end{subfigure}
    
    \begin{subfigure}{.6\linewidth}
        \centering
        \includegraphics[width=\linewidth]{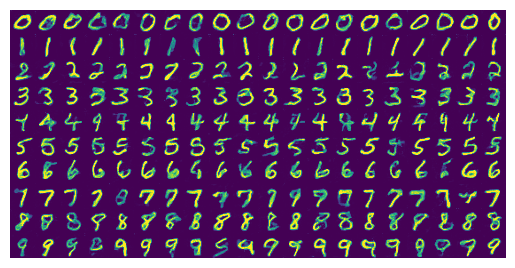}
    \caption{$M =600 (\gg N)$ transported samples from $(f_\text{KL}, \Gamma_1)$-GPA}
    \label{fig:mnist_M>>N}
    \end{subfigure}
   \begin{subfigure}{.39\linewidth}
        \centering
        \includegraphics[width=\linewidth]{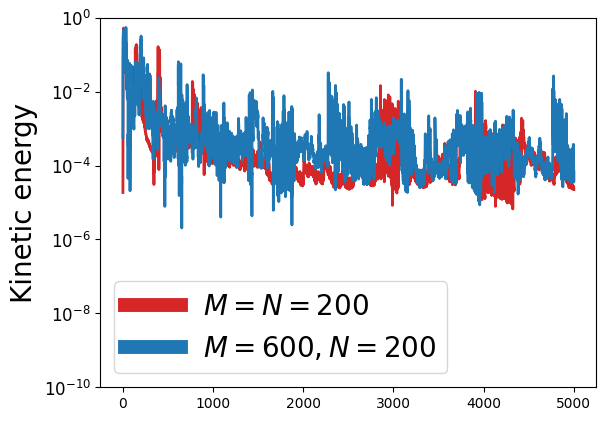}
    \caption{ Kinetic energy of particles}
    \label{fig:mnist_trajectory}
    \end{subfigure}
    \caption{{\textbf{(MNIST) Sample diversity for GPA obtained by $M \gg N$. See \Cref{sec:generalization-overfit}.} We present two  experiments which are conducted in  imbalanced sample sizes $M \gg N$ and equal sample size $M=N$ in \cref{algo:gpa}. {\bf (a)} $N=200$ target samples are fixed and $(f_\text{KL},\Gamma_1)$-GPA transported different numbers ($=M$) of particles toward the target. Generated samples in (b, d) can be compared one-by-one with the target. {\bf (b)} We note that  $(f_\text{KL},\Gamma_1)$-GPA is such an efficient and accurate transportation method  that when $M=N$,  it would typically transport the  source particles almost exactly on the target particles.  {\bf (d)} Generated samples when $M \gg N$ show diversity in shape and shades. {\bf (c, e)} The two estimators ($(f,\Gamma_L)$-divergence and kinetic energy of particles) can be used as diagnostics for the over-fitting behavior.}
    }

    \label{fig:mnist_overfitting}
\end{figure}

\begin{figure}[h]
    
    \begin{subfigure}{.49\linewidth}
        \centering
        \includegraphics[width=.8\linewidth]{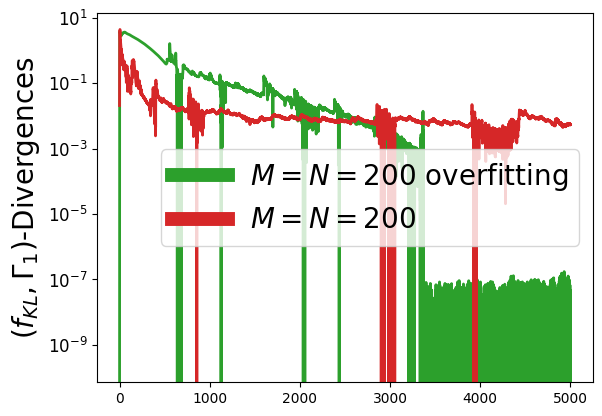}
    \caption{$(f_\text{KL}, \Gamma_1)$-divergences}
    \label{fig:mnist_trajectory_expon}
    \end{subfigure}
   \begin{subfigure}{.49\linewidth}
        \centering
        \includegraphics[width=.8\linewidth]{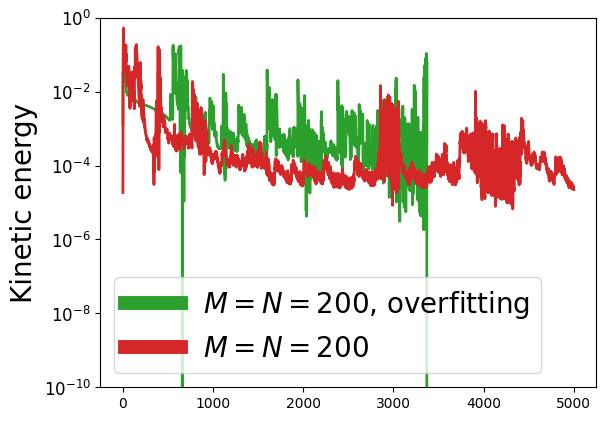}
    \caption{ Kinetic energy of particles}
    \end{subfigure}
    \caption{{\textbf{(MNIST) A case study on the impact of complexity for neural network architecture. See \Cref{sec:generalization-overfit}.} We present additional experiments on generating MNIST digits (as in the $M=N=200$ case in \Cref{fig:mnist_overfitting}.) where now the discriminator is parameterized by a more complex neural network architecture in \Cref{table:cnn} with  $ch_1=128$, $ch_2=256$, $ch_3=512$ numbers of filters on three hidden convolutional layers and is trained with increased learning rate $\delta=0.001$ in \Cref{algo:gpa} compared to our standard discriminator in \Cref{fig:mnist_overfitting}.
    Our standard discriminator takes  $ch_1=128$, $ch_2=128$, $ch_3=128$ and $\delta=0.0005$ respectively. 
    With the more complex neural network architecture and increased learning rate, 
    $(f_\text{KL}, \Gamma_1)$-GPA also overfits, 
    similarly to \Cref{fig:mnist_M=N}. Notably, with the  more complex NN setting, the overfitting/memorization is quite dramatic:  the $(f_\text{KL}, \Gamma_1)$-divergence decays exponentially fast, see the linear scaling in the green curve in \Cref{fig:mnist_trajectory_expon},  and eventually converges to (essentially) 0 which makes the particles literally stop, very much like the theoretical properties 
    of the gradient flow dynamics and the dissipation estimate in \Cref{thm:dissipation}.
    } }

    \label{fig:mnist_rich_nn}
\end{figure}

\begin{figure}
    \centering
    \begin{subfigure}{.64\textwidth}
        \centering
        \includegraphics[width=\linewidth]{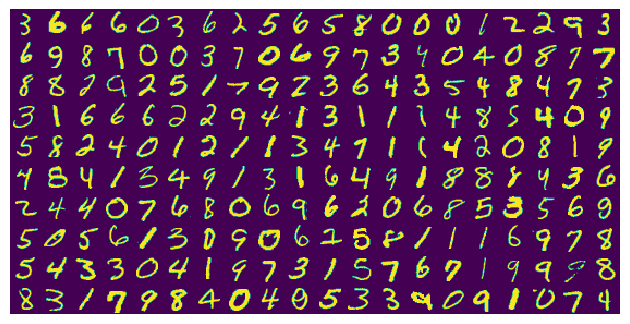}
        \caption{Entire target dataset with 200 samples}
    \end{subfigure}
    \begin{subfigure}{.32\linewidth}
   \centering
   \includegraphics[width=\textwidth]{plots_final/KL-Lipschitz_5.0000_uncond_0200_0600_00_trpt_n_gnrt-5000epochs-tiled_image.png}
   \caption{$M=600$ transported particles from $(f_\text{KL},\Gamma_5)$-GPA}
 \end{subfigure}%
 
    \begin{subfigure}{.49\textwidth}
      \centering  
      \includegraphics[width=.9\linewidth]{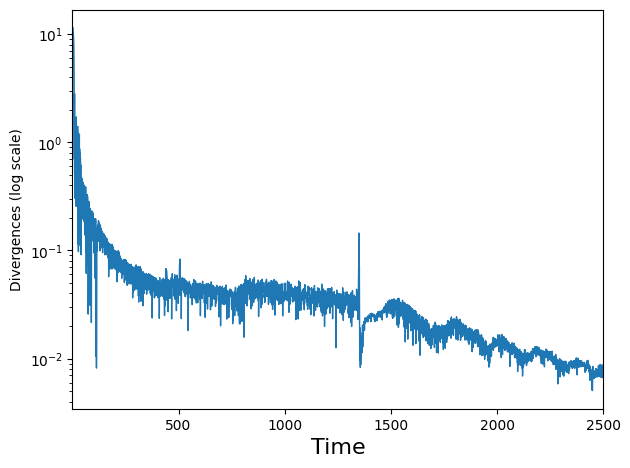}
        \caption{$(f_\text{KL}, \Gamma_5)$-divergence}
    \end{subfigure}
    \begin{subfigure}{.49\textwidth}
        \centering
        \includegraphics[width=.9\linewidth]{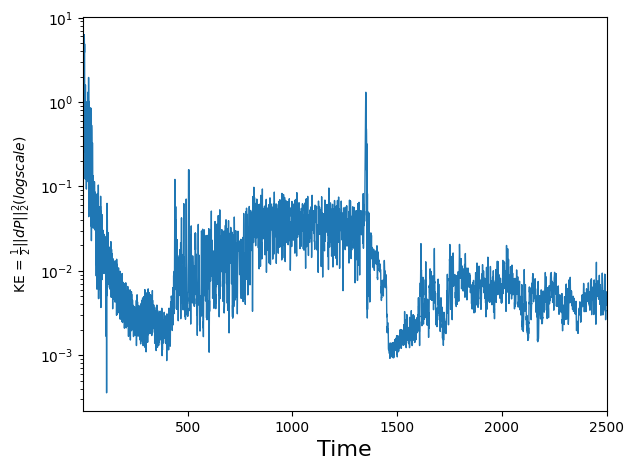}
        \caption{Kinetic energy of particles}
    \end{subfigure}
    \caption{ {\bf (MNIST) Sample diversity  of transported data in \Cref{fig:sub1}. }  For completeness in our study and presentation, we provide the entire target dataset to allow for a   one-by-one comparison.  In addition, we note that the divergence in (c) lies at the level of 1e-2, which is an evidence that the generated samples maintain diversity comparable to the original samples.  }
    \label{fig:mnist:si}
\end{figure}

\begin{figure}
    \centering
    \begin{subfigure}{.66\textwidth}
    \centering
\includegraphics[width=\linewidth]{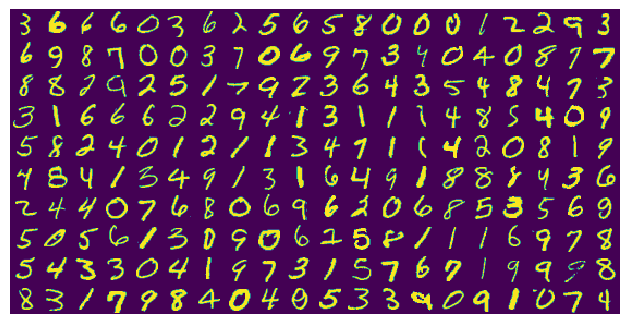}
        \caption{$N=1000$ Target data}
    \end{subfigure}
\begin{subfigure}{.33\textwidth}
    \centering
\includegraphics[width=\linewidth]{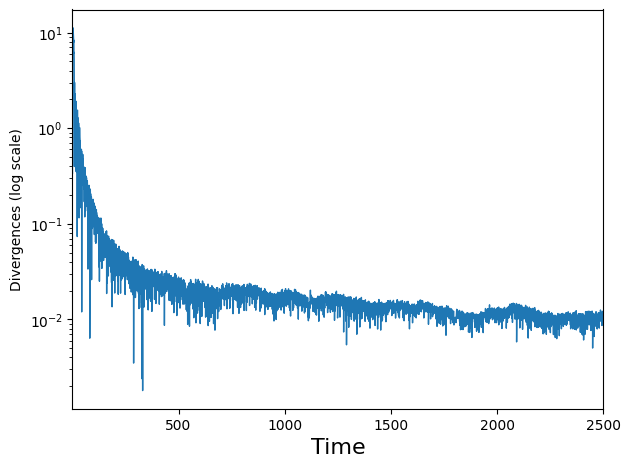}
        \caption{$(f_\text{KL},\Gamma_5)$-divergence}
    \end{subfigure}
    
    \begin{subfigure}{.32\textwidth}
    \centering
\includegraphics[width=\linewidth]{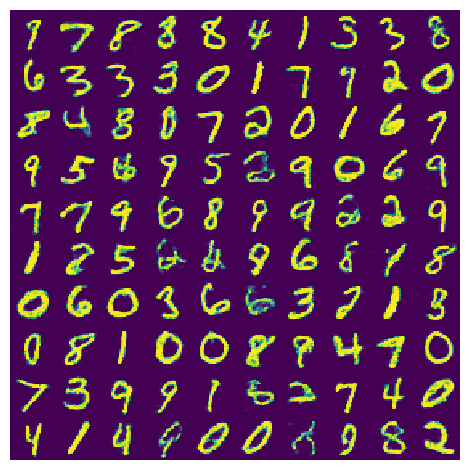}
        \caption{$M=3000$ transported particles from $(f_\text{KL},\Gamma_5)$-GPA}
    \end{subfigure}
    \begin{subfigure}{.32\textwidth}
    \centering
\includegraphics[width=\linewidth]{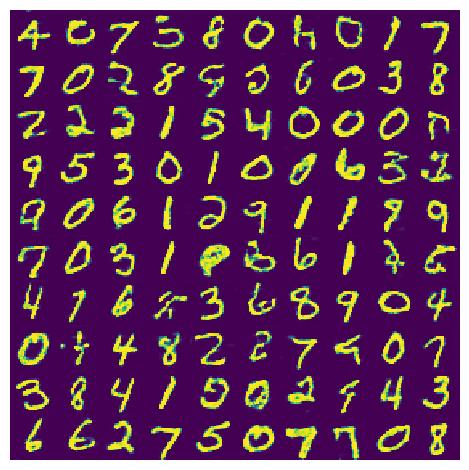}
        \caption{3000 generated particles that are simultaneously transported from $(f_\text{KL},\Gamma_5)$-GPA}
    \end{subfigure}
     \begin{subfigure}{.33\textwidth}
    \centering
\includegraphics[width=\linewidth]{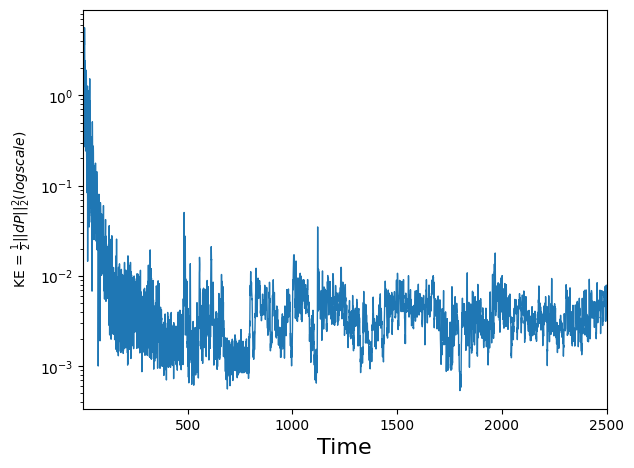}
        \caption{Kinetic energy of particles}
    \end{subfigure}

    \caption{\textbf{(MNIST) Impact of increased sample sizes compared to \Cref{fig:mnist_different_method}.} 
    We changed our data settings to $N = 1000$, $M = 3000$, while keeping other settings constant. The divergence in (b) remains at the level of $1 \times 10^{-2}$, ensuring that the generated samples maintain diversity comparable to the original samples. The computation time increased to 180 minutes, 3.77 times longer than the previous example in \Cref{fig:mnist_different_method}, but the resulting sample quality is similar or better.
    }
    \label{fig:mnist:scalability}
\end{figure}

\begin{figure}[h]
    \centering
    \captionsetup{format=plain}
\includegraphics[width=0.45\textwidth]{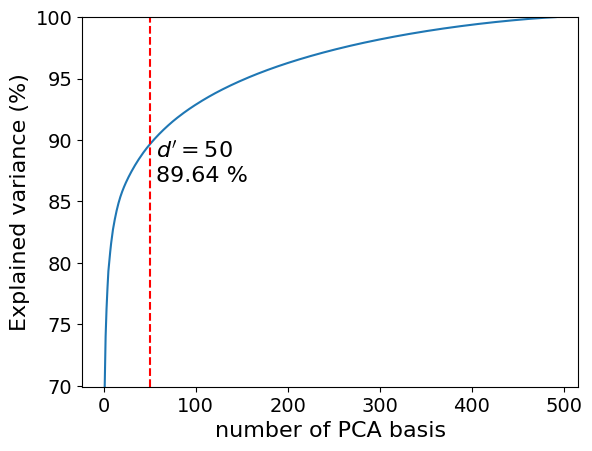}
    \caption{{\bf (Gene expression data integration with GPA) Dimension reduction with PCA and choice of latent dimensions in \Cref{sec:example:batch:effects}.}
    PCA computes an orthonormal basis via the singular value decomposition on the data covariance matrix. Each eigenvalue $\lambda_i$ is interpreted as the variance from the corresponding eigenvector $\mathbf{v_i}\in\mathbb R^d$. The latent features are obtained by projecting the data into the $d'$ eigenvectors with the highest eigenvalue values. We determine the dimension of the latent space, denoted by $d'$, according to the \textit{explained variance ratio} which is defined as $\sum_{i=1}^{d'} \lambda_i / \sum_{i=1}^d \lambda_i$.  \cref{fig:standardized_all_pca} shows the explained variance ratio as a function of $d'$. We choose to keep 89.64\%  of the explained variance which resulted in $d'=50$ dimensional latent space in  \Cref{fig:integration_gene_expression_data}. }
    \label{fig:standardized_all_pca}
\end{figure}

\clearpage

\begin{table}[h]
\centering
\begin{tabular}{l|m{6.5cm}|c|c}
 Metric & \centering Explanation & Positive  & Negative \\ 
\hline\hline
$W_2(\mathcal{E}_{\#}P_0, \mathcal{E}_{\#}Q)$ & Distance between source and target in latent space & 8.4320e+4 & 8.3507e+4 \\ \hline
$W_2(\mathcal{T}^n_{\#}\mathcal{E}_{\#}P_0,  \mathcal{E}_{\#}Q)$  & Distance between transported and target in latent space & 3.9939e+2 & 2.0675e+3 \\ \hline\hline
$W_2(\mathcal{D}_{\#} \mathcal{E}_{\#}Q, Q)$ & Reconstruction error of the target distribution (R) & 2.9768e+3 & 3.3805e+3 \\ \hline\hline
$W_2(P_0, Q)$  & Distance between source and target in original space (D) & 2.3883e+5 & 2.3845e+5 \\ \hline
$W_2(\mathcal{F}_{\#}P_0, Q)$ & Distance between mean \& std adjusted source and target in original space (S) & 9.4402e+3  & 1.1617e+4 \\ \hline
$W_2(\mathcal{D}_{\#} \mathcal{T}^n_{\#}\mathcal{E}_{\#}P_0, Q)$ & Distance between transported and target in original space (G)  & {3.5171e+3}  & {6.2247e+3} \\ \hline\hline
(R)/(D) &  The least relative distance that can be attained by latent GPA & 1.2464e-2  & 1.4176e-2 \\ \hline
(S)/(D) &  Relative distance attained by mean \& std adjusted source & {3.9526e-2}  & {4.8718e-2} \\ \hline
(G)/(D) &  Relative distance attained by latent GPA & \textbf{1.4726e-2}  & \textbf{2.6104e-2} \\ 
\hline
\end{tabular}

\caption{{\bf (Gene expression data integration with GPA) Quantitative results of data integration in \Cref{fig:integration_gene_expression_data}. } We compute the 2-Wasserstein distance between datasets in both the latent space ($d'=50$) and in the original space ($d=54,675$).
2-Wasserstein distance is approximated by Sinkhorn divergence \cite{genevay2017learning,geomloss}. For each class, original source $P_0$ and target $Q$ are determined by finite number of samples in \cref{tab:samplesize}. 
The reconstruction error due to PCA dimensionality reduction is below 1.5\% as quantified by the ratio (R)/(D). 
$\mathcal{T}^n$ is the composition of $(f_\text{KL},\Gamma_1)$-GPA transport maps defined in \cref{eq:scheme:1:sec3} for $n$ time steps while $\mathcal{F}$ denotes the baseline transformation which adjusts the mean \& std of the source to the mean \& std of the target distribution.
Dataset integration via latent GPA is approximately twice as effective compared to dataset integration via the baseline data  transformation, as can be readily observed in their respective ratios (G)/(D) and (S)/(D). Furthermore, we observe that the error as measured by the 2-Wasserstein distance in the latent space is higher for the negative class (row 2). This can be partially attributed to the smaller sample size of the source compared to the target. 
Interestingly, the relative ordering in the distance is also observed in the original space (row 6).
}
 \label{tab:sinkhorn}
\end{table}

\end{document}